\documentclass[lettersize,journal]{IEEEtran}
\usepackage{amsmath,amsfonts,amssymb}
\usepackage{array}
\usepackage{textcomp}
\usepackage{stfloats}
\usepackage{url}
\usepackage{amsthm}
\usepackage{verbatim}
\usepackage{graphicx}
\graphicspath{{./}}
\usepackage{multirow}

\usepackage{caption}
\usepackage{subcaption}
\usepackage{adjustbox}
\usepackage{nicefrac}   
\usepackage{float}      
\usepackage{siunitx}    
\usepackage{xcolor}     
\usepackage{booktabs}
\usepackage{microtype}  
\usepackage{hyperref}   
\usepackage[numbers]{natbib}

\newcommand{\threepanelwidth}{0.315\textwidth}
\newcommand{\mediumfloatwidth}{0.80\textwidth}
\newcommand{\smallresultwidth}{\mediumfloatwidth}
\newcommand{\widecompactwidth}{0.96\textwidth}

\newcommand{\R}{\mathbb{R}}
\newcommand{\Mcal}{\mathcal{M}}
\newcommand{\Pcal}{\mathcal{P}}
\newcommand{\calP}{\mathcal{P}}
\newcommand{\grad}{{\rm grad}}

\DeclareMathOperator*{\argmin}{\text{argmin}}

\newcommand{\iprod}[2]{\left \langle #1, #2 \right \rangle }

\newcommand{\be}{\begin{equation}}
\newcommand{\ee}{\end{equation}}

\usepackage[algo2e, vlined,ruled]{algorithm2e}

\usepackage{enumitem}

\newtheorem{thm}{Theorem}
\newtheorem{rmk}{Remark}
\newtheorem{lem}{Lemma}
\newtheorem{cor}{Corollary}

\newtheorem{assum}{Assumption}
\newtheorem{cond}{Condition}

\definecolor{mygreen}{RGB}{205, 222, 194} 

\begin{document}

\title{Retraction-Free Optimization over the Stiefel Manifold for the LoRA Fine-Tuning}

\author{Yuan Zhang, Jiang Hu, Zhijian Lai, Lin Lin, and Zaiwen Wen
    \thanks{Yuan Zhang is with Center for Data Science, Peking University, Beijing 100871, China
    Email:~zy1002@stu.pku.edu.cn.}
    \thanks{Jiang Hu is with Yau Mathematical Sciences Center, Tsinghua University, Beijing 100084, China
    Email:~jianghu@tsinghua.edu.cn.}
    \thanks{Zhijian Lai is with Beijing International Center for Mathematical Research, Peking University, Beijing 100871, China
    Email:~lai\_zhijian@pku.edu.cn.}
    \thanks{Lin Lin is with Department of Mathematics, University of California, Berkeley, CA, US. Email:~ linlin@math.berkeley.edu.}
    \thanks{Zaiwen Wen is with Beijing	International Center for Mathematical Research, Center for Machine Learning Research and Changsha Institute for Computing and Digital Economy, Peking University, Beijing, China
    Email:~wenzw@pku.edu.cn.}
}

\maketitle

\begin{abstract}
Optimization over the Stiefel manifold plays a significant role in various machine learning tasks. Existing methods either use the retraction operators, requiring costly orthonormalization for large-scale matrices, or employ landing methods that rely on careful step size selection and penalty parameter tuning. To address these challenges, we propose a retraction-free and penalty parameter-free algorithm that directly lands on the manifold. By leveraging the strongly-convex-like property of the quadratic penalty function and the proximal smoothness of the Stiefel manifold, we establish global convergence guarantees with the best-known iteration complexities under both constant and diminishing step sizes. 
Then, we reformulate the low-rank adaptation (LoRA) fine-tuning problem for large language models as a manifold optimization problem, introducing  Manifold-LoRA for geometry-accelerated adaptation. This approach employs the proposed landing technique and a carefully designed step size strategy to accelerate the training process. Numerical experiments on benchmark datasets demonstrate the efficiency and strong downstream performance of the proposed method.
\end{abstract}

\begin{IEEEkeywords}
Manifold optimization, retraction-free, penalty-parameter-free, LoRA
\end{IEEEkeywords}

\section{Introduction}
Optimization over the Stiefel manifold has attracted considerable attention in the context of machine learning, e.g., RNN \cite{arjovsky2016unitary}, batch normalization \cite{cho2017riemannian},  distributionally robust optimization \cite{chen2017robust}, and vision transformer \cite{kong2023momentum}. The mathematical formulation of this class of problems is
\begin{align}\label{prob}
\min_{X \in \mathbb{R}^{d\times r}} \quad & f(X) = \frac{1}{N} \sum_{i=1}^N f_i(X) \\
\text{s.t.} \;\; \quad & X \in {\rm St}(d,r) := \left\{ X\in \mathbb{R}^{d\times r} : X^\top X = I \right\}, \nonumber
\end{align}
where $f_i:\R^{d\times r} \rightarrow \R$ are continuously differentiable functions and integers $r \leq d$. The most popular methods for solving \eqref{prob} are retraction-based algorithms, which have been extensively studied in the context of manifold optimization \cite{absil2008optimization,wen2013feasible,hu2020brief,boumal2023introduction}. Recently, to alleviate the possible computational burden of the retraction operator, some retraction-free methods have been developed in \cite{gao2018new,gao2022orthogonalization,ablin2022fast,ablin2023infeasible,xiao2024dissolving}. Their ideas are based on a combination of the manifold geometry and a penalty function for the manifold constraint. 

Due to their scalability and parallelization efficiency, retraction-free algorithms are well suited to certain large-scale machine learning applications. In such methods, one must simultaneously control both the constraint violation and the optimality of the loss function. Noting the similarity to decentralized optimization---where the consensus gradient step size is fixed by a known value and only the loss gradient step sizes require tuning \cite{nedic2009distributed,shi2015extra,qu2017harnessing,chen2021decentralized,deng2023decentralized}---we seek to develop retraction-free algorithms with a known penalty parameter for solving \eqref{prob}.

Another motivation for studying retraction-free methods arises from its application in the fine-tuning of large language models (LLMs). Recently, LLMs have revolutionized the field of natural language processing (NLP), achieving unprecedented performance in various applications \cite{radford2019language, qin2023chatgpt}. To tailor pretrained LLMs for specific downstream tasks, the most common approach is full fine-tuning, which requires prohibitively large computational resources due to the need to adapt all model weights, hindering the deployment of large models. Parameter-efficient fine-tuning (PEFT) has gained widespread attention as it requires few trainable parameters while delivering results comparable to or even superior to full fine-tuning. This paradigm involves inserting learnable modules or designating only a small portion of weights as trainable, keeping the main model frozen \cite{houlsby2019parameter, li2021prefix, zaken2021bitfit}. Among fine-tuning methods, low-rank adaptation (LoRA) \cite{hu2021lora} has become the de facto standard among parameter-efficient fine-tuning techniques. It assumes that the change in weights lies in a {low intrinsic dimension}, thereby modeling the update $\Delta W \in \R^{d\times m}$ by two low-rank (not greater than a small integer $r$) matrices $A \in \R^{r\times m}$ and $B\in \R^{d\times r}$, i.e., $\Delta W = BA$. Since $r \ll d$, the requirements on both storage and computation are significantly reduced. Due to its decompositional nature, there is redundancy in the representation of $\Delta W$. Traditional optimization methods for LoRA are unable to exploit this redundancy, which consequently undermines the performance of the models. Instead, we reformulate LoRA fine-tuning as an optimization problem over the product of Stiefel manifolds and Euclidean spaces.\footnote{During the review of the initial version of this manuscript, several concurrent studies emerged (see, e.g., \cite{bogachev2025riemannlora,bogachev2025lora,park2025riemannian}). They adopt closely related manifold constraints to address redundancy in LoRA and use retraction-based algorithms.} Therefore, we propose an algorithmic framework called Manifold-LoRA to accelerate the fine-tuning process and enhance model performance. Moreover, by exploiting projected gradients and incorporating a parameter-free penalty, the overhead that our method incurs is relatively negligible. Our contributions are as follows:
\begin{itemize}
    \item We prove the existence of an explicit penalty parameter by establishing a strong-convexity-like condition for the nonconvex penalty problem associated with the Stiefel manifold constraint. Building on the concept of proximal smoothness for the Stiefel manifold, we then derive convergence results for retraction-free algorithms with an explicit penalty parameter in both stochastic and deterministic settings. Notably, we show that the iterates converge exactly under a constant step size in the deterministic setting, thereby improving upon the prior convergence-to-neighborhood result in \cite{ablin2022fast}. Furthermore, our explicit choice of the penalty parameter achieves a better iteration complexity for the constraint violation in the stochastic setting with a decaying step size than that reported in \cite{ablin2023infeasible}, since we use two-scale step sizes—decaying for the loss-gradient step but constant for the penalty-gradient step—whereas \cite{ablin2023infeasible} uses a single-scale decaying step size for both. Moreover, our analysis framework---motivated by decentralized optimization\cite{nedic2009distributed,deng2023decentralized}---employs a linear decay of the constraint violation and the descent property of the loss function, which contrasts with the augmented Lagrangian-based approach used in \cite{ablin2023infeasible}.

    \item Building upon the established landing theory of retraction-free and penalty parameter-free method and the AdamW framework, we propose a new method, Manifold-LoRA, which employs a carefully designed step size strategy to accelerate the training process of fine-tuning. Compared with the conventional AdamW method, we use the penalized gradient instead of the usual gradient, and the computational overhead is negligible. Numerical experiments are conducted on a wide range of NLP tasks, demonstrating the efficiency of our algorithm. Specifically, compared to vanilla LoRA, our Manifold-LoRA with half the trainable parameters delivers fast convergence and competitive downstream performance. In particular, our method converges twice as fast as baseline methods on several typical datasets, including the SQuAD 2.0 dataset and the CoLA dataset.
\end{itemize}

\subsection{Related Work} 
\paragraph{Optimization over the Stiefel manifold} 
Optimization over the Stiefel manifold has attracted lots of attention due to its broad applications. Through the use of retraction, known as the generalization of the exponential map, the Riemannian gradient descent is proposed \cite{absil2008optimization,boumal2023introduction,hu2020brief}, where all iterations lie in the manifold. When such retraction is computationally costly, the authors \cite{gao2018new} develop a retraction-free algorithm based on the augmented Lagrangian method. More recently, by defining the constraint dissolving operator and adding a sufficiently large penalty term, the authors \cite{xiao2024dissolving} convert the manifold constrained problem \eqref{prob} into an unconstrained problem and then apply unconstrained optimization algorithms. A closely related paper to ours is \cite{ablin2023infeasible}, which first connects an augmented Lagrangian-based merit function with the landing field and then establishes the convergence by exploring the descent property of the merit function. Although the parameter associated with the penalty term could be arbitrarily chosen, an additional search is still needed when conducting landing algorithms.  Inspired by the convergence of Oja's flow, a retraction-free method is developed in \cite{ablin2022fast} for the squared Stiefel manifold (i.e., $d = r$), where the landing flow consists of the projected gradient and the gradient of the penalty function. All of these methods rely on an unknown penalty parameter to ensure the convergence. This motivates us to design penalty parameter-free algorithms, which could significantly reduce the need for tuning parameters in practical implementations.

\paragraph{LoRA} There are numerous variants of LoRA aiming to improve performance or reduce memory usage. AdaLoRA \cite{zhang2023adaptive}, a well-known successor, introduces the idea of adaptively adjusting the rank of different layers by incorporating an additional vector $\boldsymbol{g}$ to serve as the diagonal of a singular value matrix. This approach leverages a revised sensitivity-based importance measure to decide whether to disable the entries in the vector $\boldsymbol{g}$ and in the matrices $A$ and $B$. A similar work, SoRA \cite{ding2023sparse}, adopts the same model architecture as AdaLoRA, but proposes a different way to update the vector $\boldsymbol{g}$ after training. This update rule is the proximal gradient of $\mathcal{L}_1$ loss, acting as a post-pruning method. Additionally, based on the idea that networks with random initialization contain subnetworks that are optimal\cite{frankle2018lottery}, VeRA is proposed in \cite{kopiczko2023vera} to reduce memory overhead.
Although LoRA has gained significant popularity and various variants have been developed, the potential for efficient training through leveraging the 
manifold geometry to reduce redundancy has not been well-explored.

\subsection{Notation}

For a matrix $X \in \R^{d\times r}$, we use $\|X\|$ to denote its Frobenius norm. For a squared matrix $A \in \R^{r \times r}$, we define ${\rm sym}(A) = \frac{A + A^\top}{2}$ and use ${\rm diag}(A) \in \R^r$ to denote its diagonal part. For two matrices $X, Y \in \R^{d\times r}$, we use $\iprod{X}{Y}:=\sum_{i=1}^d \sum_{j=1}^r X_{ij}Y_{ij}$ to denote their Euclidean inner product. For a differential function $f:\R^{d\times r} \rightarrow \R$, we use $\nabla f(X)$ to denote its usual Euclidean gradient at $X$. We define $U_{{\rm St}(d,r)}(\frac{1}{8}) = \{X \in \mathbb{R}^{d \times r} \mid {\rm dist}(X, {\rm St}(d,r)) < \frac{1}{8}\}$ and $\bar{U}_{{\rm St}(d,r)}(\frac{1}{8}) = \{X \in \mathbb{R}^{d \times r} \mid {\rm dist}(X, {\rm St}(d,r)) \leq \frac{1}{8}\}$ with ${\rm dist}(X, {\rm St}(d,r)):= \min_{Y \in {\rm St}(d,r)} \|Y - X\|$. Let $\mathbf{1}$ represent the all-ones vector.

\section{Manifold Optimization for LoRA Fine-tuning}
In this section, we begin by reformulating LoRA fine-tuning as a manifold optimization problem, incorporating an additional constraint on the matrix $B$, which serves as the basis matrix in our reformulation. We further introduce retraction operators and review classical retraction-based methods.

\subsection{Manifold Optimization Formulation of LoRA Fine-tuning}

One possible drawback in the current LoRA fine-tuning framework is that the low-rank decomposition $\Delta W$ into product $BA$ is not unique. Specifically, for any invertible matrix $C$, it holds that $BA = (BC)(C^{-1} A)$. Note that $BC$ shares the same column space with $B$. This suggests optimizing the subspace generated by $B$ instead of $B$ itself. Numerous studies in the field of low-rank optimization, e.g., \cite{boumal2011rtrmc,dai2011subspace,dai2012geometric}, investigate the manifold geometry of the low-rank decomposition and develop efficient algorithms. However, such geometry has not been explored in the LoRA fine-tuning. 

To address such redundancy (i.e., the non-uniqueness of $BA$ representations), we regard $B$ as the basis through the manifold constraint and $A$ as the coordinate of $\Delta W$ under $B$. Hence, the optimization problem can be formulated as 

\begin{align}\label{prob:man} 
\min_{A\in \mathbb{R}^{r\times m}, \;B \in \mathbb{R}^{d\times r}} \quad & \mathcal{L}(BA), \\ 
\text{s.t.} \quad \qquad & B \in {\rm St}(d,r) {\rm ~or~} B \in {\rm Ob}(d,r), \nonumber
\end{align} 
where ${\rm Ob}(d,r):=\{B \in \R^{d\times r}: {\rm diag}(B^\top B) =\mathbf{1}\}$ and $\mathcal{L}$ represents the loss function. Compared to the Stiefel manifold ${\rm St}(d,r)$, the Oblique manifold ${\rm Ob}(d,r)$ necessitates that the matrix $B$ has unit norms in its columns, without imposing requirements for orthogonality between the columns. Problem \eqref{prob:man} is an optimization problem over the product of manifolds and Euclidean spaces. 

\subsection{Retraction-based Manifold Optimization}
Manifold optimization has attracted much attention in the past few decades, as evident in works such as   \cite{absil2008optimization,hu2020brief,boumal2023introduction}. For the Stiefel manifold ${\rm St}(d,r)$, its tangent space at $X$ is denoted by $T_X {\rm St}(d,r):=\{ \xi \in \R^{d\times r}: X^\top \xi + \xi^\top X = 0\}$, which is defined as the subspace consisting of all tangent vectors. For a differentiable function $f$, the Riemannian gradient $\grad f(X) \in T_X {\rm St}(d,r)$ is the unique tangent vector satisfying
$$
    \iprod{\grad f (X)}{\xi}_X = {\rm d}f (X)[\xi], ~~ \forall \xi \in T_X{\rm St} (d, r),
$$
where $\iprod{\cdot}{\cdot}_X$ is the Riemannian metric and ${\rm d}f$ denotes the differential of function $f$.
Since ${\rm St}(d,r)$ is a submanifold embedded in $\R^{d\times r}$, 
by setting the Riemannian metric as the Euclidean metric, the Riemannian gradient of $f$ at $X$ is
$$
\grad f (X) = \nabla f (X) - X {\rm sym} (X^\top \nabla f (X)),
$$
which is the projection of $\nabla f(X)$ onto the tangent space $T_X {\rm St}(d,r)$. The normal space $N_{X}{\rm St}(d,r)$ is defined as the orthogonal complement of $T_X{\rm St}(d,r)$ in $\R^{d\times r}$. In the design of Riemannian algorithms, an essential concept  is the so-called retraction operator.  A retraction operator $\mathcal{R}$ at $X$, denoted as $\mathcal{R}_X$, is a mapping from $T_X{\rm St}(d,r)$ to ${\rm St}(d,r)$ that satisfies the following two properties:
\begin{itemize}[left=0pt]
    \item $\mathcal{R}_X(0_X) = X$ and $0_X$ is the zero element of $T_X {\rm St}(d,r)$;
    \item $\frac{\rm d}{{\rm d}t}\mathcal{R}_{X}(t\xi)\mid_{t=0} = \xi$ for any $\xi\in T_X {\rm St}(d,r)$. 
\end{itemize}
It is well-known that the retraction operator is a generalization of the exponential map \cite{absil2008optimization}. The iterative scheme of a Riemannian gradient descent method is usually given by
$$
    X_{k+1} = \mathcal{R}_{X_k} (t_k \grad f (X_k)),
$$
where $t_k > 0$ is a step size. For the Stiefel manifold ${\rm St}(d,r)$, there are several choices for the retraction $\mathcal{R}$, such as the exponential map, the Cayley transform, the QR decomposition, and the polar decomposition, see \cite{hu2020brief} for details. Among them, the Cayley transformation proposed by \cite{wen2013feasible} is popularly used. It can be expressed as, for any $\eta \in T_X {\rm St}(d,r)$,
$$
    \mathcal{R}_X^{\rm Cayley} (-\eta) = X - U\left (I_{2r} + \frac{1}{2} V^\top U\right)^{-1}V^\top X,
$$
where concatenated matrices $U :=[ (I_d-\frac{1}{2}XX^\top) \eta, X] \in \R^{d \times (2r)}$ and $V:=[X, - (I_d - \frac{1}{2}XX^\top)\eta] \in \R^{d \times (2r)}$. This needs to invert a $(2r)$-by-$(2r)$ matrix and the total computational flops are $4dr^2 + \frac{40}{3}r^3$ \cite{jiang2015framework}, which could be calculated fast for small $r$.

\section{Retraction-free and Penalty Parameter-free Optimization} \label{sec:landing}

In this section, we focus on the design of retraction-free and penalty parameter-free algorithms for solving problem \eqref{prob}. We will first present the retraction-free algorithm and then show how the penalty parameter can be explicitly determined by characterizing the landscape of the penalty function. 

\subsection{Proximal Smoothness}

The concept of proximal smoothness, as introduced by \cite{clarke1995proximal}, refers to the characteristic of a closed set whereby the nearest-point projection becomes a singleton when the point is close enough to the set. This property facilitates algorithmic and theoretical advancements by endowing nonconvex sets with convex-like structures. Specifically, for any positive real number $\gamma$, we define the $\gamma$-tube around $\mathcal{M}$ as $U_{\mathcal{M}}(\gamma): = \{X:{\rm dist}(X,\mathcal{M}) < \gamma\}$. 
We say a closed set $\mathcal{M}$ is $\gamma$-proximally smooth if the projection operator $\Pcal_{\mathcal{M}}(X):=\argmin_{Y \in \Mcal} \|Y -X\|^2$ is a singleton whenever $X\in U_{\mathcal{M}}(\gamma)$. Indeed, as stated in \cite[Corollary 4.6]{clarke1995proximal}, a closed set $\mathcal{M}$ is convex if and only if it is $\gamma$-proximally smooth for arbitrary $\gamma \in (0, \infty)$. The Stiefel manifold $\mathcal{M}={\rm St}(d,r)$ of interest is 1-proximally smooth \cite{balashov2021gradient}.

On the other hand, it is well known that for any closed convex set $\Mcal \subset \R^{d\times r}$, the projection operator $\Pcal_{\Mcal}$ is 1-Lipschitz continuous over $\R^{d\times r}$. Similarly, following the proof in \cite[Theorem 4.8]{clarke1995proximal}, we obtain the corresponding result for the Stiefel manifold: for any $X,Y \in \bar{U}_{{\rm St}(d,r)}(\frac{1}{2} )$,
\begin{equation*}
%\label{eq:lip-proj-alpha}
\left\| \Pcal_{{\rm St}(d,r)} (X) -\Pcal_{{\rm St}(d,r)} (Y)\right\| \leq 2 \|X - Y\|.
\end{equation*}
The above properties ensure the Stiefel manifold locally behaves like a convex set, serving as a useful auxiliary result for our subsequent analysis.

\subsection{Retraction-free Algorithms}

Inspired by retraction-free algorithms \cite{ xiao2024dissolving, ablin2022fast,ablin2023infeasible}, we consider the following retraction-free gradient descent method for problem \eqref{prob}:
\begin{equation}
\label{eq:grad-it} 
X_{k+1} = X_k - \alpha_k \Pcal_{T_{X_k} {\rm St}(d,r)}(g_k) - \mu X_k(X_k^\top X_k - I), 
\end{equation}
where $\alpha_k, \mu >0$ are step sizes, $g_k$ is a stochastic estimate of $\nabla f(X_k)$, e.g., mini-batch stochastic gradient, and the mapping 
$$
\Pcal_{T_{X_k} {\rm St}(d,r)}(g):= g - X_k{\rm sym}(X_k^\top g).
$$
In contrast to retraction-based algorithms, $X_k$ does not remain on ${\rm St}(d,r)$ at each iteration.
Note that when $X_k \in {\rm St}(d,r)$, $\Pcal_{T_{X_k} {\rm St}(d,r)}$ becomes the projection operator onto the tangent space. Thus, for $X_k \in {\rm St}(d,r)$ and $g_k = \nabla f(X_k)$, it holds $\Pcal_{T_{X_k} {\rm St}(d,r)}(g_k) = \grad f(X_k)$. Moreover, the term $X_k(X_k^\top X_k - I)=\nabla \varphi(X_k)$ in \eqref{eq:grad-it} is exactly the gradient of the following quadratic penalty function
$$
    \varphi (X): = \frac{1}{4}\|X^\top X - I\|^2.
$$
We present the detailed description of retraction-free algorithm in Algorithm \ref{alg:retr-free}.

\begin{figure*}[!t]
\centering
% \small
\begin{subfigure}[t]{\threepanelwidth}
    \centering
    \includegraphics[width=\linewidth]{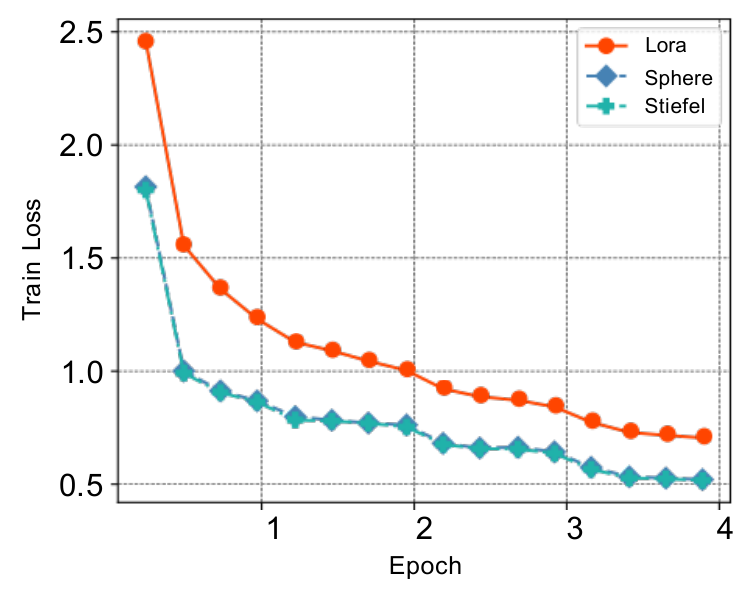}
    \caption{SQuADv2.0 Train Loss}
    \label{fig:squad-1}
\end{subfigure}
\hfill
\begin{subfigure}[t]{\threepanelwidth}
    \centering
    \includegraphics[width=\linewidth]{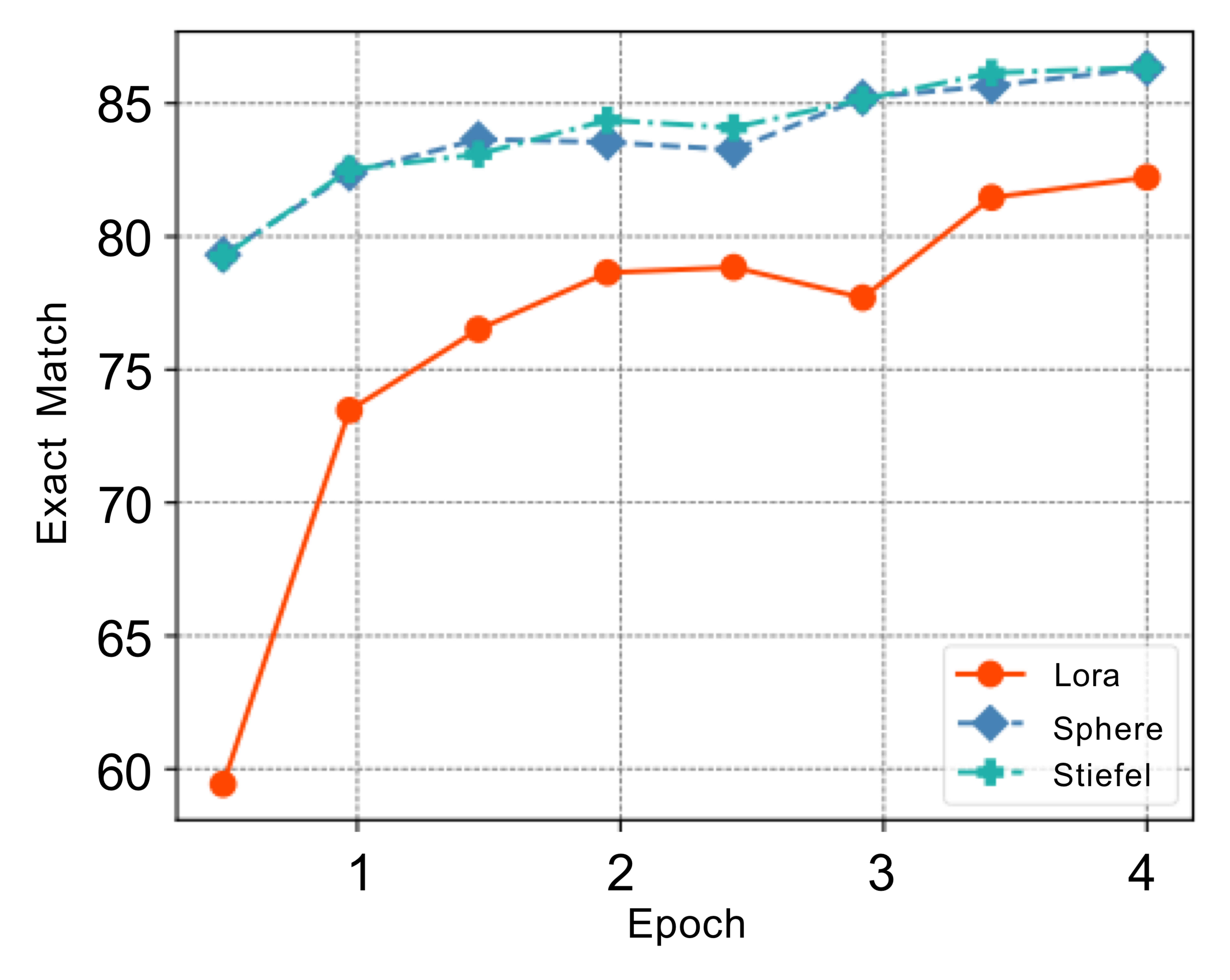}
    \caption{SQuADv2.0 Eval Exact Match}
    \label{fig:squad-2}
\end{subfigure}
\hfill
\begin{subfigure}[t]{\threepanelwidth}
    \centering
    \includegraphics[width=\linewidth]{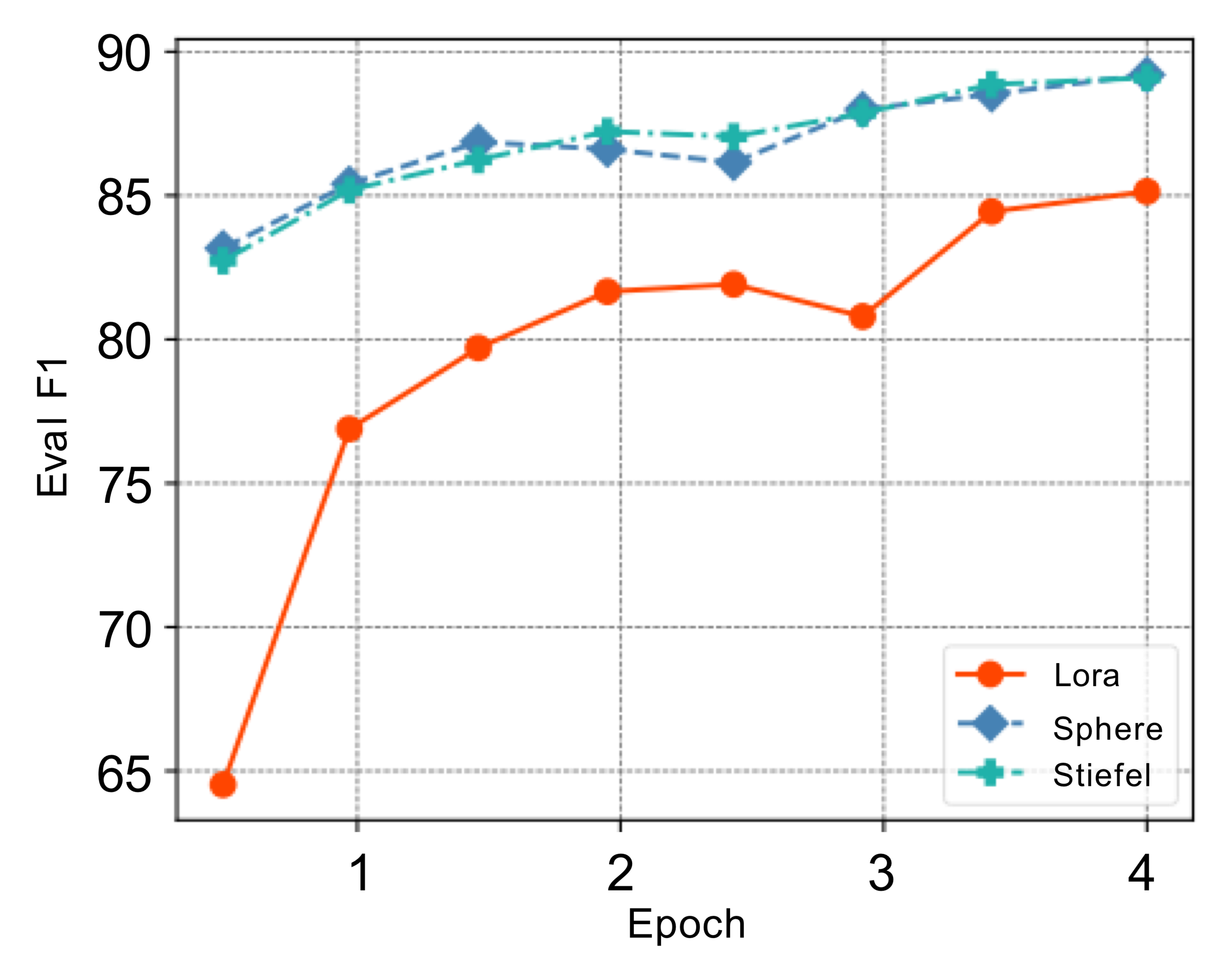}
    \caption{SQuADv2.0 Eval F1}
    \label{fig:squad-3}
\end{subfigure}
\caption{The figures compare the training loss, evaluation exact match, and evaluation F1 metrics against epochs for the SQuADv2.0 dataset. It can be clearly seen that our proposed Manifold-LoRA method almost achieves a 2x speed-up in training epochs compared to the vanilla LoRA.}
\label{fig:squadv2}
\end{figure*}
\begin{algorithm2e}[t]
\small
\caption{Retraction-free algorithm for solving \eqref{prob}}
\SetAlgoLined
\DontPrintSemicolon  
\SetKwInput{KwInput}{Input}
\KwInput{Initial point $X_0$, $\alpha_k,\mu>0$, and $k = 0$.}
\While{Stopping conditions not met}{
Compute the stochastic gradient $g_k$ of $\nabla f(X_k)$. \;
Compute
$X_{k+1} \leftarrow  X_k - \alpha_k \Pcal_{T_{X_k} {\rm St}(d,r)}(g_k) - \mu X_k(X_k^\top X_k - I).$\;
Update $k \gets k + 1$.\; %\hspace{-1em} 
}
\label{alg:retr-free}
\end{algorithm2e}

As will be shown in our theorem, the negative penalty gradient $-\nabla \varphi(X_k)$ pulls the iterate $X_{k+1}$ back to the manifold, while the use of the projected stochastic gradient $\Pcal_{T_{X_k} {\rm St}(d,r)}(g_k)$ is crucial for ensuring its asymptotic orthogonality with $\nabla \varphi(X_k)$, resulting in landing on the manifold and convergence to a stationary point simutaneously.  This differs from the usual penalty method, which optimizes $f(X) + \mu \varphi(X)$ using the update $X_{k+1} = X_k - \alpha_k g_k - \mu X_k(X_k^\top X_k - I)$, and requires $\mu \rightarrow \infty$ to guarantee the feasibility. 

A key distinction from existing works \cite{xiao2024dissolving, ablin2022fast, ablin2023infeasible} is that our approach allows for a constant step size $\mu$ in the penalty term while ensuring convergence in both deterministic and stochastic settings. Notably, we set $\mu = 1/3$ by requiring $X_1$ is not far away from ${\rm St}(d,r)$. It is worth highlighting that similar explicit choices for constraint violation have been explored in decentralized optimization \cite{nedic2009distributed, deng2023decentralized}, where a fixed step size—typically set to 1—is commonly employed for enforcing consensus constraints. The theoretical foundation supporting our approach relies on the restricted strong convexity of the penalty function $\varphi$ and the proximal smoothness of the Stiefel manifold, which together facilitate convergence guarantees in Section \ref{sec:con}.

Compared with the popularly used Cayley transformation-based retraction-type algorithms, the computational cost therein is $4dr^2 + \frac{40}{3}r^3$, which is more than twice the cost of our method at $2dr^2$ for any $r$. Moreover, retractions on the Stiefel manifold involve complex orthogonalization procedures, such as matrix inversion in the Cayley transformation, which are difficult to scale and parallelize. In contrast, the landing update \eqref{eq:grad-it} can be executed using scalable BLAS3 operations.

\subsection{Manifold-LoRA}

The retraction-free method is well-suited to address \eqref{prob:man}, simultaneously minimizing the loss function $\mathcal{L}(BA)$ and constraint violation of $B$. To control the constraint violation, we use the quadratic penalties $R_{\mathrm{St}}(B):=\|B^\top B - I\|^2$ and $R_{\mathrm{Ob}}(B):=\|{\rm diag}(B^\top B) - \mathbf{1}\|^2$ for the Stiefel manifold and Oblique manifold, respectively. By the retraction-free method, Algorithm \ref{alg:retr-free}, we use the projected gradient of the loss part instead of the Euclidean gradient. For simplicity, we write $\nabla_B \mathcal{L}=\nabla_B \mathcal{L} (BA) $, and similarly $\nabla_A \mathcal{L}$. For $B \in \mathrm{St} (d, r)$ or $B \in \mathrm{Ob} (d, r)$, the respective projected gradients are 
\begin{equation}
\label{eq:grad1} 
\Pcal_{T_{B} {\rm St} (d, r)}(\nabla_B \mathcal{L}) = \nabla_B \mathcal{L}- B{\rm sym}(B^\top \nabla_B \mathcal{L}) 
\end{equation}
and 
\begin{equation}
\label{eq:grad2} 
\Pcal_{T_{B} {\rm Ob} (d, r)}(\nabla_B \mathcal{L}) = \nabla_B \mathcal{L}- B{\rm ddiag}(B^\top \nabla_B \mathcal{L}) 
\end{equation}
where ${\rm ddiag}(Z)$ denotes $Z$ with all off-diagonal entries set to $0$. Thus, the gradients of our retraction-free method for $A$ and $B$ are $\nabla_A \mathcal{L}$ and $\Pcal_{T_{B} {\rm St} (d, r)}(\nabla_B \mathcal{L})+ \mu \nabla R_{\mathrm{St}}(B) ({\rm ~or~} \Pcal_{T_{B} {\rm Ob} (d, r)}(\nabla_B \mathcal{L})+ \mu \nabla R_{\mathrm{Ob}}(B))$. Note that $B$ and $A$ represent the basis and the coordinate of $\Delta W$, respectively. This results in different magnitudes and different Lipschitz constants of their gradient function. In fact, let $X = BA$. It follows
$$
    \nabla_A \mathcal{L} (BA) = B^\top\nabla_X \mathcal{L} (X), \quad \nabla_B \mathcal{L} (BA) = \nabla_X \mathcal{L} (X) A^\top.
$$
Then, it holds that for any $A_1,A_2 \in \mathbb{R}^{r \times m}, B_1,B_2 \in \mathbb{R}^{d \times r}$,
$$
    \begin{aligned}
    \|\nabla_A \mathcal{L} (BA_1) - \nabla_A \mathcal{L} (BA_2) \| & \leq \|B\|_2 L_g \|A_1 - A_2\|, \\
    \|\nabla_B \mathcal{L} (B_1A) - \nabla_B \mathcal{L} (B_2A) \| & \leq \|A\|_2 L_g \|B_1 - B_2\|,
    \end{aligned}
$$
where $L_g$ is the Lipschitz constant of $\nabla_X \mathcal{L}(X)$ and $\|\cdot \|_2$ represents the matrix $\ell_2$ norm (i.e., the largest singular value). Note that the step size should generally be proportional to the reciprocal of Lipschitz constant for gradient-type algorithms \cite{nocedal1999numerical,bottou2018optimization}. Hence, we schedule the learning rates for the two matrices based on their respective $\ell_2$ norms. Having prepared the above, we incorporate the SGD or AdamW optimizer \cite{loshchilov2017decoupled} with our manifold-accelerated technique to enhance the LoRA fine-tuning, as presented in Algorithm \ref{alg:manlora}, which can be seen as a generalization of Algorithm \ref{alg:retr-free} to solve optimization problems over the product of Stiefel or Oblique manifold and Euclidean space. Note that the Oblique manifold ${\rm Ob}(d,r)$ is the product of $r$ Stiefel manifolds ${\rm St}(d,1)$, which is the sphere in $\R^d$. 

\begin{figure*}[b]

\centering

\begin{subfigure}[t]{\threepanelwidth}
    \centering
    \includegraphics[width=\linewidth]{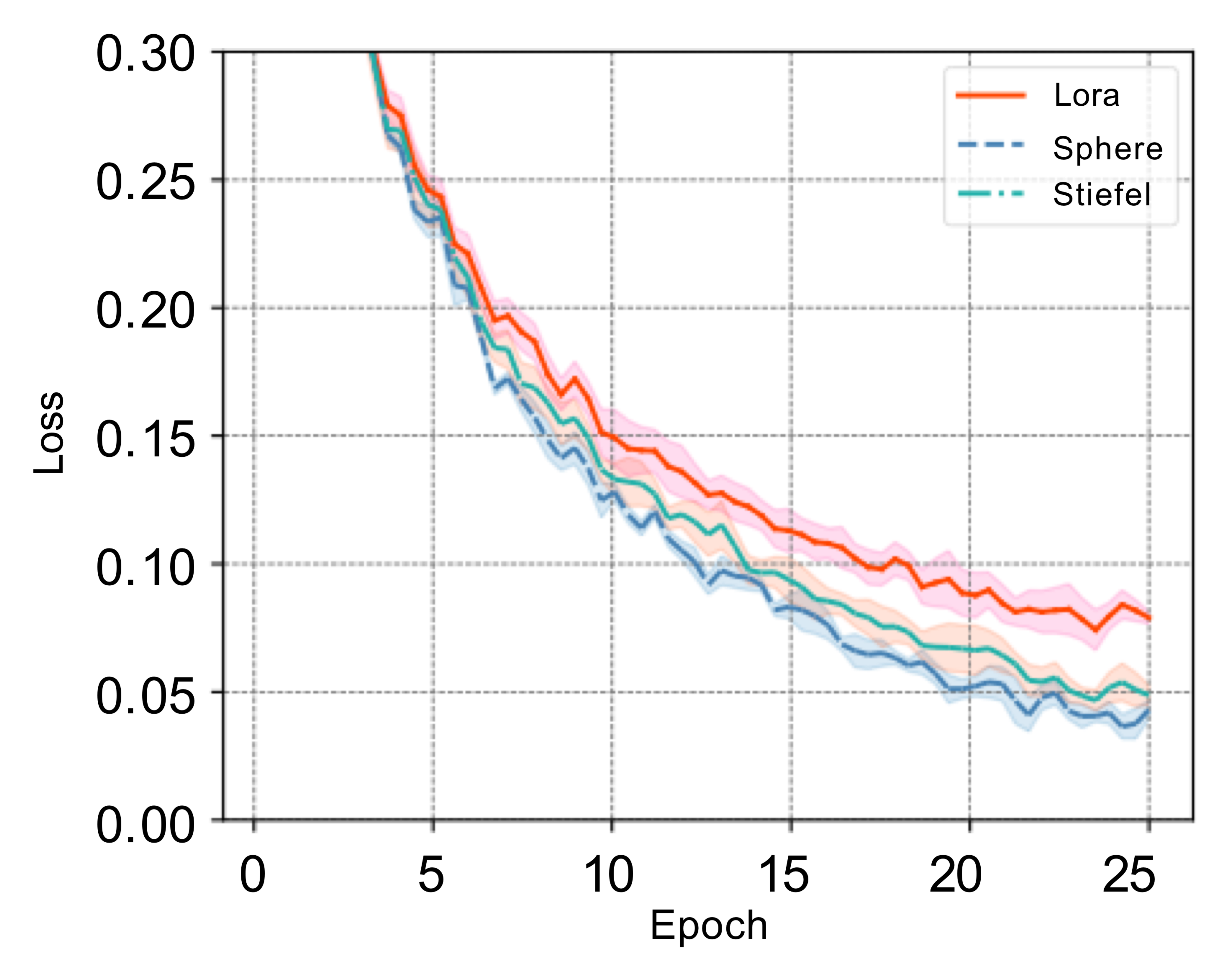}
    \caption{Loss curves on CoLA dataset.}
    \label{fig:sub-1}
\end{subfigure}
\hfill
\begin{subfigure}[t]{\threepanelwidth}
    \centering
    \includegraphics[width=\linewidth]{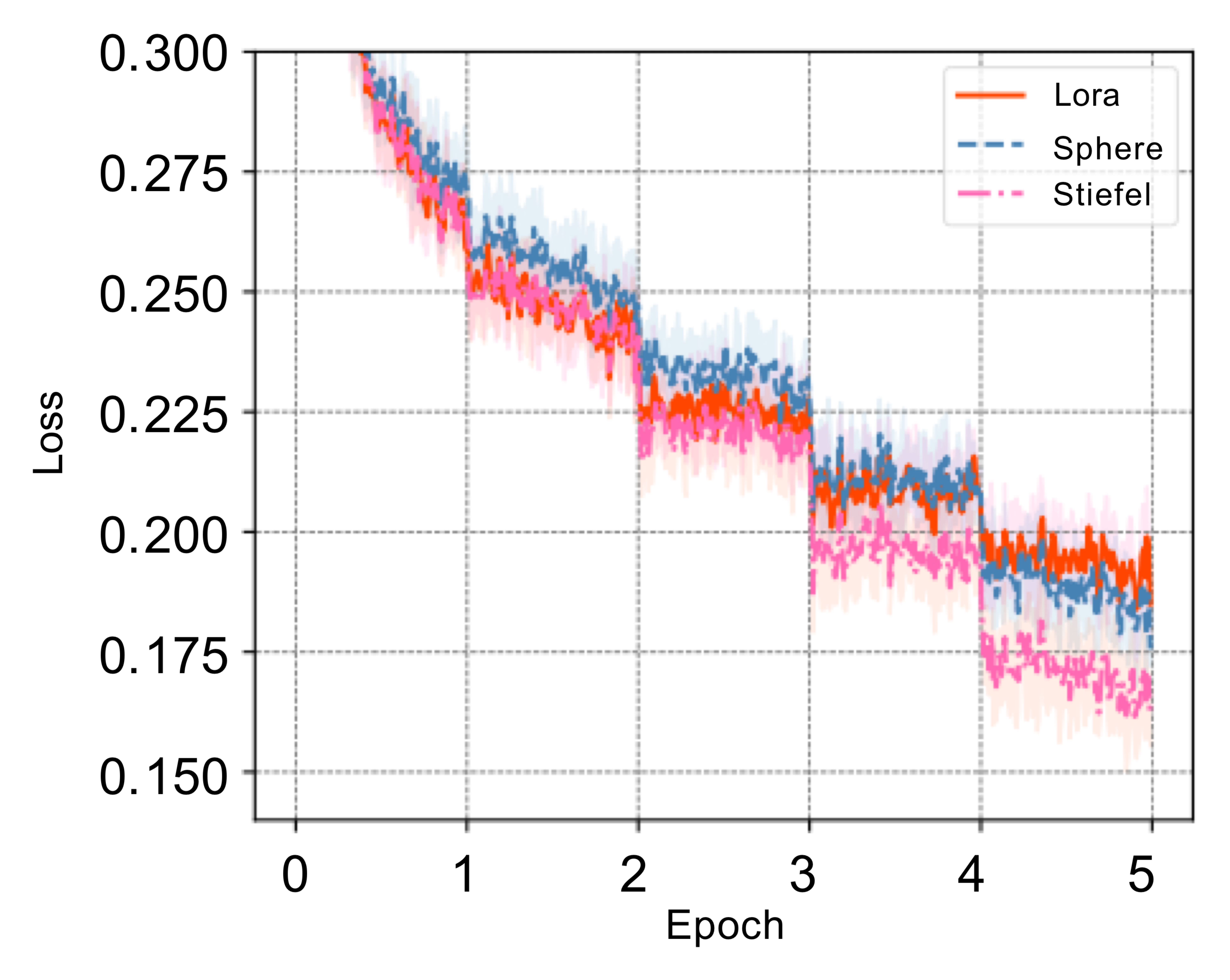}
    \caption{Loss curves on QQP dataset.}
    \label{fig:sub-2}
\end{subfigure}
\hfill
\begin{subfigure}[t]{\threepanelwidth}
    \centering
    \includegraphics[width=\linewidth]{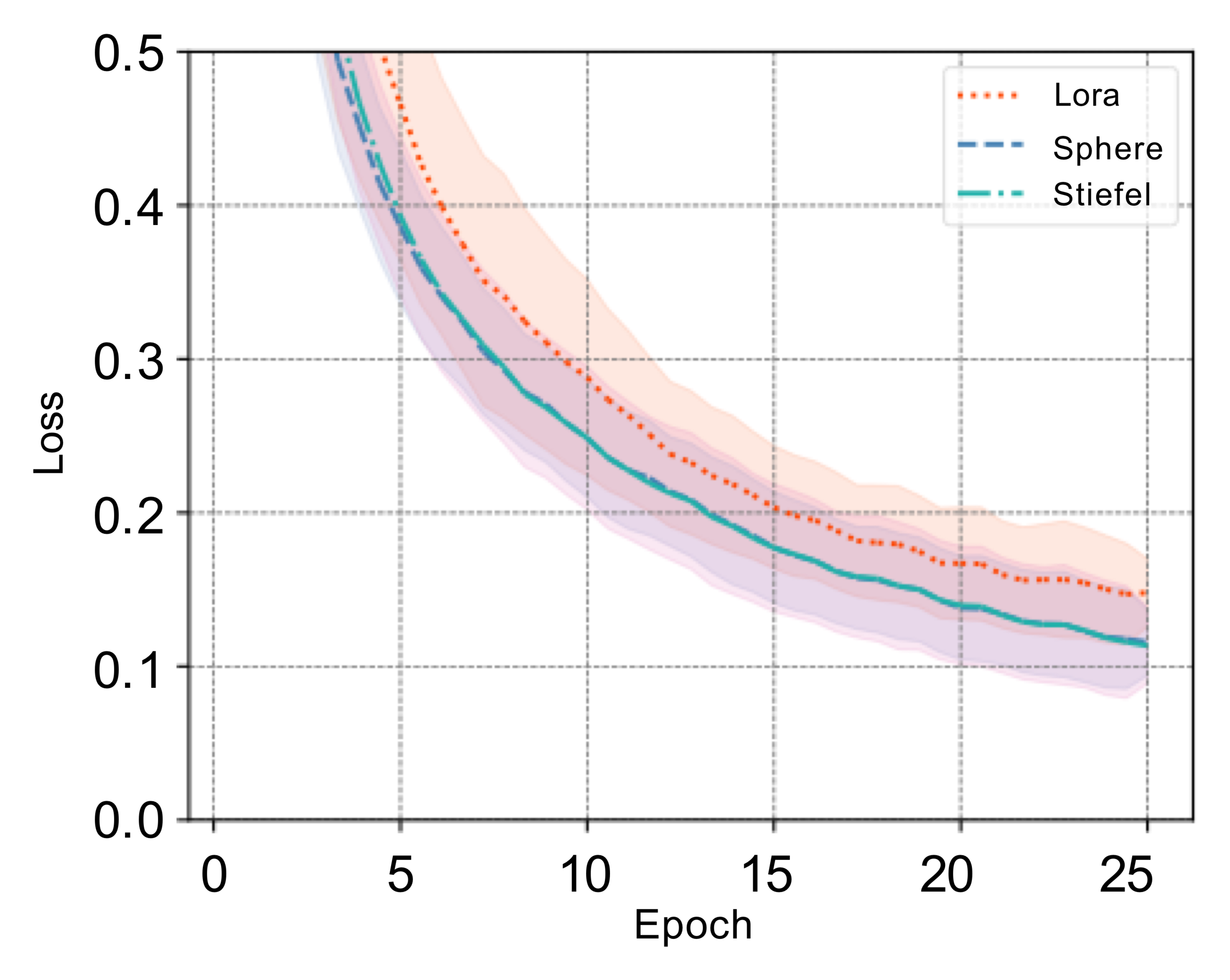}
    \caption{Loss curves on STSB dataset.}
    \label{fig:sub-3}
\end{subfigure}

\caption{The figures illustrate that both sphere constrained and Stiefel constrained manifold-LoRA achieve a faster convergence rate and attain a lower training loss within same optimization steps compared to LoRA method on three distinct datasets CoLA, QQP, STS-B.}
\label{fig:glue-images}
\end{figure*}

\begin{algorithm2e}[t]
\small
\caption{Manifold-LoRA for solving \eqref{prob:man}}
\SetAlgoLined
\DontPrintSemicolon  
\SetKwInput{KwInput}{Input} 
\KwInput{Initial $A_0, B_0$, choice \texttt{update\_type} $\in\{\texttt{SGD},\,\texttt{AdamW}\}$, $\alpha_k,\mu>0$, $\beta_1 = 0.9$, $\beta_2 = 0.999$, ${\rm ub} \geq {\rm lb} > 0$, $\epsilon = 10^{-8}$, $\lambda > 0$, $m(C_{-1})=v(C_{-1}) =0$, $k = 1$.}
\While{Stopping conditions not met}{
    Compute the stochastic gradients $g(A_k)$ and $g(B_k)$ of $\nabla_A \mathcal{L}(B_kA_k)$ and $\nabla_B \mathcal{L}(B_k A_k)$, respectively.

    Let $\hat{g}(A_k) \leftarrow g(A_k)$.

    Let $\hat{g}(B_k) \leftarrow \mathcal{P}_{T_{B_k} \mathrm{St}(d, r)}\left(g(B_k)\right)$ as in \eqref{eq:grad1}, or $\hat{g}(B_k) \leftarrow \mathcal{P}_{T_{B_k} \mathrm{Ob}(d, r)}\left(g(B_k)\right)$ as in \eqref{eq:grad2}.
    
    \For{$C \in \{A, B\}$}{
        \tcc{Projected grad update for A, B}
    
        \If{\texttt{update\_type} == \texttt{SGD}}{
            $C_{k+1} \gets C_k - \alpha_k \, \hat{g}(C_k)$\;
        }

        \If{\texttt{update\_type} == \texttt{AdamW}}{
            $m(C_{k}) \gets \beta_1 \, m(C_{k-1}) + (1 - \beta_1) \, \hat{g}(C_k)$\;
            $v(C_{k}) \gets \beta_2 \, v(C_{k-1}) + (1 - \beta_2) \, \hat{g}(C_k) \odot \hat{g}(C_k)$\;
            $\hat{m}(C_{k}) \gets \tfrac{m(C_k)}{1 - \beta_1^k}$\;
            $\hat{v}(C_{k}) \gets \tfrac{v(C_k)}{1 - \beta_2^k}$\;
            $\alpha(C_k) \gets \text{clip}(\|C_k\|_2, \text{ub}, \text{lb})$\;
            % Scheduling step sizes of A, B
            
            $C_{k+1} \gets C_k - \alpha(C_k) \, \tfrac{\hat{m}(C_{k})}{\sqrt{\hat{v}(C_k)} + \epsilon} - \lambda C_k$\;
        }

        \tcc{Penalty grad update for B}
        \If{$C = B$}{$C_{k+1} \gets C_k - \mu \nabla R_{\mathrm{St}}(C_k)({\rm or~}\nabla R_{\mathrm{Ob}}(C_k))$ 
        }
    }
    Update $k \gets k + 1$.
}
\label{alg:manlora}
\end{algorithm2e}

\section{Convergence Analysis} \label{sec:con}
In this section, we analyze the convergence of our retraction-free gradient descent method \eqref{eq:grad-it}. We first justify the choice of $\mu = \frac{1}{3}$ under an appropriate initialization. Subsequently, we establish convergence results for \eqref{eq:grad-it} in both deterministic and stochastic settings.

\subsection{Explicit Choice for the Penalty Parameter}

It is known that a large penalty parameter $\mu$ yields better feasibility \cite[Chapter 17]{nocedal1999numerical}. To make the iterative scheme \eqref{eq:grad-it} be penalty parameter-free, we need a careful investigation on the landscape of the following optimization problem:
\begin{equation}
\label{prob:feasi}  
\min_{X\in \R^{d\times r}} \;\; \varphi(X)=\frac{1}{4}\left\|X^{\top} X-I\right\|^2. 
\end{equation}
It can be easily verified that problem \eqref{prob:feasi} is nonconvex and its optimal solution set is ${\rm St}(d,r)$. The key of obtaining an explicit formula of $\mu$ is to establish certain strong convexity-type inequality and show that the gradient descent method with step size $\mu$ has linear convergence.  

For any $X \in \mathbb{R}^{d \times r}$, let us denote $\bar{X}:= \calP_{{\rm St}(d,r)}(X)$. Let $X = USV^\top$ be the singular value decomposition with orthogonal matrices $U\in \R^{d\times r}, V \in \R^{r\times r}$ and diagonal matrix $S \in \R^{r\times r}$, then $\bar{X} = UV^\top$. Building on these notations, we demonstrate that problem \eqref{prob:feasi} satisfies the restricted secant inequality (RSI) \cite{zhang2013gradient}, which serves as an alternative to the strong convexity in the linear convergence analysis of gradient-type methods. 
\begin{lem}[RSI] \label{lem:rsi}
For any $X \in \R^{d\times r}$ with $\| X - \bar{X} \| \leq \frac{1}{8}$, we have
\begin{equation*}
\label{eq:rsi} 
\iprod{\nabla \varphi(X)}{X - \bar{X}} \geq \| X - \bar{X} \|^2. 
\end{equation*}
\end{lem}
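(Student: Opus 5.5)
Since both $X$ and $\bar{X}$ are diagonalized by the same singular vectors, the plan is to reduce the inequality to a scalar statement about the singular values $\sigma_1,\dots,\sigma_r$ of $X$. Write $X = USV^\top$ with $S = {\rm diag}(\sigma_1,\dots,\sigma_r)$, $\sigma_i \ge 0$, so that $\bar{X} = UV^\top$ and $X - \bar{X} = U(S - I)V^\top$. Then
\[
\nabla\varphi(X) = X(X^\top X - I) = U S(S^2 - I) V^\top .
\]
Because $U$ has orthonormal columns and $V$ is orthogonal, both the inner product and the norm reduce to sums over the diagonal:
\[
\iprod{\nabla \varphi(X)}{X - \bar{X}} = \sum_{i=1}^r \sigma_i(\sigma_i^2-1)(\sigma_i - 1) = \sum_{i=1}^r \sigma_i(\sigma_i+1)(\sigma_i-1)^2,
\]
and $\|X-\bar{X}\|^2 = \sum_{i=1}^r (\sigma_i-1)^2$.

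Next we use the hypothesis to localize the singular values. From $\|X - \bar{X}\| \le \frac18$ we get $|\sigma_i - 1| \le \frac18$ for every $i$, i.e. $\sigma_i \in [\frac78, \frac98]$. It then suffices to prove the termwise inequality
\[
\sigma_i(\sigma_i+1)(\sigma_i-1)^2 \ge (\sigma_i-1)^2,
\]
which holds as soon as $\sigma(\sigma+1) \ge 1$. On this interval $\sigma(\sigma+1) \ge \frac78\cdot\frac{15}{8} = \frac{105}{64} > 1$. Summing over $i$ gives the claim, in fact with the constant $\frac{105}{64}$ in place of $1$.

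The only point requiring care is justifying that $\bar{X} = UV^\top$ is the projection, i.e. the polar factor, even when $X$ might be rank deficient. This issue does not arise here: every $\sigma_i \ge \frac78 > 0$, so $X$ has full column rank and the projection is unique, consistent with the $1$-proximal smoothness of ${\rm St}(d,r)$ recalled earlier. The computation itself is routine. The main thing to verify is that $U$ and $V$ diagonalize $X^\top X - I$ simultaneously, which follows from $X^\top X = VS^2V^\top$.
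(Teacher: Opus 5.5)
Your proposal is correct and follows the paper's proof essentially step for step: the thin SVD reduces the inner product to $\sum_i \sigma_i(\sigma_i+1)(\sigma_i-1)^2$, the hypothesis confines each $\sigma_i$ to $[\frac78,\frac98]$, and $\sigma_i(\sigma_i+1)\ge\frac{105}{64}$ finishes the argument. Keep your remark about the stronger constant: the paper's proof actually records the bound with $\frac32$ (which $\frac{105}{64}$ exceeds), and Lemma~\ref{lem:linear-feasi} needs that constant, not $1$, to obtain its $\frac23$ contraction.
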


With the given RSI, applying the gradient descent (GD) update to \eqref{prob:feasi}, i.e.,
\begin{equation}
\label{eq:grad-feasi} 
X^{\rm pen}_{k+1} = X^{\rm pen}_k - \mu \nabla \varphi(X^{\rm pen}_k),
\end{equation}
yields the following linear convergence result.

\begin{lem}[Linear convergence of GD for \eqref{prob:feasi}]  \label{lem:linear-feasi}
Let the sequence $\{X_k^{\rm pen}\}$ be generated by \eqref{eq:grad-feasi} with $\mu = \frac{1}{3}$. Suppose that $\| X^{\rm pen}_1 - \overline{X^{\rm pen}_1} \| \leq \frac{1}{8}$, then we have
\begin{equation*}
\label{eq:grad-feasi-linear} 
\|X_{k+1}^{\rm pen} - \overline{X_{k+1}^{\rm pen}}\|^2 \leq \frac{2}{3}\|X_k - \overline{X_k^{\rm pen}} \|^2. 
\end{equation*}
\end{lem}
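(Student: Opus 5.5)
The argument is the textbook linear-convergence proof of gradient descent under the restricted secant inequality (Lemma~\ref{lem:rsi}), combined with a local smoothness bound on $\nabla\varphi$ near ${\rm St}(d,r)$. It proceeds by induction on $k$: the claimed contraction at step $k$ also keeps $X_{k+1}^{\rm pen}$ inside $\bar U_{{\rm St}(d,r)}(\frac18)$, so the hypothesis $\|X_1^{\rm pen}-\overline{X_1^{\rm pen}}\|\le \frac18$ propagates to all iterates. (The right-hand side of the statement should read $\|X_k^{\rm pen}-\overline{X_k^{\rm pen}}\|^2$.)

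\emph{Step 1 (projection is a minimizer).} Write $X=X_k^{\rm pen}$, $X^+=X-\mu\nabla\varphi(X)$ with $\mu=\frac13$. Since $\overline{X^+}$ is the closest point of ${\rm St}(d,r)$ to $X^+$,
\[
\|X^+-\overline{X^+}\|^2\le \|X^+-\bar X\|^2=\|X-\bar X\|^2-2\mu\iprod{\nabla\varphi(X)}{X-\bar X}+\mu^2\|\nabla\varphi(X)\|^2 .
\]
Lemma~\ref{lem:rsi} bounds the middle term by $-2\mu\|X-\bar X\|^2$.

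\emph{Step 2 (gradient bound).} It remains to show $\|\nabla\varphi(X)\|\le c\|X-\bar X\|$ on the tube. Use the SVD $X=USV^\top$, so $\bar X=UV^\top$. Then
\[
\nabla\varphi(X)=U S(S^2-I)V^\top, \qquad X-\bar X=U(S-I)V^\top .
\]
Both are diagonal in the same singular bases, so we can compare them entrywise: with $\sigma_i$ the singular values, $|\sigma_i(\sigma_i^2-1)|=\sigma_i(\sigma_i+1)\,|\sigma_i-1|$. Since $|\sigma_i-1|\le \|X-\bar X\|\le\frac18$, we have $\sigma_i\le\frac98$, hence $\sigma_i(\sigma_i+1)\le \frac98\cdot\frac{17}{8}=\frac{153}{64}<2.4$. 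So $\|\nabla\varphi(X)\|^2\le c^2\|X-\bar X\|^2$ with $c^2\approx 5.72$.

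The same diagonalization also gives the RSI directly, as a sanity check: it amounts to $\sigma_i(\sigma_i+1)\ge 1$, which holds for $\sigma_i\ge\frac78$.

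\emph{Step 3 (combine).} Substituting gives
\[
\|X^+-\overline{X^+}\|^2\le \bigl(1-2\mu+\mu^2c^2\bigr)\|X-\bar X\|^2 .
\]
With $\mu=\frac13$ the factor is $\frac13+\frac{c^2}{9}\le \frac13+0.636\approx 0.97$. This is below one but above $\frac23$, so Step 3 alone does not reach the stated constant; this is the main obstacle.

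\emph{Sharper route.} The remedy is to work per singular value rather than through the crude RSI. The update preserves the singular vectors: $X^+=U\,{\rm diag}\bigl(\sigma_i-\mu\sigma_i(\sigma_i^2-1)\bigr)V^\top$. It therefore suffices to analyze the scalar map $h(\sigma)=\sigma-\frac13\sigma(\sigma^2-1)$ on $[\frac78,\frac98]$. We have $h(1)=1$ and $h'(\sigma)=1-\frac13(3\sigma^2-1)=\frac43-\sigma^2$. On this interval $\sigma^2\in[0.766,1.266]$, so $h'\in[0.07,0.57]$. Hence $h(\sigma)>0$ (the SVD of $X^+$ is valid) and, by the mean value theorem,
\[
|h(\sigma)-1|\le 0.57\,|\sigma-1|, \qquad (h(\sigma)-1)^2\le 0.33\,(\sigma-1)^2\le \tfrac23(\sigma-1)^2 .
\]
Summing over $i$ gives the claim, since $\|X^+-\overline{X^+}\|^2=\sum_i(h(\sigma_i)-1)^2$. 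The same bound shows $\|X^+-\overline{X^+}\|\le\frac18$, which closes the induction.
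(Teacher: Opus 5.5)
Your final argument (the ``sharper route'') is correct, but it is not the paper's proof. The paper keeps the template of your Steps 1--3. It uses $\|X^+-\overline{X^+}\|^2\le\|X^+-\bar X\|^2$, expands, and applies $\|\nabla\varphi(X)\|^2\le 6\|X-\bar X\|^2$. It combines this with the RSI at constant $\frac32$, not $1$: the appendix proof of Lemma~\ref{lem:rsi} actually shows $\iprod{\nabla\varphi(X)}{X-\bar X}\ge\frac32\|X-\bar X\|^2$, because $\sigma_i(\sigma_i+1)\ge\frac78\cdot\frac{15}{8}=\frac{105}{64}$. The factor is then $1-\frac23\cdot\frac32+\frac19\cdot 6=\frac23$ exactly. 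So the obstacle you hit in Step 3 comes only from using the constant as stated in Lemma~\ref{lem:rsi}. Your own sanity-check computation already contains the sharper constant, and with it your Steps 1--3 close, giving a factor of about $0.54$.

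Your scalar-map argument works differently. It uses the fact that the gradient step on $\varphi$ preserves the singular vectors of $X$. Hence $\overline{X^+}=\bar X$, Step 1 holds with equality, and everything reduces to a one-dimensional contraction of $h(\sigma)=\frac43\sigma-\frac13\sigma^3$ about its fixed point $1$. This route gives several things:
\begin{itemize}
\item a sharper factor, about $0.33$ instead of $\frac23$;
\item explicit positivity of the new singular values, so the SVD and polar factor of $X^+$ are valid;
\item explicit invariance of $\bar U_{{\rm St}(d,r)}(\frac18)$ under the iteration, whereas the paper applies the bound at every $k$ without writing out that induction;
\item since $\overline{X^+}=\bar X$, the intermediate estimate $\|X_k-\frac13\nabla\varphi(X_k)-\bar X_k\|^2\le\frac23\|X_k-\bar X_k\|^2$, which the paper reuses in Lemma~\ref{lem:err-feasi}.
\end{itemize}
The paper's RSI-plus-gradient-bound route is more generic. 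It needs spectral information only at the current point $X$ and never has to identify the projection of $X^+$. That makes it the standard template for linear convergence under a restricted secant inequality, and it would still work for a penalty step without this exact simultaneous-diagonalization structure.
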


Note that the linear convergence of gradient descent follows from the established RSI. It would also be interesting to investigate Newton--Schulz-type updates for nearly orthogonal matrices and to exploit their superlinear local convergence \cite{higham2008functions,lakic1998computation} in the design and analysis of retraction-free algorithms for problem~\eqref{prob}.
The proofs of Lemmas \ref{lem:rsi} and \ref{lem:linear-feasi} are provided in Appendix~\ref{app:basic-proofs}.

\subsection{Landing on the Stiefel Manifold}

Building on the established linear convergence of gradient descent for problem \eqref{prob:feasi}, we are now able to show that the iterates generated by \eqref{eq:grad-it} will land on the Stiefel manifold eventually, and the limiting point is a stationary point of \eqref{prob}, i.e., $X_{\infty} \in {\rm St}(d,r)$, and $\grad f(X_\infty) = 0$. 

\begin{assum} \label{assum}
Suppose the following smoothness and stochasticity conditions hold:
\begin{itemize}[left=0pt]

\item For each $i$, the component function $f_i$ is continuously differentiable, and its Euclidean gradient $\nabla f_i$ is $L_f$-Lipschitz continuous over the convex hull of $\bar U_{\mathrm{St} (d, r)}\bigl (\tfrac18\bigr)$.

\item The stochastic gradient $g_k$ is an unbiased estimator of $\nabla f (X_k)$ with uniformly bounded variance, namely, for all $k=1, 2, \dots$,
$$
    \mathbb{E}[\, g_k\, ] \; =\; \nabla f (X_k),
    \qquad
    \mathbb{E}\bigl[\|\, g_k - \nabla f (X_k)\|^2\bigr]\; \le\; \sigma^2,
$$
where $\sigma > 0$ is a constant. 
\end{itemize}  
\end{assum}

For simplicity, define $\hat{\nabla} f_i(X) = \Pcal_{T_X {\rm St}(d,r)}( \nabla f_i(X))$. Note that $\hat{\nabla} f_i(X) = \grad f_i(X)$ whenever $X \in {\rm St}(d,r)$. 
We first have the following quadratic upper bound on $f_i$ and Lipschitz continuity of $\grad f_i$. 
\begin{lem}[Quadratic upper bound] \label{lem:lip}
Suppose that Assumption \ref{assum} holds. There exists a constant $L > 0$ such that for any $X, Y\in {\rm St}(d,r)$, and any $i$, the following quadratic upper bound holds:
\begin{equation}
\label{eq:qub} 
f_i(Y) \leq f_i(X) + \iprod{\grad f_i(X)}{Y-X} + \frac{L}{2} \|Y-X\|^2.
\end{equation}
In addition, there exists a constant $\hat{L} > 0$ such that for any $X \in {\rm St}(d,r), Y \in \bar{U}_{{\rm St}(d,r)}(\frac{1}{8})$, and any $i$,
\begin{equation}
\label{eq:grad-lip} 
\| \grad f_i(X) - \hat{\nabla} f_i(Y) \| \leq \hat{L} \|X - Y\|.
\end{equation}
\end{lem}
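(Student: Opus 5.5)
For \eqref{eq:qub}, I would start from the Euclidean descent lemma. Since ${\rm St}(d,r)$ is contained in the convex hull of $\bar U_{{\rm St}(d,r)}(\frac18)$, the segment between any $X,Y\in{\rm St}(d,r)$ lies in the region where $\nabla f_i$ is $L_f$-Lipschitz. This gives
\[
f_i(Y)\le f_i(X)+\iprod{\nabla f_i(X)}{Y-X}+\tfrac{L_f}{2}\|Y-X\|^2 .
\]
Next I split $\nabla f_i(X)=\grad f_i(X)+X{\rm sym}(X^\top\nabla f_i(X))$. The normal part contributes $\iprod{X S}{Y-X}$ with $S={\rm sym}(X^\top \nabla f_i(X))$ symmetric. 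Using $X^\top X=Y^\top Y=I$, we have $\iprod{XS}{Y-X}={\rm tr}(S(X^\top Y-I))$, and
\[
X^\top Y-I={\rm sym}(X^\top Y)-I+\text{skew}=-\tfrac12 (Y-X)^\top(Y-X)+\text{skew}.
\]
The skew part is orthogonal to the symmetric $S$, so $|\iprod{XS}{Y-X}|\le \tfrac12\|S\|_2\|Y-X\|^2$.

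It remains to bound $\|S\|_2\le \|\nabla f_i(X)\|$ uniformly on the compact set ${\rm St}(d,r)$. Let $G:=\max_i\max_{X\in{\rm St}(d,r)}\|\nabla f_i(X)\|$, which is finite by continuity and compactness. Then $L=L_f+G$ works.

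For \eqref{eq:grad-lip}, I would write $\grad f_i(X)-\hat\nabla f_i(Y)=\Pcal_X(\nabla f_i(X))-\Pcal_Y(\nabla f_i(Y))$, where $\Pcal_Z(g)=g-Z{\rm sym}(Z^\top g)$ is the same formula for all $Z$. I then split this into two terms:
\[
\bigl[\Pcal_X(\nabla f_i(X))-\Pcal_X(\nabla f_i(Y))\bigr]+\bigl[\Pcal_X(\nabla f_i(Y))-\Pcal_Y(\nabla f_i(Y))\bigr].
\]
The first term is at most $(1+\|X\|_2^2)L_f\|X-Y\|=2L_f\|X-Y\|$. This uses the Lipschitz assumption on the segment $[X,Y]$, which lies in the convex hull of $\bar U(\frac18)$.

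For the second term I write $X{\rm sym}(X^\top g)-Y{\rm sym}(Y^\top g)=(X-Y){\rm sym}(X^\top g)+Y{\rm sym}((X-Y)^\top g)$. Its norm is at most $(\|X\|_2+\|Y\|_2)\|g\|\,\|X-Y\|\le (2+\frac18)\tilde G\|X-Y\|$. Here $\tilde G:=\max_i\sup_{\bar U(1/8)}\|\nabla f_i\|$ is finite because $\bar U(\frac18)$ is compact (bounded and closed) and $\nabla f_i$ is continuous there. So $\hat L=2L_f+\frac{17}{8}\tilde G$.

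The main obstacle is the normal-component term in \eqref{eq:qub}: a naive bound gives only a first-order term in $\|Y-X\|$. The key is the identity showing that the symmetric part of $X^\top(Y-X)$ is quadratic on the Stiefel manifold. Everything else reduces to compactness and Lipschitz bookkeeping.
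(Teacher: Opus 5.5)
Your proof is correct. For \eqref{eq:grad-lip} it follows the paper's route: the paper uses the same split into $\Pcal_{T_X{\rm St}(d,r)}(\nabla f_i(X))-\Pcal_{T_X{\rm St}(d,r)}(\nabla f_i(Y))$ plus $\Pcal_{T_X{\rm St}(d,r)}(\nabla f_i(Y))-\Pcal_{T_Y{\rm St}(d,r)}(\nabla f_i(Y))$. The differences are only in the bookkeeping.
\begin{itemize}
\item The paper bounds the first piece by $L_f\|X-Y\|$, because $X\in{\rm St}(d,r)$ makes $\Pcal_{T_X{\rm St}(d,r)}$ an orthogonal projection. Your factor $1+\|X\|_2^2=2$ is valid but loses a factor of two.
\item The paper telescopes the second piece with the roles of $X$ and $Y$ interchanged, as $X\,{\rm sym}((X-Y)^\top g)+(X-Y)\,{\rm sym}(Y^\top g)$. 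This gives the same factor $1+\|Y\|_2$ as your decomposition. The paper then uses the cruder $\|Y\|_2\le\tfrac32$ to reach $\tfrac52\hat D_f$, where your $\|Y\|_2\le\tfrac98$ gives $\tfrac{17}{8}$.
\item Your $\tilde G$, with the maximum over $i$, is also slightly more careful than the paper's $\hat D_f$, which is written with $\nabla f$ rather than $\nabla f_i$.
\end{itemize}

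For \eqref{eq:qub} the paper gives no argument of its own. It invokes \cite[Lemma 2.4]{chen2021decentralized} and \cite[Lemma 4.2]{deng2023decentralized} with $L=L_f+D_f$. Your derivation combines two ingredients:
\begin{itemize}
\item the Euclidean descent lemma on the segment $[X,Y]$, justified correctly through the convex-hull assumption (the paper's text loosely speaks of Lipschitz continuity "over ${\rm St}(d,r)$");
\item the identity ${\rm sym}(X^\top(Y-X))=-\tfrac12 (Y-X)^\top(Y-X)$ for $X,Y\in{\rm St}(d,r)$.
\end{itemize}
This identity is exactly the mechanism behind those cited lemmas: it shows the normal component of $\nabla f_i(X)$ contributes only a second-order term. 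Your argument reproduces the same constant. So you gain a self-contained proof of the first inequality, not a genuinely different one.
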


\begin{figure*}
\begin{subfigure}[t]{\threepanelwidth}
    \centering
    \includegraphics[width=\linewidth, ]{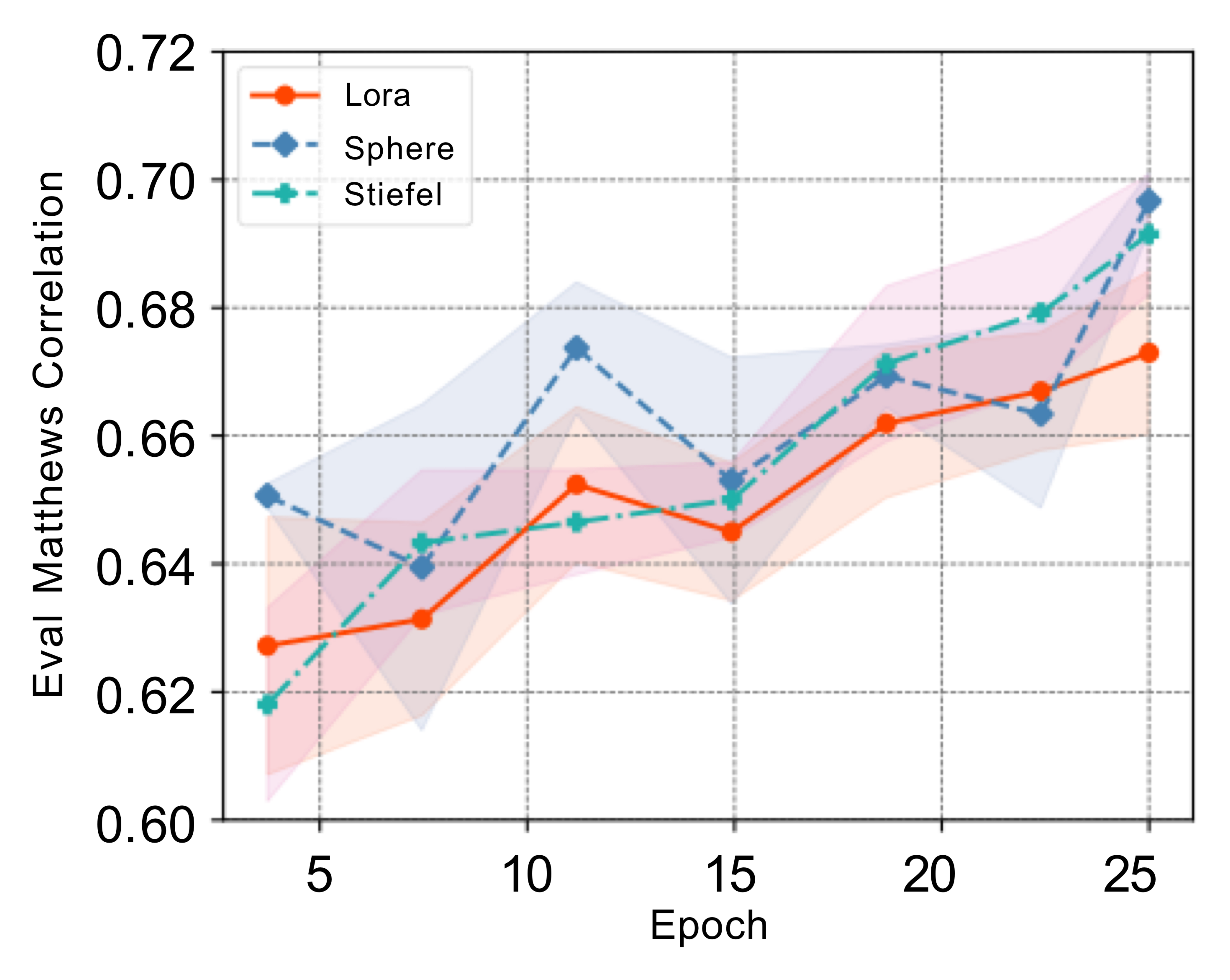}
    \caption{CoLA evaluation matthews correlation}
    \label{fig:sub-4}
\end{subfigure}
\hfill
\begin{subfigure}[t]{\threepanelwidth}
    \centering
    \includegraphics[width=\linewidth]{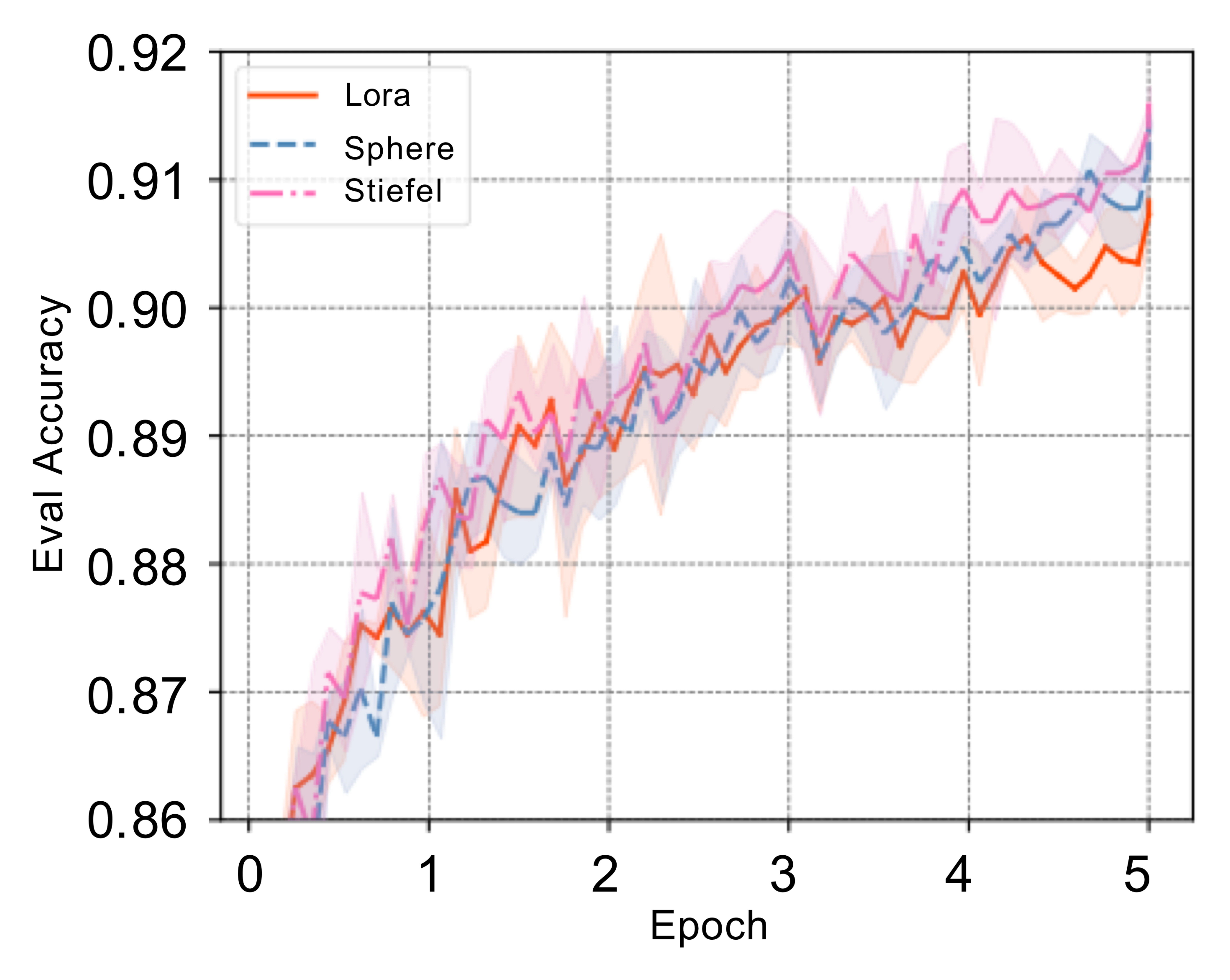}
    \caption{QQP evaluation accuracy}
    \label{fig:sub-5}
\end{subfigure}
\hfill
\begin{subfigure}[t]{\threepanelwidth}
    \centering
    \includegraphics[width=\linewidth]{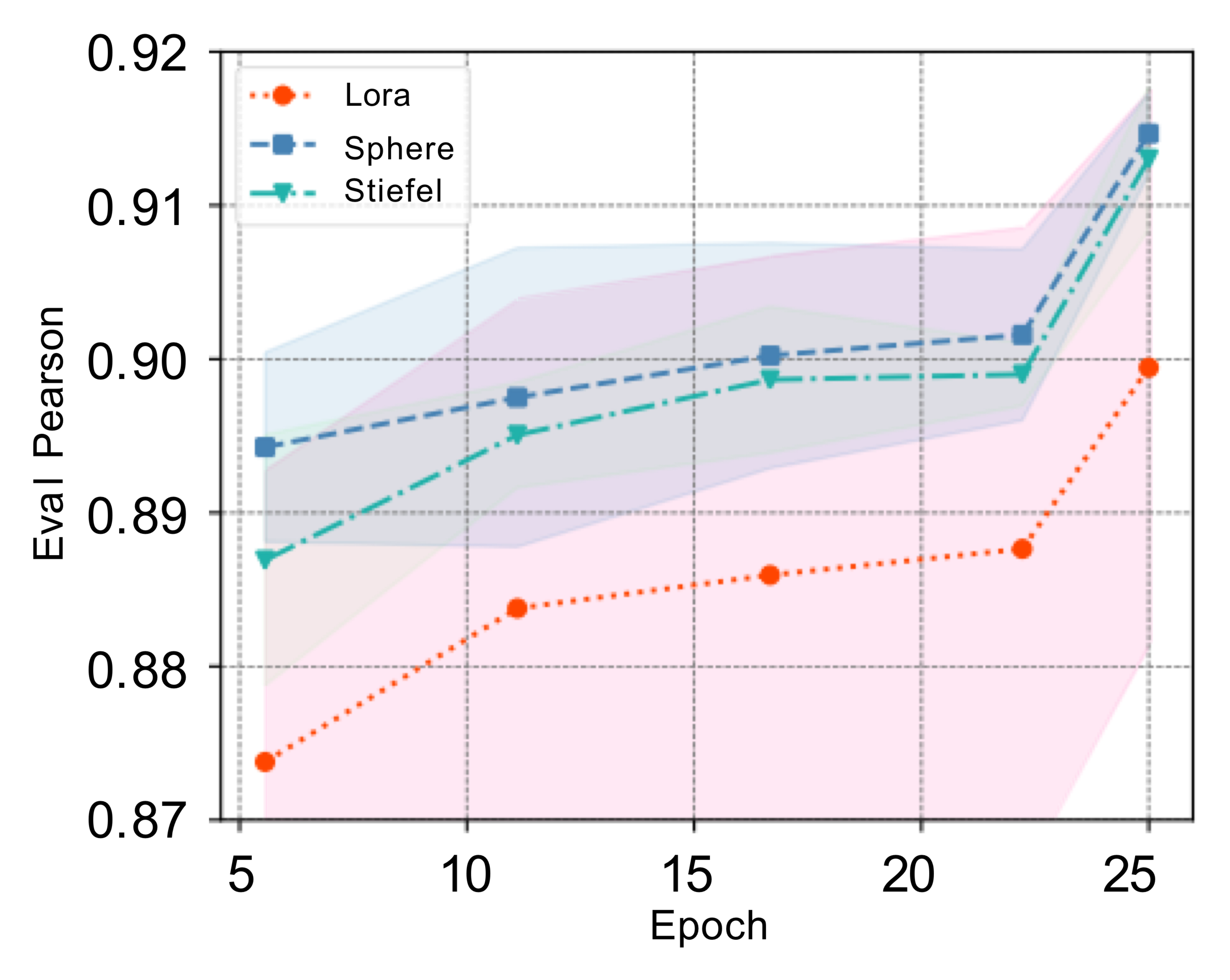}
    \caption{STS-B evaluation pearson}
    \label{fig:sub-6}
\end{subfigure}
\caption{Performance on the validation sets across three datasets. The COLA dataset is evaluated using the matthews
correlation metric, QQP is measured by accuracy, and STS-B is evaluated by Pearson correlation, all plotted against the number of epochs. }
\label{appen-exp-fig}
\end{figure*}

By the linear convergence result in Lemma \ref{lem:linear-feasi}, we have the following decay on the feasibility error.
\begin{lem}[Error bound of feasibility] \label{lem:err-feasi}
Let $\{X_k\}$ be the sequence generated by \eqref{eq:grad-it} with $\mu = \frac{1}{3}$. If $\| X_k - \bar{X}_k \| \leq \frac{1}{8}$, then
    \begin{equation}
    \label{eq:err-feasi} 
    \|X_{k+1} - \bar{X}_{k+1}\| \leq \sqrt{\tfrac{2}{3}} \| X_k - \bar{X}_k \| + \frac{5}{2}\alpha_k \| g_k \|. 
    \end{equation}
\end{lem}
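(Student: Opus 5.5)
Split the update \eqref{eq:grad-it} into a pure penalty step and a perturbation. Define $Y_{k+1} := X_k - \mu\nabla\varphi(X_k)$ with $\mu=\frac13$. Then
\[
X_{k+1} = Y_{k+1} - \alpha_k \Pcal_{T_{X_k}{\rm St}(d,r)}(g_k).
\]
Since $\bar{X}_{k+1}$ is the nearest point of ${\rm St}(d,r)$ to $X_{k+1}$, we have $\|X_{k+1}-\bar{X}_{k+1}\| \le \|X_{k+1}-\overline{Y_{k+1}}\|$. The triangle inequality then gives
\[
\|X_{k+1}-\bar{X}_{k+1}\| \le \|Y_{k+1}-\overline{Y_{k+1}}\| + \alpha_k\|\Pcal_{T_{X_k}{\rm St}(d,r)}(g_k)\|.
\]
No Lipschitz property of the projection is needed here; minimality of the distance suffices.

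For the first term, apply the one-step argument behind Lemma~\ref{lem:linear-feasi} with $X_k$ as the starting point. The hypothesis $\|X_k-\bar{X}_k\|\le\frac18$ is exactly the hypothesis of that lemma, so we obtain $\|Y_{k+1}-\overline{Y_{k+1}}\|^2\le\frac23\|X_k-\bar{X}_k\|^2$. If the lemma as stated is tied to the index of its own sequence, recall its proof for a single step. Expand
\[
\|Y_{k+1}-\bar{X}_k\|^2=\|X_k-\bar X_k\|^2-2\mu\langle\nabla\varphi(X_k),X_k-\bar X_k\rangle+\mu^2\|\nabla\varphi(X_k)\|^2 .
\]
Use Lemma~\ref{lem:rsi} for the cross term. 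Bound $\|\nabla\varphi(X_k)\|$ by a constant times $\|X_k-\bar X_k\|$ via the SVD: $\nabla\varphi(X)=U S(S^2-I)V^\top$ and $X-\bar X=U(S-I)V^\top$, so the ratio is governed by $\sigma(\sigma+1)\le \frac98\cdot\frac{17}{8}$ on the singular values $\sigma\in[\frac78,\frac98]$. Finally use $\|Y_{k+1}-\overline{Y_{k+1}}\|\le\|Y_{k+1}-\bar X_k\|$. Taking square roots yields the factor $\sqrt{2/3}$.

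For the second term, bound the operator norm of $g\mapsto g - X{\rm sym}(X^\top g)$ for $X$ in the tube. Its Frobenius norm satisfies
\[
\|g - X{\rm sym}(X^\top g)\|\le \|g\|+\|X\|_2^2\|g\|\le (1+\sigma_{\max}^2)\|g\|.
\]
With $\sigma_{\max}\le 1+\frac18=\frac98$ (singular values of $X_k$ lie in $[1-\frac18,1+\frac18]$ because $\|X_k-\bar X_k\|$ dominates $\max_i|\sigma_i-1|$), this gives $1+\frac{81}{64}=\frac{145}{64}\approx 2.27\le\frac52$. This yields $\alpha_k\|\Pcal_{T_{X_k}}(g_k)\|\le\frac52\alpha_k\|g_k\|$, and combining the two terms gives \eqref{eq:err-feasi}.

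The main obstacle is making sure the contraction constant from the penalty step really is $\frac23$ uniformly in the $\frac18$-tube. This amounts to the scalar inequality
\[
(1-\tfrac13\sigma(\sigma+1))^2\le\tfrac23 \quad\text{for } \sigma\in[\tfrac78,\tfrac98],
\]
which is just Lemma~\ref{lem:linear-feasi}'s computation and which I take as given. A secondary point is that $Y_{k+1}$ need not lie in the tube, but this does not matter: the nearest-point comparison only uses $\overline{Y_{k+1}}$ as some feasible point.
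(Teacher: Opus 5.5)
Your proposal is correct and follows essentially the paper's argument: a nearest-point comparison plus the triangle inequality, the one-step $\frac23$ contraction of the $\mu=\frac13$ penalty step from Lemma~\ref{lem:linear-feasi}, and the bound $\|\Pcal_{T_{X_k}{\rm St}(d,r)}(g)\|\le(1+\|X_k\|_2^2)\|g\|\le\frac52\|g\|$. The paper compares $X_{k+1}$ directly with $\bar X_k$, using the inequality $\|X_k-\frac13\nabla\varphi(X_k)-\bar X_k\|^2\le\frac23\|X_k-\bar X_k\|^2$ from inside the proof of that lemma, whereas your detour through $\overline{Y_{k+1}}$ lets you apply the lemma's stated conclusion as a black box; in your fallback sketch, note that Lemma~\ref{lem:rsi} as stated (constant $1$) together with the gradient-norm bound gives no contraction, so you need the constant $\frac32$ actually proved in the appendix, or your exact singular-value factor $1-\frac13\sigma(\sigma+1)$.
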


The following one-step descent lemma on $f$ is crucial in establishing the convergence. Detailed proofs of Lemmas \ref{lem:lip}, \ref{lem:err-feasi}, and \ref{lem:descent} are provided in Appendix~\ref{app:basic-proofs}.

\begin{lem}[One-step descent for $f$] \label{lem:descent}
    Suppose that Assumption \ref{assum} holds. Let $\{X_k\}$ be the sequence generated by Algorithm \ref{alg:retr-free} with $\mu = \frac{1}{3}$. If $\| X_k - \bar{X}_k \| \leq \frac{1}{8}$, we have
\begin{equation} \label{eq:descent} 
\begin{aligned}
& \mathbb{E}_k[f(\bar{X}_{k+1})] - f(\bar{X}_k)  \\
\leq &  - (\alpha_k - (4\hat{L}^2 + 4L + 1) \alpha_k^2) \|\hat{\nabla} f(X_k)\|^2  + \\ 
&\frac{1}{2}\left(4\hat{D}_f + 9\hat{L}^2 + 8 L + 3 \right) \|X_k - \bar{X}_k\|^2 +\\
&\frac{1}{2}\|X_{k+1} - \bar{X}_{k+1}\|^2 +   7 (4\hat{L}^2 + 4L + 1) \alpha_k^2 \sigma^2,
\end{aligned}
\end{equation}
    where $\hat{D}_f:= \max_{X \in \bar{U}_{{\rm St}(d,r)}(\frac{1}{8})} \|\nabla f(X)\|$ and we use $\mathbb{E}_k[\cdot]$ to denote an expected value taken with respect to the distribution of the random variable in the estimation of $g_k$ given $X^k$.
\end{lem}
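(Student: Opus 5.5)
The plan is to apply the quadratic upper bound \eqref{eq:qub} of Lemma \ref{lem:lip} to the two projected points $\bar{X}_k,\bar{X}_{k+1}\in{\rm St}(d,r)$, averaged over $i$:
\begin{equation*}
f(\bar{X}_{k+1}) \le f(\bar{X}_k) + \iprod{\grad f(\bar{X}_k)}{\bar{X}_{k+1}-\bar{X}_k} + \frac{L}{2}\|\bar{X}_{k+1}-\bar{X}_k\|^2 .
\end{equation*}
We then decompose the displacement of the projections as
\begin{equation*}
\bar{X}_{k+1}-\bar{X}_k = (X_{k+1}-X_k) + (X_k-\bar{X}_k) - (X_{k+1}-\bar{X}_{k+1}),
\end{equation*}
where $X_{k+1}-X_k = -\alpha_k\Pcal_{T_{X_k}}(g_k) - \mu\nabla\varphi(X_k)$.

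For the quadratic term, we square this decomposition and use $\|a+b+c\|^2\le 3(\|a\|^2+\|b\|^2+\|c\|^2)$. The piece $\|\mu\nabla\varphi(X_k)\|$ is bounded by a constant multiple of $\|X_k-\bar{X}_k\|$, since $\nabla\varphi$ vanishes on ${\rm St}(d,r)$ and is Lipschitz on the $\frac18$-tube. The piece $\alpha_k^2\|\Pcal_{T_{X_k}}(g_k)\|^2$ is handled in conditional expectation: writing $g_k=\nabla f(X_k)+\xi_k$, unbiasedness gives $\mathbb{E}_k\|\Pcal_{T_{X_k}}(g_k)\|^2 \le \|\hat{\nabla}f(X_k)\|^2 + c\,\sigma^2$, where $c$ is the squared operator norm of $\Pcal_{T_{X_k}}$ near the manifold (roughly $(1+\|X_k\|_2^2)^2$, bounded because $\|X_k\|_2\le 9/8$). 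This accounts for the $\alpha_k^2$ terms multiplying $(4\hat L^2+4L+1)$ and the $7(\cdots)\alpha_k^2\sigma^2$ term.

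For the inner product, we replace $\grad f(\bar{X}_k)$ by $\hat{\nabla}f(X_k)$, paying an error controlled by \eqref{eq:grad-lip}, namely $\hat L\|X_k-\bar{X}_k\|$. The cross term $-\alpha_k\iprod{\hat\nabla f(X_k)}{\Pcal_{T_{X_k}}(g_k)}$ has expectation $-\alpha_k\|\hat\nabla f(X_k)\|^2$, which is the source of the leading descent term. The error term $-\alpha_k\iprod{\grad f(\bar X_k)-\hat\nabla f(X_k)}{\Pcal_{T_{X_k}}(g_k)}$ is split by Young's inequality into $\hat L^2\alpha_k^2\|\Pcal_{T_{X_k}}(g_k)\|^2$ and $\|X_k-\bar X_k\|^2$ contributions.

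The remaining inner products are those of $\grad f(\bar{X}_k)$ against $-\mu\nabla\varphi(X_k)+(X_k-\bar X_k)-(X_{k+1}-\bar X_{k+1})$. This is the step I expect to be the main obstacle, since a naive Cauchy--Schwarz bound yields terms that are only first order in the feasibility error, whereas the statement needs second order. The key observation is that $X_k-\bar{X}_k$ lies in the normal space $N_{\bar X_k}{\rm St}(d,r)$. Indeed, with $X_k=USV^\top$ we have $X_k-\bar X_k=U(S-I)V^\top=\bar X_k\,V(S-I)V^\top$, which is $\bar X_k$ times a symmetric matrix. Likewise $\nabla\varphi(X_k)=US(S^2-I)V^\top=\bar X_k\,V S(S^2-I)V^\top$ is normal at $\bar X_k$. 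Both therefore have zero inner product with the tangent vector $\grad f(\bar X_k)$. For the $X_{k+1}$ term, the same identity holds at $\bar X_{k+1}$, so we write
\begin{equation*}
\iprod{\grad f(\bar X_k)}{X_{k+1}-\bar X_{k+1}} = \iprod{\grad f(\bar X_k)-\grad f(\bar X_{k+1})}{X_{k+1}-\bar X_{k+1}} + \iprod{\grad f(\bar X_{k+1})}{X_{k+1}-\bar X_{k+1}},
\end{equation*}
where the last inner product vanishes. An alternative is to bound $\|\Pcal_{T_{\bar X_k}}(X_{k+1}-\bar X_{k+1})\|$ via the curvature of the manifold, using $\|\bar X_{k+1}-\bar X_k\|$ together with the $1$-proximal smoothness and the $2$-Lipschitz projection bound. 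Either route yields products such as $\hat D_f\|\bar X_{k+1}-\bar X_k\|\,\|X_{k+1}-\bar X_{k+1}\|$. These are split by Young's inequality into $\frac12\|X_{k+1}-\bar X_{k+1}\|^2$ plus terms in $\|X_k-\bar X_k\|^2$ and $\alpha_k^2\|g_k\|^2$, which is where the constants $4\hat D_f$ and the $\frac12$ coefficient on $\|X_{k+1}-\bar X_{k+1}\|^2$ come from. Finally, we collect all terms, take $\mathbb{E}_k$, and absorb the numerical constants to match \eqref{eq:descent}.
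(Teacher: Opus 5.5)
Your skeleton is the paper's. It uses the quadratic upper bound \eqref{eq:qub} at $\bar X_k,\bar X_{k+1}$, the same three-term decomposition of $\bar X_{k+1}-\bar X_k$, and normality of $X_k-\bar X_k$ at $\bar X_k$. It also adds and subtracts $\grad f(\bar X_{k+1})$ against $X_{k+1}-\bar X_{k+1}$, and swaps $\grad f(\bar X_k)\to\hat\nabla f(X_k)$ via \eqref{eq:grad-lip} before taking $\mathbb{E}_k$.

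The genuinely different step is the penalty cross term. The paper swaps against the whole step $X_{k+1}-X_k$, so it must control $-\mu\iprod{\hat\nabla f(X_k)}{\nabla\varphi(X_k)}$. It rewrites this by self-adjointness as $-\mu\iprod{\nabla f(X_k)}{\Pcal_{T_{X_k}{\rm St}(d,r)}(\nabla\varphi(X_k))}$ and uses $\Pcal_{T_{X_k}{\rm St}(d,r)}(\nabla\varphi(X_k))=-X_k(X_k^\top X_k-I)^2$, whose norm is at most $6\|X_k-\bar X_k\|^2$. That is exactly where the $\hat D_f$ term in \eqref{eq:descent} comes from ($\frac12\cdot 4\hat D_f=6\mu\hat D_f$ with $\mu=\frac13$). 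You swap only on the $\alpha_k$-part and observe that $\nabla\varphi(X_k)=\bar X_k\,VS(S^2-I)V^\top$ lies in $N_{\bar X_k}{\rm St}(d,r)$, so $\iprod{\grad f(\bar X_k)}{\nabla\varphi(X_k)}=0$ exactly. This is correct, shorter, and sharper: the separate $\hat D_f$ contribution disappears. Your attribution of $4\hat D_f$ to the $X_{k+1}$ term is therefore mistaken, but harmlessly, since that coefficient simply becomes slack. The paper's estimate needs only that the tangential part of $\nabla\varphi$ at $X_k$ is second order; yours exploits that $\nabla\varphi(X_k)$ and $X_k-\bar X_k$ share the polar factor $\bar X_k$.

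The step that must change is the quadratic term. Inequality \eqref{eq:descent} fixes the coefficient of $\|X_{k+1}-\bar X_{k+1}\|^2$ at $\frac12$, independent of $L$. That $\frac12$ is already consumed by the Young split of $\iprod{\grad f(\bar X_k)-\grad f(\bar X_{k+1})}{X_{k+1}-\bar X_{k+1}}$. Squaring your three-term decomposition inside $\frac L2\|\bar X_{k+1}-\bar X_k\|^2$ adds $\frac{3L}{2}\|X_{k+1}-\bar X_{k+1}\|^2$. Converting that back into $\|X_{k+1}-X_k\|^2$ and $\|X_k-\bar X_k\|^2$ terms leaves $L$-dependent coefficients larger than the $4L$ available, so ``absorbing constants'' does not close.

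Use instead, as the paper does, the $2$-Lipschitz continuity of $\Pcal_{{\rm St}(d,r)}$ near the manifold, $\|\bar X_{k+1}-\bar X_k\|\le 2\|X_{k+1}-X_k\|$. Apply it both to the $L$-term (giving $2L\|X_{k+1}-X_k\|^2$) and after the Young split of the $\hat L$-term (giving $2\hat L^2\|X_{k+1}-X_k\|^2$). Like the paper, this tacitly needs $X_{k+1}$ in the tube as well.

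Two smaller points:
\begin{itemize}
\item In the Young split of $\alpha_k\iprod{\grad f(\bar X_k)-\hat\nabla f(X_k)}{\Pcal_{T_{X_k}{\rm St}(d,r)}(g_k)}$, attach $\hat L^2$ to $\|X_k-\bar X_k\|^2$, not to the $\alpha_k^2$ term. Otherwise the $\alpha_k^2$ coefficient becomes $\frac92\hat L^2+4L$, which exceeds $4\hat L^2+4L+1$ once $\hat L>\sqrt2$.
\item Keep $\|\Pcal_{T_{X_k}{\rm St}(d,r)}(g_k)\|$ instead of passing to $\|g_k\|$ (e.g.\ through Lemma \ref{lem:err-feasi}). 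The quantity $\mathbb{E}_k\|g_k\|^2$ contains the normal component of $\nabla f(X_k)$, which nothing on the right of \eqref{eq:descent} controls.
\end{itemize}
With these choices your route proves \eqref{eq:descent}. It even gives $4\hat L^2+4L+\frac12$ in place of $4\hat L^2+4L+1$, and no $\hat D_f$ in the coefficient of $\|X_k-\bar X_k\|^2$.
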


From the above lemma, the one-step decrease on $f$ is related to both the gradient norm of $f$ and the feasibility error. Regarding convergence, we need both $\grad f(X_k)$ and $\|X_k^\top X_k - I\|$ converge to 0. The following theorem shows that the retraction-free and penalty parameter-free update \eqref{eq:grad-it} converges.

\begin{thm}[Convergence of Algorithm \ref{alg:retr-free}] \label{thm}
Suppose that Assumption \ref{assum} holds. Let $\{X_k\}$ be the sequence generated by Algorithm \ref{alg:retr-free} with $\mu = \frac{1}{3}$ and $\| X_1 - \bar{X}_1 \| \; \le\; \tfrac18.$ Denote the total expectation by $\mathbb{E}[\cdot]  : = \mathbb{E}_{1}\, \mathbb{E}_2\cdots\mathbb{E}_k[\cdot].$ Then the following statements hold:

\begin{itemize}[left=0pt]
    \item {(Deterministic case)} If the full gradient is used (i.e.\ $\sigma=0$ in Assumption \ref{assum}) and the step size is kept constant as $\alpha_k \equiv \alpha \in \bigl (0, \tfrac1{2c_1}\bigr]$, for some sufficiently large $c_1>0$, then
\begin{equation*}
\label{eq:complexity-deterministic} 
\min_{k \leq K} \;\; \mathbb{E} \left[\|\hat{\nabla} f(X_k)\|^2 + \|X_k^\top X_k - I\|^2\right] = \mathcal{O}\left(\frac{1}{K}\right).
\end{equation*}
    \item {(Stochastic case)} If the step size decays as $ \alpha_k = \frac{\alpha_0}{\sqrt{k}},$ with $\alpha_0 \in \bigl (0, \tfrac1{2c_1}\bigr]$, for some sufficiently large $c_1>0$, then
\begin{equation*}
\label{eq:complexity} 
\min_{k \leq K} \;\; \mathbb{E}\left[\|\hat{\nabla} f(X_k)\|^2 + \|X_k^\top X_k - I\|^2\right]  = \mathcal{O}\left(\frac{\log K}{\sqrt{K}}\right)  
\end{equation*}
and 
\begin{equation*}
    \min_{k\leq K} \|X_k^\top X_k - I\|^2 = \mathcal{O}(\frac{\log K}{K}). 
\end{equation*}
\end{itemize}
\end{thm}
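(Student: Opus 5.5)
The plan is to combine the feasibility recursion of Lemma \ref{lem:err-feasi} with the one-step descent of Lemma \ref{lem:descent} into a single Lyapunov-type argument. Write $e_k := \|X_k - \bar{X}_k\|$.

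\textbf{Step 1: the iterates stay in the tube.} I first show by induction that $e_k \le \frac18$ for all $k$, so that both lemmas apply at every iteration. Since $X_k$ lies in the compact set $\bar U_{{\rm St}(d,r)}(\frac18)$, we have $\|\nabla f(X_k)\| \le \hat D_f$. In the deterministic case, Lemma \ref{lem:err-feasi} then gives
$$e_{k+1} \le \sqrt{2/3}\, e_k + \tfrac52 \alpha \hat D_f,$$
and $\sqrt{2/3}\cdot\frac18 + \frac52\alpha\hat D_f \le \frac18$ once $c_1$ is large. In the stochastic case $\|g_k\|$ is not almost surely bounded. I expect the paper's argument to implicitly assume bounded stochastic gradients here, or to absorb this into ``sufficiently large $c_1$'' via a bound on $\|g_k\|$. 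I would state this as the needed condition and proceed.

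\textbf{Step 2: sum the feasibility recursion.} Squaring Lemma \ref{lem:err-feasi} with Young's inequality gives, for some $\rho<1$,
$$e_{k+1}^2 \le \rho\, e_k^2 + C\alpha_k^2\|g_k\|^2 .$$
Taking expectations and using $\mathbb{E}\|g_k\|^2 \le 2\|\hat\nabla f(X_k)\|^2 + 2\|\nabla f(X_k)-\hat\nabla f(X_k)\|^2 + \sigma^2$, I bound the middle term by $\hat D_f^2$-type constants or by $e_k$. Summing over $k$ yields
$$\sum_{k\le K} e_k^2 \le \frac{e_1^2}{1-\rho} + C'\sum_{k\le K}\alpha_k^2\bigl(\mathbb{E}\|\hat\nabla f(X_k)\|^2 + \hat D_f^2 + \sigma^2\bigr).$$
A sharper route is to first solve the recursion as $e_k^2 \lesssim \rho^k + \sum_j \rho^{k-j}\alpha_j^2$. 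With $\alpha_k = \alpha_0/\sqrt{k}$ this gives $e_k^2 = \mathcal{O}(1/k)$, since the convolution with a geometric kernel preserves the $1/k$ decay. This gives the pathwise-type claim $\min_k\|X_k^\top X_k - I\|^2 = \mathcal{O}(\log K/K)$ after noting $\|X^\top X - I\| \le C e$ in the tube.

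\textbf{Step 3: telescope the descent.} Telescoping Lemma \ref{lem:descent} with $\alpha_k \le \frac{1}{2c_1}$ and $c_1 \ge 4\hat L^2 + 4L + 1$ makes the coefficient of $\|\hat\nabla f\|^2$ at least $\alpha_k/2$. Summing, using boundedness of $f$ on ${\rm St}(d,r)$, gives
$$\sum_k \frac{\alpha_k}{2}\mathbb{E}\|\hat\nabla f(X_k)\|^2 \le f(\bar X_1) - f_{\min} + C\sum_k e_k^2 + C\sigma^2\sum_k\alpha_k^2 .$$
Substituting Step 2, the $\alpha_k^2\|\hat\nabla f\|^2$ terms from the feasibility sum are absorbed into the left side when $c_1$ is large. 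This absorption, making the constants consistent, is the main technical obstacle. The remaining terms are $\sum_k \alpha_k^2 = \mathcal{O}(\log K)$ in the stochastic case; in the deterministic case $\sigma=0$ and the $\hat D_f^2\alpha^2$ term must be avoided.

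\textbf{Deterministic case.} Here I would instead bound $\|g_k\| = \|\nabla f(X_k)\|$ by $\|\hat\nabla f(X_k)\| + C e_k$. The normal component of $\nabla f$ on the manifold is not small, so this is delicate. I would use the decomposition $X_{k+1}-X_k$ with the tangent term only: revisiting Lemma \ref{lem:err-feasi}, the increment contributed to the distance is the projected gradient, with norm at most $\|\hat\nabla f\|$. This gives $e_{k+1}^2 \le \rho e_k^2 + C\alpha^2\|\hat\nabla f(X_k)\|^2$. Summation then gives $\sum_k (\alpha\|\hat\nabla f\|^2 + e_k^2) = \mathcal{O}(1)$, hence the $\mathcal{O}(1/K)$ rate for the minimum.

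\textbf{Stochastic rate.} Dividing by $\sum_k \alpha_k \asymp \sqrt K$ gives the $\mathcal{O}(\log K/\sqrt K)$ rate.
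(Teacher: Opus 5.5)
Your proposal is correct and follows essentially the paper's own route: tube invariance by induction from Lemma~\ref{lem:err-feasi}, then a geometric-sum bound on $\sum_k e_k^2$ driven by the projected increment $\alpha_k\|\Pcal_{T_{X_k}{\rm St}(d,r)}(g_k)\|$, then telescoping Lemma~\ref{lem:descent} and absorbing the $\alpha_k^2\|\hat{\nabla} f(X_k)\|^2$ terms, and finally dividing by $\sum_k\alpha_k$; the one small difference is that the paper uses the sharper projected-increment bound $\|\hat{\nabla} f(X_k)\|^2+7\sigma^2$ in both the constant and decaying step-size cases, not only the deterministic one. You are also right that, in the stochastic case, the paper's induction keeping $X_k$ in $\bar{U}_{{\rm St}(d,r)}(\frac18)$ tacitly requires $\|g_k\|\lesssim \hat D_f$ almost surely, which Assumption~\ref{assum} does not provide.
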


The proof of Theorem \ref{thm} is presented in Appendix~\ref{app:basic-proofs}.

\begin{rmk}
Compared with the landing algorithm \cite{ablin2022fast}, which targets only the squared Stiefel manifold and requires tuning both parameters $\alpha$ and $\mu$, our approach handles general Stiefel manifolds and necessitates the tuning of only $\alpha$, as established in Theorem \ref{thm}. In addition, the landing algorithm in \cite{ablin2022fast} converges only to a neighborhood whose size depends on the step size, as discussed in the paragraph following Proposition 10 of their paper. Moreover, our iteration complexity of $\mathcal{O}(1/K)$ is on par with retraction-based algorithms \cite{boumal2019global}.
\end{rmk}

\begin{rmk}
Our penalty-parameter-free analysis establishes an improved convergence rate of $\mathcal{O}\left(\frac{\log K}{K}\right)$ for constraint violation in the stochastic setting with decaying step sizes, improving upon the $\mathcal{O}\left(\frac{\log K}{\sqrt{K}}\right)$ rate reported in \cite{ablin2023infeasible}.
This improvement stems from our two-scale step-size scheme---decaying for the loss-gradient step while remaining constant for the penalty-gradient step---whereas \cite{ablin2023infeasible} employs a single-scale decaying step size for both steps, despite allowing an arbitrary but finite penalty parameter.
Furthermore, the proof of Theorem \ref{thm} relies on the linear decay of the constraint violation and the descent property of $f$, in contrast to the augmented Lagrangian-based framework of \cite{ablin2023infeasible}. We also remark that our penalty-parameter-free algorithm design and analysis are motivated by insights from multi-agent decentralized optimization \cite{nedic2009distributed,shi2015extra,qu2017harnessing,chen2021decentralized,deng2023decentralized}: the stepsize for the \emph{consensus (constraint violation)} term---which directly measures and penalizes the discrepancy across agents---is typically fixed to $1$, while only the stepsizes for the local loss-gradient updates require tuning. In this literature, the consensus error (i.e., the constraint violation) is also known to converge faster than the loss gradient in stochastic settings with decaying step sizes. Related two-time-scale ideas have also been explored in minimax optimization; see, e.g., \cite{lin2020gradient,lin2025two}.
\end{rmk}

Now, since Algorithm \ref{alg:manlora} with SGD-type update can be seen as a generalization of Algorithm \ref{alg:retr-free} to solve problems over the products of the Stiefel manifold or the Oblique manifold and the Euclidean space, we immediately have the following convergence result. The proof of Corollary \ref{coro} is given in Appendix~\ref{app:basic-proofs}.

\begin{cor}[Convergence of Algorithm \ref{alg:manlora}] \label{coro}
Suppose that the gradients $\nabla_A\mathcal{L} (BA)$ and $\nabla_B\mathcal{L} (BA)$ are Lipschitz continuous, and that $\hat g (A_k)$ and $\hat g (B_k)$ are unbiased, bounded-variance estimators of $\nabla_A\mathcal{L} (B_kA_k)$ and $\nabla_B\mathcal{L} (B_kA_k)$, respectively. Let $\{ (A_k, B_k)\}$ be the sequence generated by Algorithm \ref{alg:manlora} with $\mu = \frac{1}{3}$ and $\|B_0-\bar B_0\|\le1/8$, using an SGD-type update.
Then, if the step size is chosen as $\alpha_k = \frac{\alpha_0}{\sqrt{k}}$ with a small $\alpha_0 > 0$, we have
$$
    \begin{aligned}
    \min_{k \leq K} \; \; &\mathbb{E}\bigg[\| \nabla_A \mathcal{L} (B_kA_k)\|^2 + \|\hat{\nabla}_B \mathcal{L} (B_kA_k)\|^2 \\
    & \quad + \frac{\alpha_0}{\sqrt{k}}\|B_k^\top B_k - I\|^2\bigg] \leq \mathcal{O} \left(\frac{\log K}{\sqrt{K}}\right).
    \end{aligned}
$$
\end{cor}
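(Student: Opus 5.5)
The plan is to reduce Corollary~\ref{coro} to the argument behind Theorem~\ref{thm}. We treat the pair $Z=(A,B)$ as a point of the product space $\R^{r\times m}\times\R^{d\times r}$. The ``manifold'' $\mathcal{M}$ is taken to be $\R^{r\times m}\times{\rm St}(d,r)$, or the Oblique analogue. In the Oblique case ${\rm Ob}(d,r)$ is a product of $r$ spheres ${\rm St}(d,1)$, so every ingredient (the RSI of Lemma~\ref{lem:rsi} and the proximal smoothness) applies column by column. With the SGD-type update, Algorithm~\ref{alg:manlora} is exactly \eqref{eq:grad-it} on this product:
\begin{itemize}
\item $A$ takes a plain stochastic gradient step, since the Euclidean factor has trivial projection and no penalty;
\item $B$ takes a projected stochastic gradient step followed by the penalty step.
\end{itemize}
The penalty step $B\mapsto B-\mu\nabla R(B)$ is applied after the gradient step rather than simultaneously. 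This only changes the perturbation argument of Lemma~\ref{lem:err-feasi} by constants. Write $\tilde B=B_k-\alpha_k\hat g(B_k)$. Lemma~\ref{lem:linear-feasi} applied to $\tilde B$ gives $\|B_{k+1}-\bar B_{k+1}\|\le\sqrt{2/3}\,\|\tilde B-\bar{\tilde B}\|$. Also $\|\tilde B-\bar{\tilde B}\|\le\|B_k-\bar B_k\|+\alpha_k\|\hat g(B_k)\|$, because $\bar B_k$ is feasible. Hence an inequality of the form \eqref{eq:err-feasi} holds for $B$, provided $\alpha_k\|\hat g(B_k)\|$ is small enough to stay in the $\tfrac18$-tube. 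We guarantee this by choosing $\alpha_0$ small and bounding the gradients on the bounded region. The normalization of $R$ (whether or not it carries the factor $\tfrac14$ of $\varphi$) only rescales $\mu$, and we adopt the one matching $\varphi$.

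Next we establish the analogue of Lemma~\ref{lem:descent} for $F(A,B):=\mathcal{L}(BA)$, evaluated at $(A_k,\bar B_k)$. We need a quadratic upper bound in the joint variable: Lemma~\ref{lem:lip} for the $B$-block, and ordinary $L$-smoothness in $A$ from the assumed Lipschitz gradients. These combine into a joint smoothness constant on the feasible-in-$B$ set. Then we expand
\[
F(A_{k+1},\bar B_{k+1})-F(A_k,\bar B_k)\le\iprod{\nabla_A F}{A_{k+1}-A_k}+\iprod{\grad_B F}{\bar B_{k+1}-\bar B_k}+\tfrac{L}{2}\bigl(\|A_{k+1}-A_k\|^2+\|\bar B_{k+1}-\bar B_k\|^2\bigr).
\]
The $A$-terms give $-\alpha_k\|\nabla_A\mathcal{L}\|^2+O(\alpha_k^2)(\|\nabla_A\mathcal{L}\|^2+\sigma^2)$. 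The cross dependence of $\nabla_A F$ on $B_k$ versus $\bar B_k$ is handled by Lipschitz continuity, at the cost of a $\|B_k-\bar B_k\|^2$ term. The $B$-terms are handled verbatim as in the proof of Lemma~\ref{lem:descent}, using the 2-Lipschitz projection and the fact that the normal component of $\bar B_{k+1}-\bar B_k$ is second order. The result is
\[
\mathbb{E}_k F_{k+1}-F_k\le-(\alpha_k-c\alpha_k^2)\bigl(\|\nabla_A\mathcal{L}\|^2+\|\hat\nabla_B\mathcal{L}\|^2\bigr)+c\|B_k-\bar B_k\|^2+\tfrac12\|B_{k+1}-\bar B_{k+1}\|^2+c\alpha_k^2\sigma^2.
\]

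Finally we telescope as in Theorem~\ref{thm}. From the feasibility recursion, $e_k:=\|B_k-\bar B_k\|^2$ satisfies $e_{k+1}\le\rho e_k+C\alpha_k^2\|\hat g(B_k)\|^2$ with $\rho<1$, after Young's inequality. Summing gives $\sum_k e_k\le C'(e_1+\sum_k\alpha_k^2\mathbb{E}\|\hat g_k\|^2)$. The gradient part of this sum is absorbed into the descent terms because $\alpha_k\le\alpha_0$ is small, and the variance part gives $O(\sum\alpha_k^2\sigma^2)=O(\log K)$. The $e_k$ terms in the descent inequality, which carry no $\alpha_k$ factor, are controlled by the same sum. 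Summing the descent inequality and using that $F$ is bounded below then yields
\[
\sum_{k\le K}\alpha_k\,\mathbb{E}\Bigl[\|\nabla_A\mathcal{L}\|^2+\|\hat\nabla_B\mathcal{L}\|^2+\|B_k^\top B_k-I\|^2\Bigr]=O(\log K),
\]
using $\|B^\top B-I\|\le 3\|B-\bar B\|$ near the manifold and $\alpha_k\le\alpha_0$. The weight $\alpha_0/\sqrt k$ on the constraint term in the statement is exactly what this sum provides. Dividing by $\sum_{k\le K}\alpha_k\asymp\sqrt K$ gives the $\mathcal{O}(\log K/\sqrt K)$ bound.

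\textbf{Main obstacle.} The main difficulty is the unbounded Euclidean block $A$. Unlike $B$, the iterates $A_k$ are not confined to a compact set. The constants $\hat D_f$ and the Lipschitz constant of $\nabla_B\mathcal{L}$, which scales like $\|A\|_2$, could therefore blow up, as could the step-size safeguard that keeps $B_k$ in the $\tfrac18$-tube. I would first try to read the corollary's hypothesis as uniform Lipschitz continuity of the gradients along the iterates, together with boundedness of $\nabla_B\mathcal{L}$ on the relevant region, and carry out the in-tube induction under that reading. If this is insufficient, I would add an explicit bounded-iterates assumption on $\{A_k\}$.
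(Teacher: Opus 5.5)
Your proposal is correct and follows essentially the paper's route: treat $(A,B)$ as one variable with $\bar X=(A,\bar B)$ and $\varphi=\frac14\|B^\top B-I\|^2$, re-verify the RSI/contraction, Lipschitz, feasibility and descent lemmas blockwise, and rerun the telescoping argument of Theorem~\ref{thm}. The obstacle you flag is genuine, and the paper's sketch passes over it silently. Its constants $L,\hat L,\hat D_f$ need gradient bounds on the non-compact set $\R^{r\times m}\times\bar{U}_{{\rm St}(d,r)}(\tfrac18)$, and its in-tube induction also needs an almost-sure bound on the stochastic gradients, which a bound on the true gradients does not give. Stating these boundedness assumptions explicitly is therefore the right fix, not a weakness of your argument.
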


The proof of the above corollary is based on Theorem \ref{thm} and the geometry of the product manifolds. The original Adam method may fail to converge~\cite{reddi2018convergence}; nevertheless, Euclidean Adam-type corrections can yield convergence guarantees for Algorithm \ref{alg:manlora} with Adam-type updates, e.g., AdaShift~\cite{zhou2019adashift}; see Appendix~\ref{app:adam-proof}.

\section{Experiments}
\label{section:exp}

In this section, we present comprehensive experimental results to evaluate the performance of Manifold-LoRA, i.e., Algorithm \ref{alg:manlora} with AdamW update,  across various tasks, including natural language understanding (NLU), question answering (QA), and natural language generation (NLG). We highlight the method's advantages in terms of convergence speed, downstream performance, and memory efficiency. All experiments follow a consistent setup to ensure fair comparison.

\subsection{Baselines and Implementation Details}
We compare Manifold-LoRA with several parameter-efficient fine-tuning (PEFT) baselines, including full fine-tuning, Adapter~\cite{houlsby2019parameter}, BitFit~\cite{zaken2021bitfit}, and LoRA~\cite{hu2021lora}. Variants of the Adapter method are omitted due to similar performance trends. Note that, although retraction or projection onto the oblique manifold is inexpensive, we still adopt the retraction-free update to validate our theoretical predictions, in particular the effectiveness of the landing property.

Our implementation is based on PyTorch~\cite{paszke2019pytorch}, Huggingface Transformers~\cite{wolf-etal-2020-transformers}, and OpenDelta~\cite{hu2023opendelta}. For a fair comparison, we ensure that all methods (including Adapter and LoRA) have approximately the same number of trainable parameters by aligning the bottleneck dimensions (e.g., 16 or 32). LoRA updates are scaled by a fixed hyperparameter $\alpha$ (typically 16 or 32, as in~\cite{hu2021lora}), and AdamW~\cite{loshchilov2017decoupled} is used as the optimizer with default exponential moving average parameters $\beta_1 = 0.9$, $\beta_2 = 0.999$. All experiments are conducted on NVIDIA A800 GPUs. \footnote{
The hyperparameters used for the GLUE benchmark, question-answering tasks,
and E2E benchmark are reported in
Tables~\ref{tab:glue_hyperparams}, \ref{tab:squad-hyper},
and \ref{tab:e2e-hyper}, respectively.
} To make a fair comparison, all hyperparameters such as batch size and learning rate scheduler, remain the same across experiments, except the additional parameters introduced by the Manifold-LoRA.

\subsection{Natural Language Understanding}
We evaluate Manifold-LoRA using the DeBERTaV3-base model~\cite{he2021debertav3} on the GLUE benchmark~\cite{wang2018glue}, which contains nine subdatasets including MNLI, SST-2, CoLA, QQP, QNLI, RTE, MRPC, and STS-B. For all GLUE tasks, we employ a consistent training configuration: a warmup ratio of 0.06, linear learning rate scheduling, a maximum sequence length of 256, weight decay set to 0.1, and a batch size of 32. The LoRA modules are applied to the query and value projection matrices ($W_q$ and $W_v$).

\paragraph{Performance Comparison}
Table~\ref{tab:glue} presents the GLUE results. Manifold-LoRA consistently outperforms LoRA and other baselines across most tasks. In particular, on the RTE and STS-B datasets, both sphere-constrained and Stiefel-constrained variants of Manifold-LoRA with rank $r=8$ outperform LoRA with $r=16$, indicating superior memory efficiency under equal memory budgets. Overall, the proposed algorithm achieves the best average performance across all methods, with the Oblique-constrained variant under rank $r=16$ attaining the highest average score of 88.63\%. Notably, even with a reduced rank of $r=8$, both the Stiefel- and Oblique-constrained Manifold-LoRA variants outperform the LoRA baseline with $r=16$, demonstrating superior parameter efficiency.

\paragraph{Convergence speed}
To further assess the convergence speed of our proposed algorithm, we compare the training loss trajectories throughout the optimization process. As shown in Figure~\ref{fig:glue-images}, Manifold-LoRA reaches the same training loss as the standard Adam optimizer in nearly half the number of epochs. In particular, on the CoLA dataset (Figure~\ref{fig:sub-1}), it exhibits almost 2× faster convergence.

We also track the evolution of validation metrics during training. As illustrated in Figure~\ref{appen-exp-fig}, Manifold-LoRA consistently outperforms vanilla LoRA. Notably, on the STS-B dataset (Figure~\ref{fig:sub-6}), our method achieves a significantly larger performance margin. For the CoLA and QQP datasets, Manifold-LoRA shows slight improvements over LoRA, demonstrating steady performance gains across tasks.

\paragraph{Stability across random seeds}
Results are averaged over five random seeds, with shaded areas indicating variance. As shown in Figure~\ref{appen-exp-fig}, Manifold-LoRA exhibits smaller variance compared to LoRA, confirming its stability. 

The focus of this work is to use manifold geometry to accelerate LoRA fine-tuning. A systematic study of how such geometric constraints affect generalization is beyond the scope of this paper and is left for future work.

\subsection{Question Answering}
We further fine-tune DeBERTaV3-base on SQuAD v1.1~\cite{rajpurkar2016squad} and SQuADv2.0~\cite{rajpurkar2018know} using Manifold-LoRA. The evaluation primarily focuses on F1 score and exact match accuracy to assess both the completeness and preciseness of predicted answers. For both SQuADv1.1 and SQuADv2.0, we adopt a consistent training setup: a warmup ratio of 0.06, a linear learning rate schedule, a weight decay of 0.1, and a batch size of 64. The learning rate is set to 3e-3, and training is conducted for 4 epochs. Following the LoRA framework, all low-rank modules are inserted into $W_q$, $W_k$, $W_v$, $W_o$, $FC_1$, and $FC_2$.

\paragraph{Performance comparison}
Table \ref{tab:squad}  summarizes experimental results when we fine-tune DeBERTaV3-base under different rank settings. For Adapter, we set the rank to 16 and 32, and for LoRA and our method, the rank is set to 8 and 16. It can be observed that Manifold-LoRA achieves superior performance with fewer trainable parameters. For instance, with the Stiefel constraint, Manifold-LoRA achieves an F1 score of 89.22 and an Exact Match score of 86.41 on the SQuADv2.0 dataset, significantly surpassing all other baselines.

\paragraph{Training efficiency and manifold consistency}
We also evaluate training loss, validation Exact Match, and validation F1 score across training epochs on the SQuADv2.0 dataset, as shown in Figure~\ref{fig:squadv2}. Across the three metrics, Manifold-LoRA exhibits a significantly faster convergence speed, nearly twice that of LoRA, demonstrating a faster loss reduction and a faster improvement in evaluation performance. Additionally, we observe that the models with Stiefel and Oblique constraints follow similar trends in both the loss and evaluation metrics, suggesting that both geometric constraints can achieve comparable effectiveness on question answering tasks.

To further verify whether the learned low-rank matrix $B$ effectively lies on the target manifold, we visualize its structure by plotting the heatmaps of $B^\top B$ in Figure~\ref{fig:squadv2-heatmap}. Specifically, we extract the checkpoint from the DeBERTa-base model after the second training epoch on the SQuADv2.0 dataset, and select several representative layers from the second and third transformer blocks for analysis. The resulting heatmaps reveal that $B^\top B$ closely adheres to the expected manifold geometry,  validating the effectiveness of our manifold-constrained design. 

\subsection{Natural Language Generation}
Having demonstrated superior performance on natural language understanding and question answering tasks, we further extend our experiments to evaluate the effectiveness of our proposed method on a natural language generation task. Specifically, we conduct experiments on the E2E NLG Challenge dataset~\cite{novikova2017e2e}, using GPT-2 Medium and GPT-2 Large as backbone models. The E2E dataset comprises approximately 50K examples with 8 distinct semantic fields. It provides multiple reference outputs for each input table, with an average output length of 22.9 tokens. For both GPT-2 Medium and Large models on the E2E benchmark, we use a linear learning rate schedule with 500 warmup steps, a weight decay of 0.01, and no LoRA dropout. The models are trained for 5 epochs with a batch size of 8 and a learning rate of 2e-4.
\paragraph{Performance comparison} 
To ensure a fair comparison, we follow the same hyperparameter settings as in the original LoRA paper, except for the additional parameters introduced by Manifold-LoRA. The numerical results are summarized in Table~\ref{tab:e2e}. It is evident that Manifold-LoRA achieves superior performance across all five metrics with limited trainable parameters.

\subsection{Scaling Experiments}
\paragraph{Experimental Settings}
We largely follow the experimental protocol of prior work on LoRA-based fine-tuning with manifold-constrained optimization to ensure a fair and consistent comparison \cite{park2025riemannian}. In particular, we adopt the same model architectures (LLaMA3.2-1B/3B and LLaMA3-8B \cite{dubey2024llama}), downstream benchmarks (SQuAD, QuAC, GSM8K \cite{cobbe2021training}, and MATH), and evaluation metrics as in the reference study. For all experiments, LoRA is applied to the same set of layers with identical rank and scaling configurations, and models are fine-tuned under the same training budget. 

\paragraph{Reading Comprehension}
Table \ref{tab:rc_results} reports results on SQuAD and QuAC when scaling our method from 1B to 8B models, where the method ``Stiefel'' is from \cite{park2025riemannian}. Across all model sizes, Manifold-LoRA consistently outperforms other baselines on both datasets. Notably, the performance advantage on SQuAD and QuAC is maintained as the model size increases, indicating that our optimization approach scales well for extractive reading comprehension. On QuAC, which emphasizes semantic understanding and contextual coherence across conversational turns, the oblique manifold-constrained optimization demonstrates more consistent gains than Stiefel manifold-based alternatives.

\paragraph{Mathematical Reasoning}

Results on GSM8K and MATH are summarized in Table \ref{tab:math_results}. Stiefel-based Manifold-LoRA yields consistent improvements over AdamW across all model scales, with particularly strong gains on the more challenging MATH benchmark. The improvements become more pronounced for larger models, especially LLaMA3-8B, indicating that Stiefel manifold-constrained optimization is effective at preserving and enhancing complex mathematical reasoning capabilities as model capacity increases.
\begin{table}[!t]
\centering
\scriptsize
\setlength{\tabcolsep}{3.2pt}
\caption{Reading comprehension results on SQuAD and QuAC (F1/EM).}
\label{tab:rc_results}
\begin{tabular*}{\columnwidth}{@{\extracolsep{\fill}}l l l c c@{}}
\toprule
\textbf{Model} & \textbf{Method} & \textbf{Opt.} & \textbf{SQuAD} & \textbf{QuAC} \\
\midrule
\multirow{4}{*}{LLaMA3.2-1B} & \multirow{4}{*}{LoRA} & Stiefel & 67.9/55.7 & 50.4 \\
            &      & AdamW   & 64.1/51.5 & 45.9 \\
            &      & \textbf{Manifold-LoRA (Stiefel)} & 69.4/56.9 & 51.8 \\
            &      & \textbf{Manifold-LoRA (Sphere)} & \textbf{70.6/57.4} & \textbf{52.0} \\
\midrule
\multirow{4}{*}{LLaMA3.2-3B} & \multirow{4}{*}{LoRA} & Stiefel & 80.3/72.1 & 61.8 \\
            &      & AdamW   & 78.6/67.4 & 57.5 \\
            &      & \textbf{Manifold-LoRA (Stiefel)} & \textbf{81.7/72.8} & \textbf{62.3} \\
            &      & \textbf{Manifold-LoRA (Sphere)} & 80.0/71.9 & 61.5 \\
\midrule
\multirow{4}{*}{LLaMA3-8B}   & \multirow{4}{*}{LoRA} & Stiefel & 88.1/79.7 & 69.7 \\
            &      & AdamW   & 84.3/74.6 & 65.8 \\
            &      & \textbf{Manifold-LoRA (Stiefel)} & 89.8/81.2 & 70.8 \\
            &      & \textbf{Manifold-LoRA (Sphere)} & \textbf{90.4/82.0} & \textbf{71.3} \\
\bottomrule
\end{tabular*}
\end{table}

\begin{table}[!t]
\centering
\scriptsize
\setlength{\tabcolsep}{3.2pt}
\caption{Mathematical reasoning accuracy on GSM8K and MATH.}
\label{tab:math_results}
\begin{tabular*}{\columnwidth}{@{\extracolsep{\fill}}l l l c c@{}}
\toprule
\textbf{Model} & \textbf{Method} & \textbf{Opt.} & \textbf{GSM8K} & \textbf{MATH} \\
\midrule
\multirow{4}{*}{LLaMA3.2-1B} & \multirow{4}{*}{LoRA} & Stiefel & 35.4 & 26.5 \\
            &      & AdamW   & 20.5 & 21.4 \\
            &      & \textbf{Manifold-LoRA (Stiefel)} & \textbf{37.2} & \textbf{27.9} \\
            &      & \textbf{Manifold-LoRA (Sphere)} & 34.9 & 26.3 \\
\midrule
\multirow{4}{*}{LLaMA3.2-3B} & \multirow{4}{*}{LoRA} & Stiefel & 43.4 & 33.5 \\
            &      & AdamW   & 29.1 & 27.7 \\
            &      & \textbf{Manifold-LoRA (Stiefel)} & \textbf{46.5} & \textbf{35.7} \\
            &      & \textbf{Manifold-LoRA (Sphere)} & 44.5 & 34.8 \\
\midrule
\multirow{4}{*}{LLaMA3-8B}    & \multirow{4}{*}{LoRA} & Stiefel & 58.8 & 22.5 \\
            &      & AdamW   & 54.7 & 19.3 \\
            &      & \textbf{Manifold-LoRA (Stiefel)} & \textbf{60.5} & \textbf{23.8} \\
            &      & \textbf{Manifold-LoRA (Sphere)} & 57.6 & 21.9 \\
\bottomrule
\end{tabular*}
\end{table}

\begin{table}[!t]
  \scriptsize
  \centering
  \setlength{\tabcolsep}{3pt}
  \caption{Manifold-specific hyperparameters of Manifold-LoRA for the E2E benchmark.}
  \label{tab:e2e-hyper}
  \begin{tabular*}{\columnwidth}{@{\extracolsep{\fill}}l l l l@{}}
    \toprule
    Method & Hyperparameter & GPT-2(M) & GPT-2(L)  \\
    \midrule
    Sphere  & $\mu$   & 1   & 0.9 \\
    ($r=4$) & Lower   & 0.5 & 0.5 \\
            & Upper   & 2   & 2 \\
    \midrule
    Stiefel  & $\mu$   & 1   & 1.1 \\
    ($r=4$)  & Lower   & 0.5 & 0.5 \\
             & Upper   & 4   & 2 \\
    \bottomrule
  \end{tabular*}
\end{table}

\begin{table}[!t]
  \scriptsize
  \centering
  \setlength{\tabcolsep}{3pt}
  \caption{Manifold-specific hyperparameters of Manifold-LoRA for question answering tasks.}
  \label{tab:squad-hyper}
  \begin{tabular*}{\columnwidth}{@{\extracolsep{\fill}}l l l l@{}}
    \toprule
    Method & Hyperparameter & SQuADv1.1 & SQuADv2.0  \\
    \midrule
    Sphere ($r=8$)  & $\mu$   & 0.85 & 0.85 \\
                    & Lower   & 0.25 & 0.25 \\
                    & Upper   & 0.75 & 0.5 \\
    \midrule
    Sphere ($r=16$) & $\mu$   & 0.9 & 0.85 \\
                    & Lower   & 0.25 & 0.25 \\
                    & Upper   & 0.5 & 0.5 \\
    \midrule
    Stiefel ($r=8$) & $\mu$   & 0.85 & 0.85 \\
                    & Lower   & 0.25 & 0.25 \\
                    & Upper   & 0.5 & 0.5 \\
    \midrule
    Stiefel ($r=16$)& $\mu$   & 0.9 & 0.85 \\
                    & Lower   & 0.25 & 0.25 \\
                    & Upper   & 0.5 & 0.5 \\
    \bottomrule
  \end{tabular*}
\end{table}

\begin{table*}
\centering
\footnotesize
\caption{Hyperparameter configurations of Manifold-LoRA for GLUE benchmark.}
\label{tab:glue_hyperparams}
\resizebox{\widecompactwidth}{!}{
\begin{tabular}{llcccccccc}
\toprule
Method & Hyperparameter & MNLI & SST-2 & CoLA & QQP & QNLI & RTE & MRPC & STS-B \\
\midrule
\multirow{2}{*}{Baseline} & Epochs        & 7 & 24 & 25 & 5 & 5 & 50 & 30 & 25 \\
                          & Learning Rate & 5e-4 & 8e-4 & 5e-4 & 5e-4 & 1.2e-3 & 1.2e-3 & 1e-3 & 2.2e-3 \\
\midrule
Sphere ($r=16$) & $\mu$    & 1 & 0.9 & 0.8 & 0.9 & 0.95 & 1.2 & 0.85 & 0.9 \\
                & Lower    & 0.25 & 0.25 & 0.5 & 0.5 & 0.5 & 0.5 & 1 & 1 \\
                & Upper    & 2 & 2 & 2 & 4 & 2 & 2 & 4 & 4 \\
\midrule
Sphere ($r=8$)  & $\mu$    & 0.95 & 0.95 & 1 & 0.9 & 1 & 0.9 & 0.85 & 1 \\
                & Lower    & 2 & 0.5 & 1 & 0.5 & 0.5 & 0.25 & 2 & 1 \\
                & Upper    & 8 & 2 & 8 & 2 & 2 & 0.5 & 4 & 8 \\
\midrule
Stiefel ($r=16$)& $\mu$    & 0.8 & 0.85 & 0.95 & 0.9 & 0.95 & 1.2 & 0.8 & 1 \\
                & Lower    & 2 & 0.5 & 2 & 0.5 & 0.5 & 0.5 & 1 & 1 \\
                & Upper    & 8 & 1 & 8 & 4 & 1 & 2 & 4 & 16 \\
\midrule
Stiefel ($r=8$) & $\mu$    & 0.8 & 0.95 & 0.95 & 0.9 & 0.85 & 0.9 & 1 & 1 \\
                & Lower    & 2 & 0.5 & 2 & 0.5 & 0.5 & 0.25 & 1 & 1 \\
                & Upper    & 8 & 2 & 8 & 2 & 2 & 1 & 4 & 16 \\
\bottomrule
\end{tabular}
}
\end{table*}

\begin{table*}
  \centering
  \footnotesize
  \caption{We present results using DeBERTaV3-base on the GLUE benchmark. For MNLI, we report the accuracy (combining matched and mismatched sets), with the left panel representing matched subset and the right panel representing mismatched subset. For CoLA, we report Matthew’s correlation, and for STS-B, we report Pearson correlation. For all other tasks, we report accuracy. All metrics are same as the original LoRA paper \cite{hu2021lora}. Higher values are better for all metrics. The best results are highlighted in \textbf{bold}.}
  \label{tab:glue}
  \resizebox{\widecompactwidth}{!}{
  \begin{tabular}{
  l
  c
  S[table-format=2.2]@{/}S[table-format=2.2]
  S[table-format=2.2]
  S[table-format=2.2]
  S[table-format=2.2]@{/}S[table-format=2.2]
  S[table-format=2.2]
  S[table-format=2.2]
  S[table-format=2.2]
  S[table-format=2.2]
  S[table-format=2.2]
}
   \toprule
    {Method} & { \# Params} & \multicolumn{2}{c}{MNLI} & {SST-2} & {CoLA} & \multicolumn{2}{c}{QQP} & {QNLI} & {RTE} & {MRPC} & {STS-B} & {All} \\
    {} & {} & \multicolumn{2}{c}{Acc} & {Acc} & {Mcc} & {Acc} & {F1} & {Acc} & {Acc} & {Acc} & {Corr} & {Ave.} \\
    \midrule
    \small{Full FT} &  184.42M & 90.45 & \textbf{90.60} & 95.48 & 68.17 & \textbf{91.99} & \textbf{89.12} & 93.60 & 79.28 & 88.93 & 90.92 & 87.85 \\
    \small{Adapter} & 0.61M & 90.13 & 90.16 & 94.86 & 69.37 & 91.38 & 88.46 & 93.54  & 81.87 & 89.12 & 91.52 & 88.06\\
    \small{BitFit} & 0.06M & 87.08 & 86.39 & 94.88 & 69.11 & 87.96 & 84.35 & 92.19 & 76.52 & 87.06 & 90.96 & 85.65 \\
    \small{LoRA$_{r=8}$} & 0.30M & 90.20 & 90.08 & 94.93 & 68.14 & 90.78 & 87.68 & 93.85 & 80.15 & 90.40 & 90.29 & 87.60 \\
    \small{LoRA$_{r=16}$} & 0.59M & 90.44 & 90.12 & 95.41 & 68.19 & 90.92 & 87.77 & 94.00 & 80.58 & 90.20 & 90.34 & 87.74\\
    \small{Sphere$_{r=8}$} & 0.30M & 90.37 & 90.09 & 95.48 & 69.55 & 91.25 & 88.34 & 94.02 & 82.44 & 91.55 & 91.26 & 88.44 \\
    \small{Sphere$_{r=16}$} & 0.59M & \textbf{90.52} & 90.19 & 95.64 & \textbf{70.14} & 91.46 & 88.65 & \textbf{94.29} & 82.16 & \textbf{91.67} & \textbf{91.59} & \textbf{88.63} \\
    \small{Stiefel$_{r=8}$} & 0.30M & 90.25 & 89.99 & 95.46 & 69.85 & 91.44 & 88.60 & 94.09 & \textbf{83.16} & 91.18 & 91.22 & 88.52\\
    \small{Stiefel$_{r=16}$} & 0.59M & 90.26 & 90.28 & \textbf{95.76} & 68.92 & 91.71 & 89.00 & 94.10 & 82.16 & 91.10 & 91.51 & 88.48 \\
    \hline
\end{tabular}
}
\end{table*}

\begin{table*}[!t]
\centering
\footnotesize
\caption{Results with DeBERTaV3-base on SQuAD v1.1 and SQuADv2.0. We report F1 and Exact Match (EM). The best results in each setting are shown in \textbf{bold}.}
\resizebox{\smallresultwidth}{!}{
\begin{tabular}{
    l
    c
    c
    S[table-format=2.2]
    S[table-format=2.2]
    S[table-format=2.2]
    S[table-format=2.2]
}
\toprule
Methods & {\# Params} & {Rank} & \multicolumn{2}{c}{SQuADv1.1} & \multicolumn{2}{c}{SQuADv2.0} \\
\cmidrule(lr){4-5} \cmidrule(lr){6-7}
& & & {F1 Score} & {Exact Match} & {F1 Score} & {Exact Match} \\
\midrule
Full FT     & 184.42M & -- & 92.85 & 86.30 & 87.58 & 84.30 \\ 
Adapter     & 0.61M   & 16 & 93.41 & 87.46 & 88.23 & 85.30 \\
Adapter     & 1.22M   & 32 & 93.51 & 87.53 & 88.36 & 85.42 \\
Bitfit      & 0.07M   & -- & 88.79 & 80.26 & 87.19 & 74.21 \\
LoRA        & 1.33M   & 8  & 93.88 & 87.90 & 88.52 & 85.56 \\
LoRA        & 2.65M   & 16 & 93.75 & 87.94 & 88.81 & 85.90 \\
Sphere      & 1.33M   & 8  & \textbf{94.25} & 88.51 & 89.20 & 86.33 \\
Sphere      & 2.65M   & 16 & 94.03 & 88.32 & 89.03 & 86.15 \\
Stiefel     & 1.33M   & 8  & 94.23 & \textbf{88.68} & 89.09 & 86.35 \\
Stiefel     & 2.65M   & 16 & 94.04 & 88.25 & \textbf{89.22} & \textbf{86.41} \\
\bottomrule
\end{tabular}
}
\label{tab:squad}
\end{table*}

\begin{figure*}[!t]
\small
\centering
\begin{subfigure}[t]{\mediumfloatwidth}
    \centering
    \includegraphics[width=\linewidth]
    {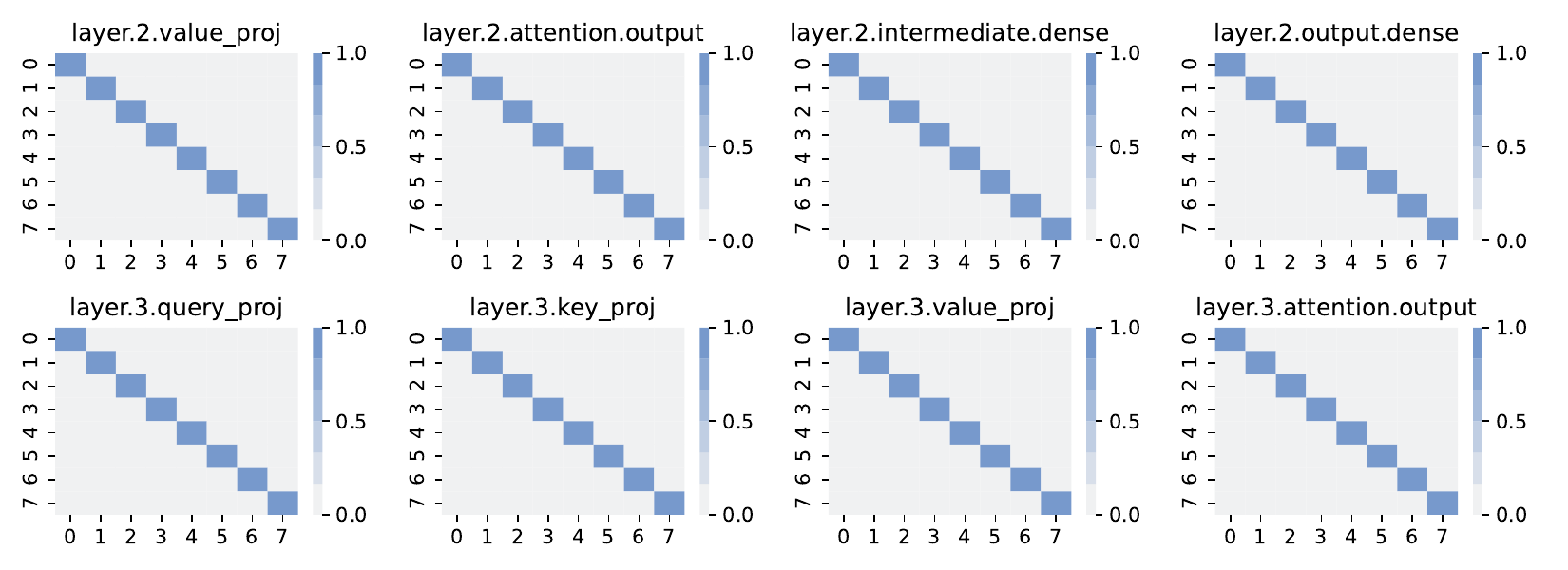}
    
    \label{fig:squad2-Stiefel}
\end{subfigure}

\begin{subfigure}[t]{\mediumfloatwidth}
    \centering
    \includegraphics[width=\linewidth]
    {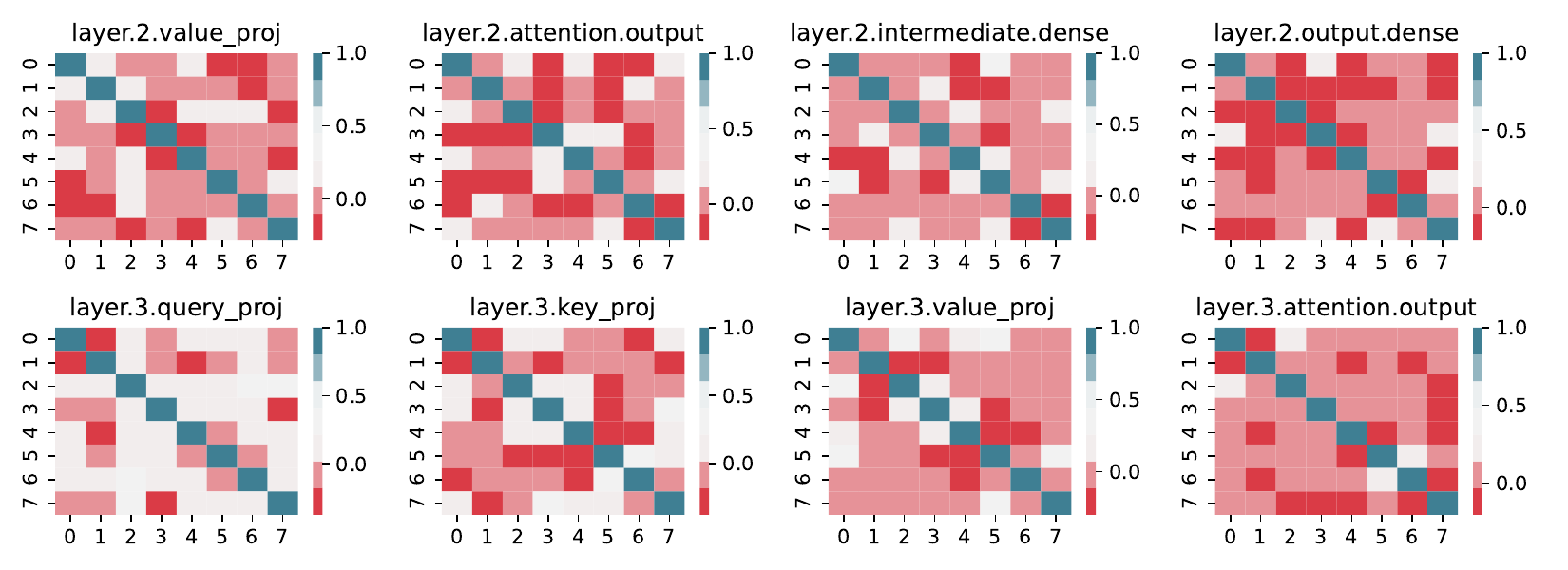}
    
    \label{fig:squad2-sphere}
\end{subfigure}
\caption{ The heat map of $B^\top B$ with the Stiefel manifold (the first and second rows) and the Oblique manifold (the third and fourth rows) at the end  of training on SQuADv2.0 dataset.}
\label{fig:squadv2-heatmap}
\end{figure*}

\begin{table*}[!t]
\footnotesize
\centering
\caption{GPT-2 medium (M) and large (L) models evaluated on the E2E NLG Challenge. * denotes results from previously published works.}
\label{tab:e2e}
\resizebox{\smallresultwidth}{!}{
\begin{tabular}{
    l
    c
    S[table-format=3.2]
    c
    S[table-format=3.2]
    S[table-format=3.2]
    c
}
\toprule
Model & Parameters & {BLEU} & {NIST} & {MET} & {ROUGE-L} & {CIDEr} \\
\midrule
GPT-2 M (FT)\textsuperscript{*} & 354.92M & 68.2 & 8.62 & 46.2 & 71.0 & 2.47 \\
GPT-2 M (Adapter\textsuperscript{L})\textsuperscript{*} & 11.09M & 68.9 & 8.71 & 46.1 & 71.3 & 2.47 \\
GPT-2 M (Adapter\textsuperscript{H})\textsuperscript{*} & 11.09M & 67.3 & 8.50 & 46.0 & 70.7 & 2.44 \\
GPT-2 M (FT\textsuperscript{Top2})\textsuperscript{*} & 25.19M & 68.1 & 8.59 & 46.0 & 70.8 & 2.41 \\
GPT-2 M (PreLayer)\textsuperscript{*} & 0.35M & 69.7 & 8.81 & 46.1 & 71.4 & 2.49 \\
GPT-2 M (LoRA) & 0.35M & 68.9 & 8.69 & 46.5 & 71.5 & 2.51 \\
GPT-2 M (Stiefel) & 0.35M & 70.1 & 8.82 & \textbf{46.8} & \textbf{71.7} & \textbf{2.53} \\
GPT-2 M (Sphere) & 0.35M & \textbf{70.3} & \textbf{8.83} & 46.7 & \textbf{71.7} & 2.52 \\
\midrule
GPT-2 L (FT)\textsuperscript{*} & 774.03M & 68.5 & 8.78 & 46.0 & 69.9 & 2.45 \\
GPT-2 L (Adapter\textsuperscript{L})\textsuperscript{*} & 23.00M & 68.9 & 8.70 & 46.1 & 71.3 & 2.45 \\
GPT-2 L (PreLayer)\textsuperscript{*} & 0.77M & 70.3 & 8.85 & 46.2 & 71.7 & 2.47 \\
GPT-2 L (LoRA) & 0.77M & 70.1 & 8.82 & 46.7 & 72.0 & 2.53 \\
GPT-2 L (Stiefel) & 0.77M & 70.4 & 8.86 & \textbf{46.8} & 72.1 & 2.53 \\
GPT-2 L (Sphere) & 0.77M & \textbf{70.9} & \textbf{8.92} & \textbf{46.8} & \textbf{72.5} & \textbf{2.55} \\
\bottomrule
\end{tabular}
}
\end{table*}

\section{Conclusion}
\label{section:conlusion}
Optimization over the Stiefel manifold has been widely used in machine learning tasks. 
In this work, we develop a retraction-free and penalty parameter-free gradient method, and prove that the generated iterates eventually land on the manifold and achieve the optimality simultaneously. Moreover, our convergence theory enables the use of a constant step size, improving on previous results that only ensured convergence to a neighborhood. We then apply this landing theory to avoid the possible redundancy of LoRA fine-tuning in LLMs. Specifically, we reformulate the LoRA fine-tuning as an optimization problem over the product of the Stiefel or Oblique manifold and Euclidean space, and propose a new algorithm, Manifold-LoRA, which incorporates a careful analysis of step sizes to enable fast training using the landing properties. Extensive experimental results demonstrate that our approach not only accelerates the training process but also yields significant performance improvements.

Our study suggests several potential directions for future research. Although the established landing theory focuses on the Stiefel manifold, e.g., the generalized Stiefel manifold \cite{vary2024optimization}, extending this theory to general manifolds, 
is one potential direction. 
Additionally, evaluating the performance of Manifold-LoRA on LLMs with billions of parameters would be valuable. A systematic understanding of the generalization behavior induced by manifold-constrained adaptation is also an interesting direction for future research. Due to the heterogeneity of different layers, incorporating adaptive ranks for $\Delta W$ across different layers is another possible direction. 
This may be achievable by adding sparsity regularization to the coordinate matrix $A$. 

\section*{Acknowledgement} We thank the Associate Editor and the two reviewers for their constructive comments and suggestions, which have substantially improved the manuscript.

{\footnotesize
\bibliographystyle{IEEEtran}
\bibliography{ref}
}

\clearpage
\onecolumn
\appendices
\numberwithin{equation}{section}
\numberwithin{table}{section}
\numberwithin{thm}{section}
\numberwithin{lem}{section}
\numberwithin{cor}{section}
\numberwithin{prop}{section}
\numberwithin{assum}{section}
\numberwithin{cond}{section}
\numberwithin{rmk}{section}

\section{Proofs for the Retraction-Free Convergence Analysis}
\label{app:basic-proofs}
This appendix provides the detailed arguments supporting the convergence results in Section~\ref{sec:con}. Throughout, $\bar X$ denotes a closest point to $X$ on the Stiefel manifold.

\subsection{Proof of Lemma \ref{lem:rsi}}

\begin{proof}%[Proof of Lemma \ref{lem:rsi}]
Let the singular-value decomposition of $X$ be $ X = U S V^\top,$ and write $\bar X = U V^\top$.  Then the distance from $X$ to the Stiefel manifold is
$$
    \mathrm{dist}(X,\mathrm{St}(d,r)) \;=\;\|X-\bar X\|
    \;=\;\|s - \mathbf1\|_2,
$$
where $s = \operatorname{diag}(S)$ is the vector of singular values of $X$. Under the assumption $\|X-\bar X\|\le\frac18$, each singular value satisfies
$$
    \frac78 \;\le\; s_i \;\le\; \frac98
    \quad\text{for all }i.
$$
We now compute the inner product between the gradient of
$\varphi(X)=\tfrac14\|X^\top X - I\|_F^2$ and the deviation $X-\bar X$.  First observe
$$
    \nabla \varphi(X)
    = X\bigl(X^\top X - I\bigr),
$$
so
$$
    \bigl\langle\nabla\varphi(X),\,X-\bar X\bigr\rangle
    = \bigl\langle X(X^\top X - I),\,X-\bar X\bigr\rangle.
$$
Substitute $X=USV^\top$ and $\bar X=UV^\top$, then
$$
    \begin{aligned}
    \bigl\langle\nabla\varphi(X),\,X-\bar X\bigr\rangle
    &= \bigl\langle U S V^\top\bigl(V S^2 V^\top - I\bigr),\,U S V^\top - U V^\top\bigr\rangle\\
    &= \bigl\langle U\,(S^3 - S)\,V^\top,\;U\,(S - I)\,V^\top\bigr\rangle
    = \mathrm{tr}\bigl((S^3 - S)(S - I)\bigr).
    \end{aligned}
$$
Since $(S^3 - S)(S - I)$ is diagonal, its trace is simply
$$
    \sum_i \bigl(s_i^3 - s_i\bigr)(s_i - 1)
    = \sum_i s_i(s_i+1)(s_i-1)^2.
$$
Because each $s_i\ge7/8$, one checks that
$$
    s_i(s_i+1)\;\ge\;\frac78\cdot\frac{15}{8}
    =\frac{105}{64}>\frac32,
$$
and hence 
$$
    \bigl\langle\nabla\varphi (X), \, X-\bar X\bigr\rangle
    = \sum_i s_i (s_i+1) (s_i-1)^2
    \; \ge\; \frac32\sum_i (s_i-1)^2
    =\frac32\|s-\mathbf1\|_2^2
    =\frac32\|X-\bar X\|^2.
$$
This completes the proof.
\end{proof}

\subsection{Proof of Lemma \ref{lem:linear-feasi}}

\begin{proof}%[Proof of Lemma \ref{lem:linear-feasi}]
Let the singular-value decomposition of the penalized iterate $X_k^{\rm pen}$ be $X_k^{\rm pen} = U_k \, S_k\, V_k^\top,$ and write $\overline{ X_k^{\rm pen}} = U_k V_k^\top$ for its closest point on the Stiefel manifold. Since
$$
    \nabla\varphi (X_k^{\rm pen})
    = X_k^{\rm pen}\bigl ( (X_k^{\rm pen})^\top X_k^{\rm pen} - I\bigr)
    = U_k\, (S_k^3 - S_k)\, V_k^\top,
$$
its squared Frobenius norm is
$$
    \|\nabla\varphi (X_k^{\rm pen})\|_F^2
    = \mathrm{tr}\bigl ( (S_k^3 - S_k)^2\bigr)
    = \sum_i\bigl (s_i^3 - s_i\bigr)^2
    = \sum_i s_i^2\, (s_i+1)^2\, (s_i-1)^2.
$$
Under the same small-deviation assumption $\|X_k^{\rm pen}-\overline{ X_k^{\rm pen}}\|\le\frac18$, each singular value $s_i$ lies in $[\tfrac78, \tfrac98]$, so
$$
    s_i^2\, (s_i+1)^2
    \; \le\; \Bigl (\tfrac98\Bigr)^2\Bigl (\tfrac98+1\Bigr)^2
    \; <\; 6.
$$
Hence
$$
    \|\nabla\varphi (X_k^{\rm pen})\|_F^2
    \; \le\; 6\sum_i (s_i-1)^2
    \; =\; 6\, \|X_k^{\rm pen}-\overline{ X_k^{\rm pen}}\|_F^2.
$$
Next, we use the fact that projecting onto the Stiefel manifold cannot increase distance:
$$
    \|X_{k+1}^{\rm pen}-\overline{X_{k+1}^{\rm pen}}\|^2
    \; \le\; \bigl\|X_{k+1}^{\rm pen}-\overline{ X_k^{\rm pen}}\bigr\|^2.
$$
By the gradient-step update
$$
    X_{k+1}^{\rm pen}
    = X_k^{\rm pen} - \tfrac13\, \nabla\varphi (X_k^{\rm pen}),
$$
we expand
$$
    \begin{aligned}
    \bigl\|X_{k+1}^{\rm pen}-\overline{ X_k^{\rm pen}}\bigr\|^2
    &= \Bigl\|X_k^{\rm pen}-\tfrac13\nabla\varphi (X_k^{\rm pen})-\overline{ X_k^{\rm pen}}\Bigr\|^2\\
    &= \|X_k^{\rm pen}-\overline{ X_k^{\rm pen}}\|^2
    - \tfrac{2}{3}\bigl\langle X_k^{\rm pen}-\overline{ X_k^{\rm pen}}, \, \nabla\varphi (X_k^{\rm pen})\bigr\rangle
    + \tfrac{1}{9}\|\nabla\varphi (X_k^{\rm pen})\|^2.
    \end{aligned}
$$
Now apply the two key inequalities we proved earlier:
\begin{enumerate}
    \item From Lemma \ref{lem:rsi}, $\langle\nabla\varphi (X_k^{\rm pen}), \, X_k^{\rm pen}-\overline{ X_k^{\rm pen}}\rangle \ge \tfrac32\|X_k^{\rm pen}-\overline{ X_k^{\rm pen}}\|^2$.
   \item From the singular-value bound, $\|\nabla\varphi (X_k^{\rm pen})\|^2\le6\|X_k^{\rm pen}-\overline{ X_k^{\rm pen}}\|^2$.

\end{enumerate}
Substituting gives
$$
    \begin{aligned}
    \|X_{k+1}^{\rm pen}-\overline{X_{k+1}^{\rm pen}}\|^2
    &\le \|X_k^{\rm pen}-\overline{ X_k^{\rm pen}}\|^2
    - \tfrac{2}{3}\cdot\tfrac{3}{2}\, \|X_k^{\rm pen}-\overline{ X_k^{\rm pen}}\|^2
    + \tfrac{1}{9}\cdot6\, \|X_k^{\rm pen}-\overline{ X_k^{\rm pen}}\|^2\\
    &= \bigl (1 - 1 + \tfrac{6}{9}\bigr)\|X_k^{\rm pen}-\overline{ X_k^{\rm pen}}\|^2
    = \tfrac23\, \|X_k^{\rm pen}-\overline{ X_k^{\rm pen}}\|^2.
    \end{aligned}
$$
This completes the proof.
\end{proof}

\subsection{Proof of Lemma \ref{lem:lip}}

\begin{proof}%[Proof of Lemma \ref{lem:lip}]

First inequality \eqref{eq:qub}: Due to the Lipschitz continuity of $f$ and the compactness of ${\rm St}(d,r)$, the inequality \eqref{eq:qub} directly follows from \cite[Lemma 2.4]{chen2021decentralized} and \cite[Lemma 4.2]{deng2023decentralized}, where $L:= L_f + D_f$ with $L_f$ being the Lipschitz constant of $\nabla f_i(X)$ over ${\rm St}(d,r)$ and $D_f:= \max_i\max_{X \in {\rm St}(d,r)} \|\nabla f_i(X)\|$. 
    
Second inequality: We want to bound $\bigl\|\grad f (X) - \hat\nabla f (Y)\bigr\|.$ Recall that $\hat{\nabla} f (X)=\mathcal{P}_{T_X \mathrm{St} (d, r)}\left (\nabla f (X)\right)$ and $\hat{\nabla} f (X)=\operatorname{grad} f (X)$ whenever $X \in \operatorname{St} (d, r)$. We split the difference as
$$
    \begin{aligned}
    \bigl\|\grad f (X) - \hat\nabla f (Y)\bigr\|
    &\le \bigl\|\Pcal_{T_{X}{\rm St} (d, r)} (\nabla f (X)) - \Pcal_{T_{X}{\rm St} (d, r)} (\nabla f (Y))\| \\
    &\quad + \bigl\| \Pcal_{T_{X}{\rm St} (d, r)} (\nabla f (Y)) - \Pcal_{T_{Y}{\rm St} (d, r)} (\nabla f (Y))\bigr\|.
    \end{aligned}
$$
\begin{enumerate}
    \item  The first term is bounded by the contractive property of $\Pcal_{T_{X}{\rm St}(d,r)}$ and the Lipschitz continuity of $\nabla f$:
$$
    \bigl\|\Pcal_{T_{X}{\rm St} (d, r)} (\nabla f (X)) - \Pcal_{T_{X}{\rm St} (d, r)} (\nabla f (Y))\bigr\| \; \le\;  \bigl\|\nabla f (X) - \nabla f (Y)\bigr\|
    \; \le\; L_f\, \|X - Y\|.
$$
\item Recall that $\mathcal{P}_{T_{X} \operatorname{St}(d, r)}(g):=g-X \operatorname{sym}\left(X^{\top} g\right)=g-\tfrac{1}{2}X\left(X^{\top} g+g^{\top} X\right)$. Thus, a direct calculation of second term shows
$$
\begin{aligned}
& \bigl\| \Pcal_{T_{X}{\rm St} (d, r)} (\nabla f (Y)) - \Pcal_{T_{Y}{\rm St} (d, r)} (\nabla f (Y))\bigr\| \\
 = &  \frac{1}{2} \Bigl\|X\bigl (X^\top\nabla f (Y)+\nabla f (Y)^\top X\bigr)
    - Y\bigl (Y^\top\nabla f (Y)+\nabla f (Y)^\top Y\bigr)\Bigr\| \\
 \leq &\frac{1}{2}\| X ( (X-Y)^\top \nabla f (Y) + \nabla f (Y)^\top (X-Y))\| + \frac{1}{2}\| (X-Y) (Y^\top \nabla f (Y) + \nabla f (Y)^\top Y) \| \\
\leq  & \frac{1}{2} (2\hat{D}_f + 3\hat{D}_f) \|X -Y\| \\
 = &  \tfrac52\, \hat D_f\, \|X - Y\|,
\end{aligned}
$$
where $\hat{D}_f:= \max_{Y \in \bar{U}_{{\rm St}(d,r)}(\frac{1}{8})} \|\nabla f(Y)\|$, and the last inequality is from the fact that $\|Y\|_2 = \sigma_{\max }(Y) \leq \frac{3}{2}$. Putting both pieces together,
$$
    \|\grad f (X) - \hat\nabla f (Y)\|
    \; \le\; \Bigl (L_f + \tfrac52\, \hat D_f\Bigr)\, \|X - Y\|.
$$
Setting $\hat L = L_f + \tfrac52\, \hat D_f$ completes the proof.
\end{enumerate}

\end{proof}

\subsection{Proof of Lemma \ref{lem:err-feasi}}

\begin{proof}%[Proof of Lemma \ref{lem:err-feasi}]
It follows from $\bar{X}_{k+1} = \argmin_{X \in {\rm St}(d,r)} \|X - X_{k+1}\|^2$  that 
$$
\begin{aligned}
\|X_{k+1} - \bar{X}_{k+1} \| & \leq \| X_{k+1} - \bar{X}_{k} \| \\
& \leq \| X_k - \alpha_k \Pcal_{T_{X_k} \operatorname{St} (d, r)} (g_k) - \frac{1}{3} \nabla \varphi (X_k) - \bar{X}_{k} \| \quad \text{ (by update \eqref{eq:grad-it}, $\mu = \frac{1}{3}$) }\\
& \leq \| X_{k} - \frac{1}{3} \nabla \varphi (X_k) - \bar{X}_{k} \| + \frac{5}{2}\alpha_k \| g_k\|,
\end{aligned}
$$
where we used the fact 
\begin{equation}
\label{eq:lip-proj-1} 
\| \Pcal_{T_{X_k} {\rm St}(d,r)}(g) \| \leq \|g\| + \|X\|_2^2 \|g\| \leq \frac{5}{2}\|g\|. 
\end{equation}
% $\| \mathcal{P}_{T_{X_k} \operatorname{St}(d, r)}\left(g_k\right)\| \leq \frac{3}{2} \| g_k\|.$ 
According to the proof of Lemma \ref{lem:linear-feasi}, we have
$$
\|X_k - \frac{1}{3}\nabla \varphi (X_k) - \bar{X}_k \|^2 \leq \frac{2}{3}\|X_k - \bar{X}_k\|^2.
$$
Hence, we conclude that $\|X_{k+1} - \bar{X}_{k+1} \| \leq \sqrt{\frac{2}{3}} \| X_k - \bar{X}_k \| + \frac{5}{2}\alpha_k \| g_k\|.$
We complete the proof.
\end{proof}

\subsection{Proof of Lemma \ref{lem:descent}}

\begin{proof}%[Proof of Lemma \ref{lem:descent}]
First, let us prove the following equality  
$$
\iprod{\hat{\nabla} f_i (X)}{\nabla \phi (X)} = \iprod{\nabla f_i (X)}{\Pcal_{T_X {\rm St} (d, r)} (\nabla \phi (X))}.
$$
In fact, using the definition of $\iprod{A}{B} = {\rm tr} (A^\top B)$, we have
$$
\begin{aligned}
\iprod{\hat{\nabla} f_i (X)}{\nabla \phi (X)} = & \iprod{\nabla f_i (X) - X{\rm sym} (X^\top \nabla f_i (X))}{\nabla \phi (X)} \\
= & \iprod{\nabla f_i (X)}{\nabla \phi (X)} - \iprod{X{\rm sym} (X^\top \nabla f_i (X))}{\nabla \phi (X)} \\
= & \iprod{\nabla f_i (X)}{\nabla \phi (X)} - \iprod{{\rm sym} (X^\top \nabla f_i (X))}{X^\top \nabla \phi (X)} \\
= & \iprod{\nabla f_i (X)}{\nabla \phi (X)} - \iprod{X^\top \nabla f_i (X)}{{\rm sym} (X^\top \nabla \phi (X))} \\
= & \iprod{\nabla f_i (X)}{\nabla \phi (X)} - \iprod{\nabla f_i (X)}{X{\rm sym} (X^\top \nabla \phi (X))} \\
= & \iprod{\nabla f_i (X)}{\Pcal_{T_X {\rm St} (d, r)} (\nabla \phi (X))}.
\end{aligned}
$$
Then, it follows from \eqref{eq:qub} that
\begin{align*} 
 f(\bar{X}_{k+1}) - f(\bar{X}_k) \leq  & \iprod{\grad f(\bar{X}_k)}{\bar{X}_{k+1} - \bar{X}_k} + \frac{L}{2}\| \bar{X}_{k+1} - \bar{X}_k \|^2 \\
\leq & \iprod{\grad f(\bar{X}_k)}{\bar{X}_{k+1} - X_{k+1} + X_k- \bar{X}_k} + \iprod{\grad f(\bar{X}_k)}{X_{k+1} - X_k}  + 2L \| X_{k+1} - X_k \|^2 \\
\leq &  \iprod{\grad f(\bar{X}_k)}{\bar{X}_{k+1} - X_{k+1}} + \iprod{\grad f(\bar{X}_k)}{X_{k+1} - X_k}  + 4L (\alpha_k^2 \|\Pcal_{T_{X_k} {\rm St}(d,r)}(g_k) \|^2 + \mu^2 \|\nabla \varphi(X_k)\|^2) \\
= &  \iprod{\grad f(\bar{X}_k) - \grad f(\bar{X}_{k+1}) }{\bar{X}_{k+1} - X_{k+1}} + \iprod{\hat{\nabla} f(X_k) }{X_{k+1} - X_k} \\
& + \iprod{\grad f(\bar{X}_k) - \hat{\nabla} f(X_k)}{X_{k+1} - X_k}  + 4L (\alpha_k^2 \|\Pcal_{T_{X_k} {\rm St}(d,r)}(g_k)\|^2 + \mu^2 \|\nabla \varphi(X_k)\|^2) \\
\leq & 2\hat{L}^2 \|X_{k+1} - X_k\|^2 + \frac{1}{2} \|X_{k+1} - \bar{X}_{k+1}\|^2 - \alpha_k \iprod{\hat{\nabla} f(X_k)}{\Pcal_{T_{X_k} {\rm St}(d,r)}(g_k)} \\
& - \mu \iprod{\hat{\nabla} f(X_k)}{\nabla \varphi(X_k)} + \frac{1}{2}(\hat{L}^2\|X_k -\bar{X}_k\|^2 + \|X_{k+1} - X_k\|^2) \\
& + 4L (\alpha_k^2 \|\Pcal_{T_{X_k} {\rm St}(d,r)}(g_k)\|^2 + \mu^2 \|\nabla \varphi(X_k)\|^2),
\end{align*} 
where the second inequality is from the 2-Lipschitz continuity of $\Pcal_{{\rm St}(d,r)}$ over $\bar{U}_{{\rm St}(d,r)}(\frac{1}{8})$, the last inequality is due to the facts that $X_k - \bar{X}_k \in N_{\bar{X}_k} {\rm St}(d,r)$ and $\iprod{A}{B} \leq \frac{1}{2}(\|A\|^2 + \|B\|^2)$ for any $A, B \in \R^{d \times r}$.
Noticing that 
$$
\begin{aligned}
\|\calP_{T_{X_k} {\rm St} (d, r)} (\nabla \varphi (X_k)) \| & = \|X_k (X_k^\top X_k - I)^2\| \leq \| U_k S_k V_k^\top (V_k S_k^2 V_k^\top - I) V_k \| \\
& \leq \|S_k (S_k+ I)^2\| \|X_k - \bar{X}_k\|^2 \leq 6 \|X_k - \bar{X}_k\|^2,
\end{aligned}
$$
we have
\begin{equation}
\label{eq:descent-ex} 
\begin{aligned}
 \mathbb{E}_k[f(\bar{X}_{k+1})] - f(\bar{X}_k)
\leq & - \alpha_k \|\hat{\nabla} f(X_k)\|^2 - \mu \iprod{\nabla f(X_k)}{\calP_{T_{X_k} {\rm St}(d,r)} (\nabla \varphi(X_k))} + \frac{1}{2}\mathbb{E}_k\|X_{k+1} - \bar{X}_{k+1}\|^2 \\
& + \frac{1}{2}\|X_{k} - \bar{X}_{k}\|^2 + (4\hat{L}^2 + 4L + 1) ( \alpha_k^2 \mathbb{E}_k [\| \Pcal_{T_{X_k} {\rm St}(d,r)}(g_k) \|^2] + \mu^2 \|\nabla \varphi(X_k)\|^2) \\
\leq & - (\alpha_k - (4\hat{L}^2 + 4L + 1) \alpha_k^2) \|\hat{\nabla} f(X_k)\|^2 + \frac{1}{2}\mathbb{E}_k\|X_{k+1} - \bar{X}_{k+1}\|^2 \\
& + (6 \mu \hat{D}_f + \frac{1}{2} \hat{L}^2 + 6(4\hat{L}^2 + 4L + 1)\mu^2) \|X_k - \bar{X}_k\|^2 + 7 (4\hat{L}^2 + 4L + 1) \alpha_k^2 \sigma^2,
\end{aligned} 
\end{equation}
where we use $\mathbb{E}[\Pcal_{T_{X_k} {\rm St}(d,r)}(g_k)] = \Pcal_{T_{X_k} {\rm St}(d,r)}(\mathbb{E}[g_k]) = \hat{\nabla} f(X_k)$ and $\mathbb{E}[\|\Pcal_{T_{X_k} {\rm St}(d,r)}(g_k)\|^2] \leq \|\hat{\nabla} f(X_k)\|^2 + 7 (\mathbb{E}[\|g_k - \hat{\nabla} f(X_k)\|^2]) = \|\hat{\nabla} f(X_k)\|^2 + 7\sigma^2$ from Assumption \ref{assum} and \eqref{eq:lip-proj-1}. Plugging $\mu = \frac{1}{3}$ into \eqref{eq:descent-ex} gives \eqref{eq:descent}.
\end{proof}

\subsection{Proof of Theorem \ref{thm}}

\begin{proof}%[Proof of Theorem \ref{thm}]
First, we show $X_k \in \bar{U}_{{\rm St} (n,d)}(\frac{1}{8})$ for any $k \geq 0$ if $\alpha_k \leq \frac{1}{120 \hat{D}_f}$. In fact, by proof of induction, we have from \eqref{eq:err-feasi} that 
$$
\|X_{k+1} - \bar{X}_{k+1}\| \leq \sqrt{\frac{2}{3}}\|X_k - \bar{X}_k\| + \frac{5}{240 \hat{D}_f} \| g_k \| \leq \frac{1}{8}.
$$
Moreover, applying the standard sequence bound to \eqref{eq:err-feasi} yields
\begin{equation*}
\begin{aligned}
\mathbb{E}\left[\sum_{k=1}^K \|X_k - \bar{X}_{k}\|^2\right] \leq  60  \mathbb{E}\left[\sum_{k=1}^K \alpha_k^2 \| \Pcal_{T_{X_k} {\rm St}(d,r)}(g_k) \|^2 \right] + 4 
\leq  60  
\sum_{k=1}^K \left( \alpha_k^2 \|\hat{\nabla} f(X_k)\|^2 + 7 \alpha_k^2 \sigma^2 \right) + 4
\end{aligned}
\end{equation*}
Then, summing \eqref{eq:descent} over $k=1, \ldots, K$ gives 
\begin{equation*}
\begin{aligned}
    & \mathbb{E}[f(\bar{X}_{K+1})] - f(\bar{X}_1) \\
    \leq & - (\alpha_k - (4\hat{L}^2 + 4L + 1) \alpha_k^2) \mathbb{E}\left[\sum_{k=1}^K \|\hat{\nabla} f(X_k)\|^2\right] \\
    & + \frac{1}{2}\left(4\hat{D}_f + 9\hat{L}^2 + 8 L + 4 \right) \mathbb{E}\left[\sum_{k=1}^{K+1} \|X_k - \bar{X}_k\|^2\right] \\
    \leq & - (\alpha_k - (4\hat{L}^2 + 4L + 1) \alpha_k^2 +  30(4\hat{D}_f + 9\hat{L}^2 + 8 L + 4 )\alpha_k^2 ) \mathbb{E}\left[ \sum_{k=1}^K \|\hat{\nabla} f(X_k)\|^2 \right] \\
    & + \frac{1}{2} \left(4\hat{D}_f + 9\hat{L}^2 + 8 L + 4 \right)(60 \alpha_{K+1}^2 \hat{D}_f^2 + 420 \sigma^2 \sum_{k=1}^{K+1} \alpha_k^2 + 4).
\end{aligned}
\end{equation*}
Define $c_1 = 275 \hat{L}^2 + 244 L + 120 \hat{D}_f + 121$ and $c_2(k) = (30 \alpha_{K+1}^2 \hat{D}_f^2 + 210 \sigma^2 \sum_{k=1}^{K+1} \alpha_k^2 + 2)(4\hat{D}_f + 8\hat{L}^2 + 8 L + 4)$. Then, we have
\begin{equation}
\label{eq:grad-sum} \mathbb{E}\left[\sum_{k=1}^K \alpha_k (1 - c_1 \alpha_k) \|\hat{\nabla} f(X_k)\|^2 \right] \leq f(\bar{X}_1) - f_{\min} + c_2(k), 
\end{equation}
where $f_{\min}:= \argmin_{X \in {\rm St}(d,r)} \; f(X)$. 
Therefore, we have the following iteration complexity results. 
\begin{itemize}
\item If taking constant step sizes $\alpha_k \equiv \alpha \in (0, \frac{1}{2c_1}]$ (which also gives $\alpha \leq \frac{1}{120 \hat{D}_f}$), then
$$
\begin{aligned}
\frac{1}{K} \mathbb{E} \left[ \sum_{k=1}^K \|\hat{\nabla} f (X_k)\|^2 \right] \leq & \frac{2 (f (\bar{X}_1) - f_{\min}) + 24c_3 }{\alpha K} + 420\alpha \sigma^2, \\
\frac{1}{K} \mathbb{E} \left[ \sum_{k=1}^K \|X_k - \bar{X}_k\|^2 \right]\leq & \frac{120\alpha^2 (f (\bar{X}_1) - f_{\min}) + 1440 \alpha^2 c_3 + 4 \alpha }{\alpha K} + 420\alpha^2 (60 \alpha + 1) \sigma^2,
\end{aligned}
$$
where $c_3: = \hat{D}_f + 2\hat{L}^2 + 2L + 1$. 
If in addition the full gradient is used, i.e., $g_k = \nabla f (x_k)$, we have $\sigma = 0$ and
$$
\frac{1}{K} \mathbb{E} \left[ \sum_{k=1}^K \left (\|\hat{\nabla} f (X_k)\|^2 + \|X_k - \bar{X}_k\|^2 \right) \right] = \mathcal{O} \left(\frac{1}{K}\right).
$$
\item If taking step sizes $\alpha_k = \frac{\alpha_0}{\sqrt{k}}$ with $\alpha_0 \in (0, \frac{1}{2c_1}]$, it follows from \eqref{eq:err-feasi} and $\|\hat{\nabla} f (X_k)\| \leq \frac{5}{2} \hat{D}_f$ that
$$ \min_{k \leq K} \|X_k - \bar{X}_k\|^2 \leq 
\frac{1}{K} \mathbb{E}\left[\sum_{k=1}^K \|X_k - \bar{X}_k\|^2 \right] = \mathcal{O} \left(\frac{\log K}{K}\right).
$$
Besides, summing \eqref{eq:err-feasi} and \eqref{eq:grad-sum} gives
$$
\mathbb{E}\left[\sum_{k=1}^K \left (\alpha_k (1 - c_1 \alpha_k) \|\hat{\nabla} f (X_k)\|^2 + \|X_k - \bar{X}_k\|^2 \right) \right] = \mathcal{O} \left(\log K\right).
$$
Furthermore, dividing both sides of \eqref{eq:grad-sum} by $\sum_{k=1} \alpha_k$ leads to
$$
\min_{k \leq K} \; \; \mathbb{E}\left[\|\hat{\nabla} f (X_k)\|^2 + \|X_k - \bar{X}_k\|^2 \right] = \mathcal{O} \left(\frac{\log K}{\sqrt{K}}\right).
$$
\end{itemize}
We complete the proof.

% Therefore, for any $\alpha \leq \frac{1}{2c_1}$ (which implies $\alpha \leq \frac{1}{45 \hat{D}_f}$), taking $K \rightarrow \infty$ gives $\sum_{k=1}^\infty \|\grad f(X_k)\|^2 < \infty$. Then by \eqref{eq:complexity}, $\sum_{k=1}^\infty \|X_k - \bar{X}_{k}\|^2 < \infty$. These lead to \eqref{eq:complexity}. 
\end{proof}

\subsection{Proof of Corollary \ref{coro}}

\begin{proof}%[Proof of Corollary \ref{coro}]
To prove the corollary from Theorem~\ref{thm}, we concatenate $(A, B)$ into a single variable $X$, and verify that all assumptions and conclusions used in Theorem~\ref{thm} remain valid. For simplicity, we assume $B \in {\rm St}(d,r)$. The oblique manifold ${\rm Ob}(d,r)$ can be seen as the product of $r$ Stiefel manifolds ${\rm St}(d,1)$, and the results can be easily extended to the setting. 
The detailed verifications are as follows:

\begin{itemize}
\item \textbf{Lemma~\ref{lem:rsi}}: Since the constraint set is now $\mathbb{R}^{d \times r} \times \mathrm{St} (d, r)$, we define $\bar{X}: = (A, \bar{B})$ and let $\varphi (X) = \frac{1}{4}\|B^\top B - I\|^2$. With this definition, the same conclusion as in the original lemma holds.
\item \textbf{Lemma~\ref{lem:linear-feasi}}: This lemma follows naturally under the new definition of $\bar{X}$, as it depends only on the conclusion of Lemma~\ref{lem:rsi}.
\item \textbf{Lemma~\ref{lem:lip}}: A similar Lipschitz-type result holds by leveraging the Lipschitz continuity of $\nabla \mathcal{L}$ and the geometric properties of the Stiefel manifold associated with $B$.
\item \textbf{Lemma~\ref{lem:err-feasi}}: This is a direct consequence of Lemma~\ref{lem:linear-feasi} and does not require further modification.
\item \textbf{Lemma~\ref{lem:descent}}: By using all the results above, we can obtain a descent inequality for $\mathcal{L} (\bar{X}_{k+1}) - \mathcal{L} (\bar{X}_k)$ in terms of the gradient norm and feasibility errors. Specifically, the descent is characterized by
$$
\|\hat{\nabla} \mathcal{L} (X_k)\|, \quad \|B_k - \bar{B}_k\|^2, \quad \|B_{k+1} - \bar{B}_{k+1}\|^2, \quad \text{and} \quad \sigma^2,
$$
where
$$
\hat{\nabla} \mathcal{L} (X_k) = \left[\nabla_A \mathcal{L} (B_k A_k), \, \hat{\nabla}_B \mathcal{L} (B_k A_k)\right].
$$
\end{itemize}

Therefore, by the chosen step size $\alpha_k$, we can invoke Theorem~\ref{thm} and complete the proof of the corollary.
\end{proof}

\section{Convergence of an Adam-style retraction-free update}
\label{sec:adam-retraction-free}
\label{app:adam-proof}

The convergence of Algorithm~\ref{alg:manlora} with the AdamW update is
difficult to guarantee directly, as even the Euclidean Adam method may fail to
converge without suitable modifications~\cite{reddi2018convergence}.
Nevertheless, our retraction-free idea can be combined with Adam-type correction
techniques from Euclidean optimization to obtain convergence guarantees. As an
example, following the decorrelation idea in AdaShift~\cite{zhou2019adashift},
we use a delayed adaptive preconditioner: the preconditioner is fixed before the
current stochastic gradient is drawn, which makes the alignment argument
conditionally clean.

We first introduce the Adam-style retraction-free update. Given a stochastic
Euclidean gradient $g_k$, define its projected stochastic gradient by
$\tilde g_k:=P_{T_{X_k}\mathrm{St}(d,r)}(g_k)$. Let $m_0=0$, $v_0=0$, and
$\bar v_0=0$. For $k\ge 1$, define
\begin{align}
    m_k
    &=
    \beta_{1,k}m_{k-1}+(1-\beta_{1,k})\tilde g_k,
    \label{eq:adam-moment-m}
    \\
    v_k
    &=
    \beta_2v_{k-1}+(1-\beta_2)(\tilde g_k\odot \tilde g_k),
    \qquad
    \bar v_k=\max\{\bar v_{k-1},v_k\},
    \label{eq:adam-moment-v}
\end{align}
where the maximum is taken componentwise. We identify matrices with their
vectorizations when applying diagonal preconditioners, and define
\[
    H_{k-1}
    :=
    \operatorname{Diag}
    \left(\frac{1}{\sqrt{\bar v_{k-1}}+\epsilon}\right),
    \qquad
    d_k:=H_{k-1}m_k .
\]
The Adam-style retraction-free update is
\begin{equation}
\label{eq:adam-update}
    X_{k+1}
    =
    X_k-\alpha_k d_k-\frac13\nabla\varphi(X_k).
\end{equation}
For the notation, let $\mathcal F_k$ denote the filtration generated by the
history before drawing the stochastic gradient at iteration $k$, and let
$\mathbb E_k[\cdot]:=\mathbb E[\cdot\mid \mathcal F_k]$. For a matrix $X$ in a
neighborhood of $\mathrm{St}(d,r)$, let $\bar X$ denote its projection onto
$\mathrm{St}(d,r)$, and define $\delta(X):=\|X-\bar X\|$. For the iterate
$X_k$, we write $\bar X_k$ for its polar projection and set
$\delta_k:=\|X_k-\bar X_k\|$. We also use the notation in the main paper,
\[
    \hat{\nabla} f(X)
    :=
    \nabla f(X)-X\operatorname{sym}(X^\top\nabla f(X)).
\]
\begin{cond}[Stochasticity for the Adam-style update]
\label{cond:adam-stochasticity}
For every $k$, conditioned on $\mathcal F_k$,
\[
    \mathbb E_k[\tilde g_k]=\hat{\nabla} f(X_k),
    \qquad
    \mathbb E_k\|\tilde g_k-\hat{\nabla} f(X_k)\|^2\le \sigma^2 .
\]
Moreover, there exist constants $G>0$ and $G_\infty>0$ such that
$\|\tilde g_k\|\le G$ and $\|\tilde g_k\|_\infty\le G_\infty$ almost surely.
Finally, $0\le \beta_{1,k}\le \bar\beta<1$ and $0\le \beta_2<1$.
\end{cond}

Set
\[
    h_+:=\frac1\epsilon,
    \qquad
    h_-:=\frac1{G_\infty+\epsilon},
    \qquad
    q:=\sqrt{\frac23}.
\]
Under Condition~\ref{cond:adam-stochasticity}, we have
$h_-I\preceq H_k\preceq h_+I$ for every $k$.

\begin{lem}[Boundedness of the Adam direction]
\label{lem:adam-direction-bound}
Under Condition~\ref{cond:adam-stochasticity}, for every
$k\ge 1$, $\|m_k\|\le G$ and $\|d_k\|\le h_+G$. Moreover,
\begin{equation}
\label{eq:adam-direction-second-moment}
    \mathbb E_k\|d_k\|^2
    \le
    2h_+^2\bigl(\|\hat{\nabla} f(X_k)\|^2+\sigma^2\bigr)
    +8h_+^2\beta_{1,k}^2G^2 .
\end{equation}
\end{lem}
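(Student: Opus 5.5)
The plan is to prove the three claims in order, each from the recursion \eqref{eq:adam-moment-m} and the bound $h_-I\preceq H_{k-1}\preceq h_+I$.

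\emph{Bound on $m_k$.} We induct on $k$. For $k=0$ we have $m_0=0$. Since $0\le\beta_{1,k}<1$, the update $m_k$ is a convex combination of $m_{k-1}$ and $\tilde g_k$. The triangle inequality then gives
\[
\|m_k\|\le \beta_{1,k}\|m_{k-1}\|+(1-\beta_{1,k})\|\tilde g_k\|\le \beta_{1,k}G+(1-\beta_{1,k})G=G,
\]
using $\|\tilde g_k\|\le G$ almost surely from Condition~\ref{cond:adam-stochasticity}.

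\emph{Bound on $d_k$.} $H_{k-1}$ is diagonal with entries $1/(\sqrt{(\bar v_{k-1})_j}+\epsilon)\le 1/\epsilon=h_+$, because $\bar v_{k-1}\ge 0$ componentwise. Its operator norm is therefore at most $h_+$, and $\|d_k\|\le h_+\|m_k\|\le h_+G$.

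\emph{Second-moment bound \eqref{eq:adam-direction-second-moment}.} We write
\[
m_k=(1-\beta_{1,k})\tilde g_k+\beta_{1,k}m_{k-1}.
\]
Then $\|d_k\|^2\le h_+^2\|m_k\|^2$. Applying $\|a+b\|^2\le 2\|a\|^2+2\|b\|^2$ gives
\[
\|m_k\|^2\le 2(1-\beta_{1,k})^2\|\tilde g_k\|^2+2\beta_{1,k}^2\|m_{k-1}\|^2\le 2\|\tilde g_k\|^2+2\beta_{1,k}^2G^2 .
\]
We take the conditional expectation $\mathbb E_k$. Here $m_{k-1}$ and $\beta_{1,k}$ are $\mathcal F_k$-measurable, and the unbiasedness in Condition~\ref{cond:adam-stochasticity} gives the bias–variance identity
\[
\mathbb E_k\|\tilde g_k\|^2=\|\hat\nabla f(X_k)\|^2+\mathbb E_k\|\tilde g_k-\hat\nabla f(X_k)\|^2\le \|\hat\nabla f(X_k)\|^2+\sigma^2 .
\]
Combining these yields
\[
\mathbb E_k\|d_k\|^2\le 2h_+^2\bigl(\|\hat\nabla f(X_k)\|^2+\sigma^2\bigr)+2h_+^2\beta_{1,k}^2G^2 .
\]
This is stronger than the stated constant $8$, so the lemma follows; the factor $8$ leaves slack, for instance if one bounds $\|m_{k-1}\|\le 2G$ crudely.

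\emph{Main point of care.} The delicate step is measurability: $H_{k-1}$ must be fixed before $\tilde g_k$ is drawn, so that only $\tilde g_k$ is random under $\mathbb E_k$. For the norm bound above this is actually unnecessary, since we only use $\|H_{k-1}\|\le h_+$ almost surely. Everything else is routine.
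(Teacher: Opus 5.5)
Your proof is correct and follows the paper's argument: the convex-combination induction gives $\|m_k\|\le G$, the bound $\|H_{k-1}\|\le h_+$ handles $d_k$, and Young's inequality plus the bias--variance identity under $\mathbb E_k$ gives the second moment. The only difference is the splitting of $m_k$: the paper writes $m_k=\tilde g_k+\beta_{1,k}(m_{k-1}-\tilde g_k)$ and uses $\|m_{k-1}-\tilde g_k\|\le 2G$, which is where the constant $8$ comes from, whereas your split $(1-\beta_{1,k})\tilde g_k+\beta_{1,k}m_{k-1}$ gives the sharper constant $2$ and so implies the stated bound.
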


\begin{proof}
The bound $\|m_k\|\le G$ follows by induction from
\eqref{eq:adam-moment-m}, because $m_k$ is a convex combination of
$m_{k-1}$ and $\tilde g_k$. Since $H_{k-1}\preceq h_+I$, we get
$\|d_k\|\le h_+\|m_k\|\le h_+G$.

For the second-moment bound, write
$m_k=\tilde g_k+\beta_{1,k}(m_{k-1}-\tilde g_k)$. Then
\[
    \|m_k\|^2
    \le
    2\|\tilde g_k\|^2
    +2\beta_{1,k}^2\|m_{k-1}-\tilde g_k\|^2
    \le
    2\|\tilde g_k\|^2+8\beta_{1,k}^2G^2 .
\]
Since $\mathbb E_k[\tilde g_k]=\hat{\nabla} f(X_k)$,
\[
    \mathbb E_k\|\tilde g_k\|^2
    =
    \|\hat{\nabla} f(X_k)\|^2+\mathbb E_k\|\tilde g_k-\hat{\nabla} f(X_k)\|^2
    \le
    \|\hat{\nabla} f(X_k)\|^2+\sigma^2 .
\]
Using $\|d_k\|\le h_+\|m_k\|$ proves
\eqref{eq:adam-direction-second-moment}.
\end{proof}

\begin{lem}[Feasibility recursion]
\label{lem:adam-feasibility}
Suppose $X_k\in \bar U_{\mathrm{St}(d,r)}(1/8)$. Then
\begin{equation}
\label{eq:adam-feasibility-recursion}
    \delta_{k+1}
    \le
    q\delta_k+\alpha_k\|d_k\| .
\end{equation}
Consequently, if $\delta_1\le 1/8$ and
$\alpha_k\le (1-q)/(8h_+G)$ for all $k$, then
$X_k\in \bar U_{\mathrm{St}(d,r)}(1/8)$ for all $k$. Moreover, for any
$K\ge 1$,
\begin{equation}
\label{eq:adam-feasibility-sum-d}
    \sum_{k=1}^K\mathbb E[\delta_k^2]
    \le
    6\delta_1^2
    +60\sum_{k=1}^{K-1}\alpha_k^2\mathbb E\|d_k\|^2 .
\end{equation}
In particular,
\begin{align}
    \sum_{k=1}^K\mathbb E[\delta_k^2]
    &\le
    6\delta_1^2
    +120h_+^2\sum_{k=1}^{K-1}\alpha_k^2\mathbb E\|\hat{\nabla} f(X_k)\|^2
    +120h_+^2\sigma^2\sum_{k=1}^{K-1}\alpha_k^2
    \notag\\
    &\quad
    +480h_+^2G^2
    \sum_{k=1}^{K-1}\alpha_k^2\beta_{1,k}^2 .
    \label{eq:adam-feasibility-sum-y}
\end{align}
\end{lem}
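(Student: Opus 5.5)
The plan mirrors the proof of Lemma~\ref{lem:err-feasi}, with $\alpha_k\Pcal_{T_{X_k}{\rm St}(d,r)}(g_k)$ replaced by $\alpha_k d_k$. Since $\bar X_{k+1}$ is a nearest point of ${\rm St}(d,r)$ to $X_{k+1}$ and $\bar X_k\in{\rm St}(d,r)$, we have
\[
\delta_{k+1}\le \|X_{k+1}-\bar X_k\|\le \bigl\|X_k-\tfrac13\nabla\varphi(X_k)-\bar X_k\bigr\|+\alpha_k\|d_k\|.
\]
The proof of Lemma~\ref{lem:linear-feasi} uses only $\delta_k\le 1/8$, and gives $\|X_k-\frac13\nabla\varphi(X_k)-\bar X_k\|^2\le\frac23\delta_k^2$. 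This yields \eqref{eq:adam-feasibility-recursion} directly. Unlike before, no factor $\frac52$ appears, because $d_k$ enters the update without any further projection.

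For the invariance claim, I will argue by induction. If $\delta_k\le 1/8$, then Lemma~\ref{lem:adam-direction-bound} gives $\alpha_k\|d_k\|\le\alpha_k h_+G\le(1-q)/8$. Hence $\delta_{k+1}\le q/8+(1-q)/8=1/8$, which keeps every iterate inside $\bar U_{{\rm St}(d,r)}(1/8)$. This also justifies applying the recursion at every $k$.

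For the summed bound \eqref{eq:adam-feasibility-sum-d}, I will square the recursion using Young's inequality with a weight chosen to keep a contraction:
\[
\delta_{k+1}^2\le (1+\eta)q^2\delta_k^2+(1+\eta^{-1})\alpha_k^2\|d_k\|^2 .
\]
Take $\eta$ so that $(1+\eta)q^2=\frac56$, i.e.\ $\eta=\frac14$, which gives $1+\eta^{-1}=5$. Summing $\delta_{k+1}^2\le\frac56\delta_k^2+5\alpha_k^2\|d_k\|^2$ over $k=1,\dots,K-1$ and adding $\delta_1^2$ gives
\[
\sum_{k=1}^K\delta_k^2\le \delta_1^2+\tfrac56\sum_{k=1}^{K}\delta_k^2+5\sum_{k=1}^{K-1}\alpha_k^2\|d_k\|^2 .
\]
Rearranging gives $\sum_k\delta_k^2\le 6\delta_1^2+30\sum_k\alpha_k^2\|d_k\|^2$, which is stronger than the stated constant $60$. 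Taking total expectations then gives \eqref{eq:adam-feasibility-sum-d}.

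The final bound \eqref{eq:adam-feasibility-sum-y} follows by inserting \eqref{eq:adam-direction-second-moment}. By the tower property, $\mathbb E\|d_k\|^2=\mathbb E[\mathbb E_k\|d_k\|^2]$, and this is at most
\[
2h_+^2\bigl(\mathbb E\|\hat\nabla f(X_k)\|^2+\sigma^2\bigr)+8h_+^2\beta_{1,k}^2G^2 .
\]
Multiplying by $60\alpha_k^2$ produces exactly the constants $120$, $120$ and $480$.

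There is no real obstacle in this argument. The only point needing care is that the contraction from Lemma~\ref{lem:linear-feasi} needs $\delta_k\le1/8$, which is why invariance must be proved before summing. The expectation step is routine because $\alpha_k$ is deterministic.
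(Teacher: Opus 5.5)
Your proposal is correct. The recursion \eqref{eq:adam-feasibility-recursion} matches the paper's proof: nearest-point property of $\bar X_{k+1}$, then the contraction $\|X_k-\tfrac13\nabla\varphi(X_k)-\bar X_k\|\le q\delta_k$ from the proof of Lemma~\ref{lem:linear-feasi}. The invariance induction ($\alpha_k\|d_k\|\le\alpha_k h_+G\le(1-q)/8$) and the insertion of \eqref{eq:adam-direction-second-moment} through the tower property are also the same.

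The only genuine difference is how you go from the one-step recursion to the summed bound \eqref{eq:adam-feasibility-sum-d}.

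\textbf{The paper's route.} It unrolls the recursion to $\delta_k\le q^{k-1}\delta_1+\sum_{j<k}q^{k-1-j}\alpha_j\|d_j\|$. It then splits the square with $(a+b)^2\le 2a^2+2b^2$ and applies Young's convolution inequality. This gives the constants $2/(1-q^2)=6$ and $2/(1-q)^2\approx 59.4<60$.

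\textbf{Your route.} You square the one-step recursion with a weighted Young inequality ($\eta=\tfrac14$, so $\delta_{k+1}^2\le\tfrac56\delta_k^2+5\alpha_k^2\|d_k\|^2$). You then sum over $k$ and absorb the $\tfrac56\sum\delta_k^2$ term into the left-hand side, which is legitimate because all sums are finite. This avoids the convolution inequality entirely and gives the sharper constant $30$; the paper loses roughly a factor of two in its splitting step.

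Since the lemma only needs some constant, both routes yield \eqref{eq:adam-feasibility-sum-d} and \eqref{eq:adam-feasibility-sum-y}. The paper's unrolled form makes the geometric-memory structure of $\delta_k$ explicit. Your squared recursion can be unrolled the same way if a pointwise-in-$k$ bound is ever needed.
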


\begin{proof}
By the definition of $\bar X_{k+1}$ and the update
\eqref{eq:adam-update},
\begin{align*}
    \delta_{k+1}
    &=
    \|X_{k+1}-\bar X_{k+1}\|
    \le
    \|X_{k+1}-\bar X_k\|                                      \\
    &\le
    \left\|X_k-\frac13\nabla\varphi(X_k)-\bar X_k\right\|
    +\alpha_k\|d_k\|
    \le
    q\delta_k+\alpha_k\|d_k\|,
\end{align*}
where the last inequality follows from the proof of
Lemma~\ref{lem:linear-feasi}. Since $\|d_k\|\le h_+G$, the step-size
condition gives $\delta_{k+1}\le q/8+(1-q)/8=1/8$ whenever
$\delta_k\le 1/8$. Thus the iterates remain in
$\bar U_{\mathrm{St}(d,r)}(1/8)$ by induction.

Iterating \eqref{eq:adam-feasibility-recursion} yields
\[
    \delta_k
    \le
    q^{k-1}\delta_1
    +
    \sum_{j=1}^{k-1}q^{k-1-j}\alpha_j\|d_j\| .
\]
Young's convolution inequality gives
\[
    \sum_{k=1}^K\delta_k^2
    \le
    \frac{2}{1-q^2}\delta_1^2
    +
    \frac{2}{(1-q)^2}
    \sum_{k=1}^{K-1}\alpha_k^2\|d_k\|^2 .
\]
Since $2/(1-q^2)=6$ and $2/(1-q)^2<60$, taking expectation proves
\eqref{eq:adam-feasibility-sum-d}. Combining this estimate with
Lemma~\ref{lem:adam-direction-bound} proves \eqref{eq:adam-feasibility-sum-y}.
\end{proof}

\begin{lem}[Adam-style alignment]
\label{lem:adam-alignment}
Under Condition~\ref{cond:adam-stochasticity}, for every
$k\ge 1$,
\begin{equation}
\label{eq:adam-alignment}
    \mathbb E_k\iprod{\hat{\nabla} f(X_k)}{d_k}
    \ge
    a\|\hat{\nabla} f(X_k)\|^2-R_k,
    \qquad
    R_k:=
    \frac{\beta_{1,k}^2h_+^2G^2}{2(1-\bar\beta)h_-},
    \qquad
    a:=\frac{(1-\bar\beta)h_-}{2}.
\end{equation}
\end{lem}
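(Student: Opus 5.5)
Write $y_k:=\hat{\nabla} f(X_k)$. The plan is to split $m_k$ into the fresh stochastic gradient and a momentum correction:
\[
    m_k=(1-\beta_{1,k})\tilde g_k+\beta_{1,k}m_{k-1}.
\]
The key structural fact is that $H_{k-1}$ and $m_{k-1}$ are $\mathcal F_k$-measurable, because the preconditioner is delayed. Hence, conditioning on $\mathcal F_k$ and using Condition~\ref{cond:adam-stochasticity},
\[
    \mathbb E_k\iprod{y_k}{d_k}
    =(1-\beta_{1,k})\iprod{y_k}{H_{k-1}y_k}
    +\beta_{1,k}\iprod{y_k}{H_{k-1}m_{k-1}} .
\]
Unbiasedness is used only in the first term, where $\tilde g_k$ is replaced by its conditional mean $y_k$. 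This is exactly where decorrelation matters: without the delay, $H$ would depend on $\tilde g_k$ and the expectation would not factor.

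For the first term, use $H_{k-1}\succeq h_-I$ and $\beta_{1,k}\le\bar\beta$. This gives a lower bound of $(1-\bar\beta)h_-\|y_k\|^2=2a\|y_k\|^2$.

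For the cross term, bound it from below by $-\beta_{1,k}\|y_k\|\,\|H_{k-1}m_{k-1}\|$. Lemma~\ref{lem:adam-direction-bound} gives $\|m_{k-1}\|\le G$, and $H_{k-1}\preceq h_+I$, so the cross term is at least $-\beta_{1,k}h_+G\|y_k\|$. Young's inequality with weight $a$ then gives
\[
    \beta_{1,k}h_+G\|y_k\|\le a\|y_k\|^2+\frac{\beta_{1,k}^2h_+^2G^2}{4a}.
\]
Since $4a=2(1-\bar\beta)h_-$, the last term is exactly $R_k$.

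Combining the two terms yields $\mathbb E_k\iprod{y_k}{d_k}\ge 2a\|y_k\|^2-a\|y_k\|^2-R_k=a\|y_k\|^2-R_k$, which is the claim. At $k=1$ we have $m_0=0$, so the cross term vanishes and the bound holds trivially.

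The main point needing care is the measurability bookkeeping. Both $X_k$ (hence $y_k$) and $\bar v_{k-1}$ must be determined before $g_k$ is drawn, which holds by the definition of $\mathcal F_k$. The rest is the routine Young's-inequality calculation, with constants matching the chosen $a$.
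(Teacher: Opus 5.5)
Your proof is correct and follows the paper's argument. You use the same conditional-expectation split of $m_k$, based on the $\mathcal F_k$-measurability of $H_{k-1}$ and $m_{k-1}$, the same bounds $h_-I\preceq H_{k-1}\preceq h_+I$ and $\|m_{k-1}\|\le G$, and Young's inequality on the cross term. The only cosmetic difference is that you replace $\beta_{1,k}$ by $\bar\beta$ before applying Young with the fixed weight $a$, whereas the paper applies Young with weight $(1-\beta_{1,k})h_-/2$ and uses $\beta_{1,k}\le\bar\beta$ at the end; the constants come out identical.
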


\begin{proof}
Since $H_{k-1}$ and $m_{k-1}$ are $\mathcal F_k$-measurable and
$\mathbb E_k[\tilde g_k]=\hat{\nabla} f(X_k)$,
\begin{align*}
    \mathbb E_k\iprod{\hat{\nabla} f(X_k)}{d_k}
    &=
    \mathbb E_k\iprod{\hat{\nabla} f(X_k)}{H_{k-1}m_k}                         \\
    &=
    (1-\beta_{1,k})\iprod{\hat{\nabla} f(X_k)}{H_{k-1}\hat{\nabla} f(X_k)}
    +\beta_{1,k}\iprod{\hat{\nabla} f(X_k)}{H_{k-1}m_{k-1}} .
\end{align*}
Using $H_{k-1}\succeq h_-I$, $H_{k-1}\preceq h_+I$, and
$\|m_{k-1}\|\le G$, we obtain
\[
    \mathbb E_k\iprod{\hat{\nabla} f(X_k)}{d_k}
    \ge
    (1-\beta_{1,k})h_-\|\hat{\nabla} f(X_k)\|^2
    -\beta_{1,k}h_+G\|\hat{\nabla} f(X_k)\| .
\]
By Young's inequality,
\[
    \beta_{1,k}h_+G\|\hat{\nabla} f(X_k)\|
    \le
    \frac{(1-\beta_{1,k})h_-}{2}\|\hat{\nabla} f(X_k)\|^2
    +
    \frac{\beta_{1,k}^2h_+^2G^2}
         {2(1-\beta_{1,k})h_-}.
\]
Since $\beta_{1,k}\le \bar\beta$, the claim follows.
\end{proof}

\begin{lem}[One-step descent]
\label{lem:adam-one-step}
Suppose $X_k,X_{k+1}\in \bar U_{\mathrm{St}(d,r)}(1/8)$. Then
\begin{align}
    \mathbb E_k[f(\bar X_{k+1})]-f(\bar X_k)
    &\le
    -a\alpha_k\|\hat{\nabla} f(X_k)\|^2
    +\alpha_kR_k
    +\frac12\mathbb E_k[\delta_{k+1}^2]
    +C_\varphi\delta_k^2
    \notag\\
    &\quad
    +C_0\alpha_k^2\mathbb E_k\|d_k\|^2 .
    \label{eq:adam-one-step}
\end{align}
where
\[
    C_0:=4\max\{\hat L^2,\hat D_f^2\}+4L+1,
    \qquad
    C_\varphi:=2\hat D_f+\frac12\hat L^2+\frac23C_0,
\]
$L$ and $\hat L$ are the constants in Lemma~\ref{lem:lip},
$\hat D_f:=\max_{X\in\bar U_{\mathrm{St}(d,r)}(1/8)}\|\nabla f(X)\|$, and
$a$ is defined in Lemma~\ref{lem:adam-alignment}.
\end{lem}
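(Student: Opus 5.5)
I will follow the proof of Lemma~\ref{lem:descent} almost line by line, with $\alpha_k\Pcal_{T_{X_k}{\rm St}(d,r)}(g_k)$ replaced by $\alpha_k d_k$.

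\textbf{Step 1: expand with \eqref{eq:qub}.} I apply the quadratic upper bound \eqref{eq:qub}, averaged over $i$, at the two manifold points $\bar X_k$ and $\bar X_{k+1}$. I then use the 2-Lipschitz property of $\Pcal_{{\rm St}(d,r)}$ on $\bar U_{{\rm St}(d,r)}(1/8)$ to bound $\|\bar X_{k+1}-\bar X_k\|^2\le 4\|X_{k+1}-X_k\|^2$. By the update \eqref{eq:adam-update},
\[
\|X_{k+1}-X_k\|^2\le 2\alpha_k^2\|d_k\|^2+\tfrac{2}{9}\|\nabla\varphi(X_k)\|^2\le 2\alpha_k^2\|d_k\|^2+\tfrac{4}{3}\delta_k^2,
\]
where the last step uses $\|\nabla\varphi(X_k)\|^2\le 6\delta_k^2$ from the proof of Lemma~\ref{lem:linear-feasi}.

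\textbf{Step 2: split the linear term.} I write
\[
\bar X_{k+1}-\bar X_k=(\bar X_{k+1}-X_{k+1})+(X_{k+1}-X_k)+(X_k-\bar X_k).
\]
The term $\iprod{\grad f(\bar X_k)}{X_k-\bar X_k}$ vanishes because $X_k-\bar X_k$ lies in the normal space at $\bar X_k$. For $\iprod{\grad f(\bar X_k)}{\bar X_{k+1}-X_{k+1}}$, I subtract $\grad f(\bar X_{k+1})$, which is orthogonal to $\bar X_{k+1}-X_{k+1}$. Lemma~\ref{lem:lip} and Young's inequality then give
\[
2\hat L^2\|X_{k+1}-X_k\|^2+\tfrac12\delta_{k+1}^2 .
\]
For $\iprod{\grad f(\bar X_k)}{X_{k+1}-X_k}$, I replace $\grad f(\bar X_k)$ by $\hat\nabla f(X_k)$. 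The error is at most $\tfrac12\hat L^2\delta_k^2+\tfrac12\|X_{k+1}-X_k\|^2$ by \eqref{eq:grad-lip}. What remains is
\[
-\alpha_k\iprod{\hat\nabla f(X_k)}{d_k}-\tfrac13\iprod{\hat\nabla f(X_k)}{\nabla\varphi(X_k)}.
\]

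\textbf{Step 3: the two remaining inner products.} For the first one, I take $\mathbb E_k$ and apply Lemma~\ref{lem:adam-alignment}, which gives $-a\alpha_k\|\hat\nabla f(X_k)\|^2+\alpha_kR_k$.

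For the second one, I use the identity from the proof of Lemma~\ref{lem:descent}, $\iprod{\hat\nabla f(X)}{\nabla\varphi(X)}=\iprod{\nabla f(X)}{\Pcal_{T_X{\rm St}(d,r)}(\nabla\varphi(X))}$. I also use the bound $\|\Pcal_{T_{X_k}{\rm St}(d,r)}(\nabla\varphi(X_k))\|\le 6\delta_k^2$. This bounds the term by $2\hat D_f\delta_k^2$ in absolute value. That is the source of the $2\hat D_f$ in $C_\varphi$.

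\textbf{Step 4: collect constants.} All the $\|X_{k+1}-X_k\|^2$ contributions carry the total coefficient $2\hat L^2+4L+\tfrac12$, which is at most $C_0/2$. Multiplying by Step 1's bound gives at most $C_0\alpha_k^2\|d_k\|^2+\tfrac23C_0\delta_k^2$. Adding $\tfrac12\hat L^2\delta_k^2$ and $2\hat D_f\delta_k^2$ yields exactly $C_\varphi\delta_k^2$. The remaining terms are $\tfrac12\mathbb E_k[\delta_{k+1}^2]$ and the alignment term, and taking $\mathbb E_k$ throughout gives \eqref{eq:adam-one-step}.

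\textbf{Main obstacle.} There is no cross term between $\alpha_kd_k$ and $\nabla\varphi$ that needs an orthogonality argument, since $\|X_{k+1}-X_k\|^2$ is split crudely. The step I expect to need most care is applying Lemma~\ref{lem:adam-alignment} correctly. The inner product is with $\hat\nabla f(X_k)$, which is $\mathcal F_k$-measurable. I must make sure the substitution of $\grad f(\bar X_k)$ by $\hat\nabla f(X_k)$ happens before taking the expectation, so that the deterministic error bound is used and only the clean term $\iprod{\hat\nabla f(X_k)}{d_k}$ is averaged.
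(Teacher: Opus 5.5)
Your proof is correct and has the same structure as the paper's. Both use the split $\bar X_{k+1}-\bar X_k=\Delta_k+r_k-r_{k+1}$, with $\Delta_k=X_{k+1}-X_k$ and $r_j=X_j-\bar X_j$. Both replace $\grad f(\bar X_k)$ by $\hat\nabla f(X_k)$ at cost $\tfrac12\hat L^2\delta_k^2+\tfrac12\|\Delta_k\|^2$. Both bound the penalty term by $2\hat D_f\delta_k^2$ via self-adjointness, and apply Lemma~\ref{lem:adam-alignment} only after the deterministic substitutions.

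The one step you handle differently is $\iprod{\grad f(\bar X_k)}{r_{k+1}}$.
\begin{itemize}
\item \textbf{Your route.} You subtract $\grad f(\bar X_{k+1})$, which is orthogonal to $r_{k+1}$, and apply the Lipschitz bound of Lemma~\ref{lem:lip} between the two manifold points. This gives $2\hat L^2\|\Delta_k\|^2+\tfrac12\delta_{k+1}^2$, exactly as in the proof of Lemma~\ref{lem:descent}.
\item \textbf{The paper's route.} It writes $r_{k+1}=\bar X_{k+1}N_{k+1}$ with $N_{k+1}$ symmetric and subtracts $\bar X_kN_{k+1}$, which is normal at $\bar X_k$. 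This leaves $\iprod{\grad f(\bar X_k)}{(\bar X_{k+1}-\bar X_k)N_{k+1}}$. Only $\|\grad f(\bar X_k)\|\le\hat D_f$ is then needed, giving $2\hat D_f^2\|\Delta_k\|^2+\tfrac12\delta_{k+1}^2$.
\end{itemize}
Your version is more generic: it needs only that $r_{k+1}$ is normal at $\bar X_{k+1}$, plus Lipschitz continuity of the Riemannian gradient. The paper's version uses the explicit normal-space structure of the Stiefel manifold and keeps the Lipschitz constant out of this term. The $\max\{\hat L^2,\hat D_f^2\}$ in $C_0$ is what lets either route fit the stated constant.

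There is one bookkeeping slip in your Step 4. The quadratic term contributes $\tfrac L2\cdot 4=2L$, not $4L$. So the total coefficient of $\|\Delta_k\|^2$ is $2\hat L^2+2L+\tfrac12$. With $4L$, your claim that this is at most $C_0/2=2\max\{\hat L^2,\hat D_f^2\}+2L+\tfrac12$ would fail in general. With the correct count it holds, and the rest of your constant collection, giving exactly $C_\varphi$, goes through unchanged.
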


\begin{proof}
Let $\Delta_k:=X_{k+1}-X_k$, $r_k:=X_k-\bar X_k$,
$r_{k+1}:=X_{k+1}-\bar X_{k+1}$, and
$\eta_k:=\operatorname{grad} f(\bar X_k)$. Averaging the quadratic upper bound
in Lemma~\ref{lem:lip} over the component functions gives
\begin{equation}
\label{eq:adam-smooth-step}
    f(\bar X_{k+1})-f(\bar X_k)
    \le
    \iprod{\eta_k}{\bar X_{k+1}-\bar X_k}
    +\frac L2\|\bar X_{k+1}-\bar X_k\|^2 .
\end{equation}
Writing the compact singular value decomposition of $X_k$ as
$U_kS_kV_k^\top$, the polar projection is $\bar X_k=U_kV_k^\top$, and
$r_k=\bar X_kN_k$ with the symmetric matrix
$N_k:=V_k(S_k-I)V_k^\top$. Since
$\eta_k\in T_{\bar X_k}\mathrm{St}(d,r)$, we have
$\iprod{\eta_k}{r_k}=0$. Moreover,
$\bar X_{k+1}-\bar X_k=\Delta_k+r_k-r_{k+1}$, and hence
\[
    \iprod{\eta_k}{\bar X_{k+1}-\bar X_k}
    =
    \iprod{\hat{\nabla} f(X_k)}{\Delta_k}
    +
    \iprod{\eta_k-\hat{\nabla} f(X_k)}{\Delta_k}
    -
    \iprod{\eta_k}{r_{k+1}} .
\]
By Lemma~\ref{lem:lip}, applied with $X=\bar X_k$ and $Y=X_k$ and then
averaged over the component functions,
\[
    \iprod{\eta_k-\hat{\nabla} f(X_k)}{\Delta_k}
    \le
    \frac12\hat L^2\delta_k^2+\frac12\|\Delta_k\|^2 .
\]
Similarly, $r_{k+1}=\bar X_{k+1}N_{k+1}$ for a symmetric matrix
$N_{k+1}$ with $\|N_{k+1}\|=\delta_{k+1}$. Since
$\iprod{\eta_k}{\bar X_kN_{k+1}}=0$, we have
\begin{align*}
    \left|\iprod{\eta_k}{r_{k+1}}\right|
    &=
    \left|
    \iprod{\eta_k}{(\bar X_{k+1}-\bar X_k)N_{k+1}}
    \right|                                                       \\
    &\le
    \|\eta_k\|\,\|\bar X_{k+1}-\bar X_k\|\,\delta_{k+1}.
\end{align*}
Since $X_k,X_{k+1}\in\bar U_{\mathrm{St}(d,r)}(1/8)$, their smallest singular
values are at least $7/8$. The standard perturbation bound for the polar factor
therefore gives
$\|\bar X_{k+1}-\bar X_k\|\le 2\|X_{k+1}-X_k\|=2\|\Delta_k\|$. Together with
$\|\eta_k\|\le \|\nabla f(\bar X_k)\|\le \hat D_f$, this gives
\[
    \left|\iprod{\eta_k}{r_{k+1}}\right|
    \le
    2\hat D_f\|\Delta_k\|\delta_{k+1}
    \le
    2\hat D_f^2\|\Delta_k\|^2+\frac12\delta_{k+1}^2 .
\]
The same projection Lipschitz estimate gives
\[
    \frac L2\|\bar X_{k+1}-\bar X_k\|^2
    \le
    2L\|\Delta_k\|^2 .
\]
Combining these bounds with \eqref{eq:adam-smooth-step} yields
\begin{equation}
\label{eq:adam-descent-preliminary}
    f(\bar X_{k+1})-f(\bar X_k)
    \le
    \iprod{\hat{\nabla} f(X_k)}{\Delta_k}
    +\frac12\delta_{k+1}^2
    +\frac12\hat L^2\delta_k^2
    +\frac{C_0}{2}\|\Delta_k\|^2 .
\end{equation}

From \eqref{eq:adam-update},
$\Delta_k=-\alpha_kd_k-(1/3)\nabla\varphi(X_k)$. Thus
\[
    \iprod{\hat{\nabla} f(X_k)}{\Delta_k}
    =
    -\alpha_k\iprod{\hat{\nabla} f(X_k)}{d_k}
    -
    \frac13\iprod{\hat{\nabla} f(X_k)}{\nabla\varphi(X_k)} .
\]
The linear map $A\mapsto P_{T_{X_k}\mathrm{St}(d,r)}(A)$ is self-adjoint with
respect to the Frobenius inner product. Therefore,
\[
    \iprod{\hat{\nabla} f(X_k)}{\nabla\varphi(X_k)}
    =
    \iprod{\nabla f(X_k)}
          {P_{T_{X_k}\mathrm{St}(d,r)}(\nabla\varphi(X_k))}.
\]
Let $A_k=X_k^\top X_k-I$. Since $A_k$ is symmetric,
\[
    P_{T_{X_k}\mathrm{St}(d,r)}(\nabla\varphi(X_k))
    =
    X_kA_k-X_k\operatorname{sym}(X_k^\top X_kA_k)
    =
    -X_kA_k^2 .
\]
Writing $X_k=U_kS_kV_k^\top$, the proof of Lemma~\ref{lem:rsi} gives
$7/8\le s_i\le 9/8$ for all singular values. Hence
\[
    \|P_{T_{X_k}\mathrm{St}(d,r)}(\nabla\varphi(X_k))\|
    =
    \|U_kS_k(S_k^2-I)^2V_k^\top\|
    \le
    6\delta_k^2 .
\]
Together with the definition of $\hat D_f$, this gives
\[
    -\frac13\iprod{\hat{\nabla} f(X_k)}{\nabla\varphi(X_k)}
    \le
    \frac13\|\nabla f(X_k)\|
    \|P_{T_{X_k}\mathrm{St}(d,r)}(\nabla\varphi(X_k))\|
    \le
    2\hat D_f\delta_k^2 .
\]
Furthermore,
\[
    \|\Delta_k\|^2
    \le
    2\alpha_k^2\|d_k\|^2
    +
    \frac29\|\nabla\varphi(X_k)\|^2
    \le
    2\alpha_k^2\|d_k\|^2+\frac43\delta_k^2 .
\]
Here the last inequality uses the bound
$\|\nabla\varphi(X_k)\|^2\le 6\delta_k^2$ from the proof of
Lemma~\ref{lem:linear-feasi}.
Substituting these estimates into \eqref{eq:adam-descent-preliminary} gives
\[
    f(\bar X_{k+1})-f(\bar X_k)
    \le
    -\alpha_k\iprod{\hat{\nabla} f(X_k)}{d_k}
    +\frac12\delta_{k+1}^2
    +C_\varphi\delta_k^2
    +C_0\alpha_k^2\|d_k\|^2 .
\]
Taking conditional expectation and applying Lemma~\ref{lem:adam-alignment}
proves \eqref{eq:adam-one-step}.
\end{proof}

\begin{thm}[Convergence of the Adam-style retraction-free update]
\label{thm:adam-convergence}
Suppose that Assumption~\ref{assum} and Condition~\ref{cond:adam-stochasticity}
hold. Let $\{X_k\}$ be generated by \eqref{eq:adam-update}. Let
$f_{\min}:=\inf_{X\in\mathrm{St}(d,r)}f(X)>-\infty$. Assume
$\delta_1\le 1/8$ and
\[
    \alpha_k
    \le
    \min\left\{
    \frac{1-q}{8h_+G},
    \frac{a}{2C_\star}
    \right\},
    \qquad
    C_\star:=
    h_+^2\left[2C_0+120\left(C_\varphi+\frac12\right)\right],
\]
where $a$ is defined in Lemma~\ref{lem:adam-alignment}, and $C_0$ and
$C_\varphi$ are defined in Lemma~\ref{lem:adam-one-step}.
Then all iterates stay in $\bar U_{\mathrm{St}(d,r)}(1/8)$, and for every
$K\ge 1$,
\begin{align}
    \sum_{k=1}^K\alpha_k\mathbb E\|\hat{\nabla} f(X_k)\|^2
    &\le
    \frac2a
    \Bigg[
    f(\bar X_1)-f_{\min}
    +6\left(C_\varphi+\frac12\right)\delta_1^2
    +C_\star\sigma^2\sum_{k=1}^K\alpha_k^2
    \notag\\
    &\qquad\qquad
    +C_\beta\sum_{k=1}^K\alpha_k^2\beta_{1,k}^2
    +C_R\sum_{k=1}^K\alpha_k\beta_{1,k}^2
    \Bigg],
    \label{eq:adam-main-stationarity}
\end{align}
where
\[
    C_\beta:=
    h_+^2G^2
    \left[8C_0+480\left(C_\varphi+\frac12\right)\right],
    \qquad
    C_R:=
    \frac{h_+^2G^2}{2(1-\bar\beta)h_-}.
\]
Moreover,
\begin{align}
    \sum_{k=1}^K\mathbb E[\delta_k^2]
    &\le
    6\delta_1^2
    +120h_+^2\sum_{k=1}^{K-1}\alpha_k^2\mathbb E\|\hat{\nabla} f(X_k)\|^2
    +120h_+^2\sigma^2\sum_{k=1}^{K-1}\alpha_k^2
    \notag\\
    &\quad
    +480h_+^2G^2
    \sum_{k=1}^{K-1}\alpha_k^2\beta_{1,k}^2 .
    \label{eq:adam-main-feasibility}
\end{align}
\end{thm}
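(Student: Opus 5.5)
The proof combines the one-step descent of Lemma~\ref{lem:adam-one-step} with the feasibility bounds of Lemma~\ref{lem:adam-feasibility}. The cross terms $\delta_k^2$ are absorbed by the small step size, so no augmented merit function is needed.

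\textbf{Step 1: iterates stay in the tube.} Since $\delta_1\le 1/8$ and $\alpha_k\le (1-q)/(8h_+G)$, Lemma~\ref{lem:adam-feasibility} shows by induction that every $X_k$ lies in $\bar U_{\mathrm{St}(d,r)}(1/8)$. Hence Lemma~\ref{lem:adam-one-step} applies at every $k$. The feasibility bound \eqref{eq:adam-main-feasibility} is then exactly \eqref{eq:adam-feasibility-sum-y}, so nothing more is needed for it.

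\textbf{Step 2: sum the descent inequality.} Take total expectation in \eqref{eq:adam-one-step} and sum over $k=1,\dots,K$. The left side telescopes to $\mathbb E f(\bar X_{K+1})-f(\bar X_1)\ge f_{\min}-f(\bar X_1)$. The feasibility terms sum to
\[
\tfrac12\sum_{k=1}^{K}\mathbb E\delta_{k+1}^2+C_\varphi\sum_{k=1}^K\mathbb E\delta_k^2\le \Bigl(C_\varphi+\tfrac12\Bigr)\sum_{k=1}^{K+1}\mathbb E\delta_k^2 .
\]
Bound this sum with \eqref{eq:adam-feasibility-sum-y} applied at $K+1$. The index then runs only up to $K$, which matches the right-hand side of \eqref{eq:adam-main-stationarity}.

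Next, bound $C_0\alpha_k^2\mathbb E\|d_k\|^2$ using \eqref{eq:adam-direction-second-moment}:
\[
C_0\alpha_k^2\mathbb E\|d_k\|^2\le 2C_0h_+^2\alpha_k^2\bigl(\mathbb E\|\hat\nabla f(X_k)\|^2+\sigma^2\bigr)+8C_0h_+^2G^2\alpha_k^2\beta_{1,k}^2 .
\]
Group all contributions by type.
\begin{itemize}
\item Gradient terms: $-a\alpha_k+C_\star\alpha_k^2$, with $C_\star=h_+^2[2C_0+120(C_\varphi+\tfrac12)]$.
\item Variance terms: $C_\star\sigma^2\sum_k\alpha_k^2$.
\item Momentum terms: $C_\beta\sum_k\alpha_k^2\beta_{1,k}^2$, with $C_\beta=h_+^2G^2[8C_0+480(C_\varphi+\tfrac12)]$.
\item Alignment remainder: $\sum_k\alpha_kR_k=C_R\sum_k\alpha_k\beta_{1,k}^2$.
\item Initial feasibility: $6(C_\varphi+\tfrac12)\delta_1^2$.
\end{itemize}

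\textbf{Step 3: absorb and rearrange.} Because $\alpha_k\le a/(2C_\star)$, we have $a\alpha_k-C_\star\alpha_k^2\ge \tfrac a2\alpha_k$. Rearranging and multiplying by $2/a$ then gives \eqref{eq:adam-main-stationarity}.

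The main obstacle is the bookkeeping in Step 2. The $\delta_{k+1}^2$ term sits on the wrong time index, so it must be shifted and merged into the sum from Lemma~\ref{lem:adam-feasibility}. The gradient-norm terms produced by the feasibility sum must also be collected correctly so that the coefficient matches $C_\star$. The only nontrivial inequality there is $120h_+^2\alpha_k^2\mathbb E\|\hat\nabla f\|^2$ times $(C_\varphi+\tfrac12)$. Once the indices are aligned, the remaining steps are routine.
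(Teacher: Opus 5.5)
Your proposal is correct and follows the paper's proof essentially step for step. You keep the iterates in the tube via Lemma~\ref{lem:adam-feasibility}, sum \eqref{eq:adam-one-step}, merge the shifted $\delta_{k+1}^2$ terms into $(C_\varphi+\tfrac12)\sum_{k=1}^{K+1}\mathbb E[\delta_k^2]$, bound that sum and $\mathbb E\|d_k\|^2$ with Lemmas~\ref{lem:adam-feasibility} and~\ref{lem:adam-direction-bound}, and absorb $C_\star\alpha_k^2$ via $\alpha_k\le a/(2C_\star)$, with coefficient bookkeeping for $C_\star$, $C_\beta$, and $C_R$ that matches the paper's exactly.
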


\begin{proof}
The step-size condition and Lemma~\ref{lem:adam-feasibility} imply that all
iterates remain in $\bar U_{\mathrm{St}(d,r)}(1/8)$. Hence
Lemma~\ref{lem:adam-one-step} applies for $k=1,\ldots,K$. Summing
\eqref{eq:adam-one-step} gives
\begin{align*}
    \mathbb E[f(\bar X_{K+1})]-f(\bar X_1)
    &\le
    -a\sum_{k=1}^K\alpha_k\mathbb E\|\hat{\nabla} f(X_k)\|^2
    +\sum_{k=1}^K\alpha_kR_k                                    \\
    &\quad
    +\frac12\sum_{k=1}^K\mathbb E[\delta_{k+1}^2]
    +C_\varphi\sum_{k=1}^K\mathbb E[\delta_k^2]
    +C_0\sum_{k=1}^K\alpha_k^2\mathbb E\|d_k\|^2 .
\end{align*}
Since $f(\bar X_{K+1})\ge f_{\min}$,
\begin{align}
    a\sum_{k=1}^K\alpha_k\mathbb E\|\hat{\nabla} f(X_k)\|^2
    &\le
    f(\bar X_1)-f_{\min}
    +\sum_{k=1}^K\alpha_kR_k
    +\left(C_\varphi+\frac12\right)
     \sum_{k=1}^{K+1}\mathbb E[\delta_k^2]
    \notag\\
    &\quad
    +C_0\sum_{k=1}^K\alpha_k^2\mathbb E\|d_k\|^2 .
    \label{eq:adam-proof-before-absorb}
\end{align}
By Lemmas~\ref{lem:adam-direction-bound} and \ref{lem:adam-feasibility},
\[
    \mathbb E\|d_k\|^2
    \le
    2h_+^2\mathbb E\|\hat{\nabla} f(X_k)\|^2
    +2h_+^2\sigma^2
    +8h_+^2\beta_{1,k}^2G^2,
\]
and
\begin{align*}
    \sum_{k=1}^{K+1}\mathbb E[\delta_k^2]
    &\le
    6\delta_1^2
    +120h_+^2\sum_{k=1}^{K}\alpha_k^2\mathbb E\|\hat{\nabla} f(X_k)\|^2
    +120h_+^2\sigma^2\sum_{k=1}^{K}\alpha_k^2                  \\
    &\quad
    +480h_+^2G^2\sum_{k=1}^{K}\alpha_k^2\beta_{1,k}^2 .
\end{align*}
Substituting these two estimates into \eqref{eq:adam-proof-before-absorb}
yields
\begin{align*}
    a\sum_{k=1}^K\alpha_k\mathbb E\|\hat{\nabla} f(X_k)\|^2
    &\le
    f(\bar X_1)-f_{\min}
    +6\left(C_\varphi+\frac12\right)\delta_1^2
    +C_\star\sum_{k=1}^K\alpha_k^2\mathbb E\|\hat{\nabla} f(X_k)\|^2             \\
    &\quad
    +C_\star\sigma^2\sum_{k=1}^K\alpha_k^2
    +C_\beta\sum_{k=1}^K\alpha_k^2\beta_{1,k}^2
    +\sum_{k=1}^K\alpha_kR_k .
\end{align*}
Since $\alpha_k\le a/(2C_\star)$, we have
$C_\star\alpha_k^2\le (a/2)\alpha_k$. Moving this term to the left-hand side
gives
\[
    \frac a2\sum_{k=1}^K\alpha_k\mathbb E\|\hat{\nabla} f(X_k)\|^2
    \le
    f(\bar X_1)-f_{\min}
    +6\left(C_\varphi+\frac12\right)\delta_1^2
    +C_\star\sigma^2\sum_{k=1}^K\alpha_k^2
    +C_\beta\sum_{k=1}^K\alpha_k^2\beta_{1,k}^2
    +\sum_{k=1}^K\alpha_kR_k .
\]
Finally, $R_k=C_R\beta_{1,k}^2$, which proves
\eqref{eq:adam-main-stationarity}. The feasibility estimate
\eqref{eq:adam-main-feasibility} is exactly
\eqref{eq:adam-feasibility-sum-y}.
\end{proof}

\begin{cor}[Iteration complexity]
\label{cor:adam-complexity}
Suppose the assumptions of Theorem~\ref{thm:adam-convergence} hold.

\emph{(i)} If $\alpha_k\equiv \alpha$ and
\[
    \alpha
    \le
    \min\left\{
    1,
    \frac{1-q}{8h_+G},
    \frac{a}{2C_\star}
    \right\},
\]
then
\[
    \frac1K\sum_{k=1}^K\mathbb E\|\hat{\nabla} f(X_k)\|^2
    =
    O\left(
    \frac1{\alpha K}
    +\alpha\sigma^2
    +\frac1K\sum_{k=1}^K\beta_{1,k}^2
    \right).
\]
If, in addition, $\beta_{1,k}=\beta_1\lambda^{k-1}$ with
$\lambda\in(0,1)$, then
\[
    \frac1K\sum_{k=1}^K\mathbb E\|\hat{\nabla} f(X_k)\|^2
    =
    O\left(
    \frac1{\alpha K}
    +\alpha\sigma^2
    \right),
\]
and
\[
    \frac1K\sum_{k=1}^K\mathbb E[\delta_k^2]
    =
    O\left(
    \frac1K+\alpha^2\sigma^2
    \right).
\]
In particular, in the full-gradient case $\sigma=0$ and with geometrically
decaying $\beta_{1,k}$,
\[
    \frac1K\sum_{k=1}^K
    \mathbb E\bigl[\|\hat{\nabla} f(X_k)\|^2+\delta_k^2\bigr]
    =
    O\left(\frac1K\right).
\]

\emph{(ii)} If $\alpha_k=\alpha_0/\sqrt{k}$ with
\[
    \alpha_0
    \le
    \min\left\{
    \frac{1-q}{8h_+G},
    \frac{a}{2C_\star}
    \right\},
\]
and if $\beta_{1,k}=\beta_1\lambda^{k-1}$ with $\lambda\in(0,1)$, then
\[
    \min_{1\le k\le K}\mathbb E\|\hat{\nabla} f(X_k)\|^2
    =
    O\left(\frac{\log K}{\sqrt K}\right),
    \qquad
    \frac1K\sum_{k=1}^K\mathbb E[\delta_k^2]
    =
    O\left(\frac{\log K}{K}\right).
\]
Consequently,
\[
    \min_{1\le k\le K}
    \mathbb E\bigl[\|\hat{\nabla} f(X_k)\|^2+\delta_k^2\bigr]
    =
    O\left(\frac{\log K}{\sqrt K}\right).
\]
\end{cor}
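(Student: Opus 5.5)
The proof specializes the two bounds of Theorem~\ref{thm:adam-convergence}, \eqref{eq:adam-main-stationarity} and \eqref{eq:adam-main-feasibility}, to the two step-size regimes. The only additional inputs are elementary sums: $\sum_{k\le K}\alpha_k$, $\sum_{k\le K}\alpha_k^2$, and the sums weighted by $\beta_{1,k}^2$. With $\beta_{1,k}=\beta_1\lambda^{k-1}$, both $\sum_k\beta_{1,k}^2$ and $\sum_k\alpha_k\beta_{1,k}^2$ are bounded by $\beta_1^2\max_k\alpha_k/(1-\lambda^2)=O(1)$. Hence every $\beta$-dependent term in the theorem is a constant independent of $K$. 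This is the only place the geometric decay enters.

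\emph{Case (i).} For constant $\alpha$, divide \eqref{eq:adam-main-stationarity} by $\alpha K$. The constant terms give $O(1/(\alpha K))$, and the term $C_\star\sigma^2K\alpha^2$ gives $O(\alpha\sigma^2)$. The term $C_\beta\alpha^2\sum\beta_{1,k}^2$ divided by $\alpha K$ gives $O(\alpha\cdot\frac1K\sum\beta_{1,k}^2)$, which is at most $O(\frac1K\sum\beta_{1,k}^2)$ since $\alpha\le1$. The term $C_R\alpha\sum\beta_{1,k}^2$ gives $O(\frac1K\sum\beta_{1,k}^2)$. Together these yield the general bound, and geometric decay reduces the last term to $O(1/K)$, which is absorbed into $O(1/(\alpha K))$.

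For feasibility, insert the stationarity bound into \eqref{eq:adam-main-feasibility}. The term $120h_+^2\alpha^2\sum\mathbb E\|\hat\nabla f\|^2$ is bounded by $\alpha^2 K\cdot O(1/(\alpha K)+\alpha\sigma^2)=O(\alpha+\alpha^3K\sigma^2)$. Dividing by $K$ gives $O(1/K+\alpha^2\sigma^2)$; the noise term $120h_+^2\sigma^2K\alpha^2$ gives the same order, and the $\beta$-term is $O(1/K)$. Setting $\sigma=0$ and adding the two estimates gives the $O(1/K)$ full-gradient bound.

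\emph{Case (ii).} For $\alpha_k=\alpha_0/\sqrt k$, use $\sum_{k\le K}\alpha_k\ge\alpha_0\sqrt K$ and $\sum_{k\le K}\alpha_k^2=\alpha_0^2\sum 1/k\le\alpha_0^2(1+\log K)$. The right-hand side of \eqref{eq:adam-main-stationarity} is then $O(\log K)$. Lower-bounding the left-hand side by $(\min_k\mathbb E\|\hat\nabla f(X_k)\|^2)\sum_k\alpha_k$ gives the $O(\log K/\sqrt K)$ rate.

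For feasibility, I would avoid dividing a $\sum\alpha_k$-weighted bound by $K$, since that loses a factor of $\sqrt K$. Instead, note that $\alpha_k^2\le\alpha_0\alpha_k$, so $\sum_k\alpha_k^2\mathbb E\|\hat\nabla f\|^2\le\alpha_0\sum_k\alpha_k\mathbb E\|\hat\nabla f\|^2=O(\log K)$ by the stationarity bound. Every term of \eqref{eq:adam-main-feasibility} is therefore $O(\log K)$, and dividing by $K$ gives $O(\log K/K)$.

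The combined statement follows from $\min_k(a_k+b_k)\le\min_k a_k+\frac1K\sum_k b_k$ applied with $a_k=\mathbb E\|\hat\nabla f(X_k)\|^2$ and $b_k=\mathbb E[\delta_k^2]$. Strictly this needs the weighted form: $\min_k(a_k+b_k)\le\sum_k\alpha_k(a_k+b_k)/\sum_k\alpha_k$, together with $\sum_k\alpha_kb_k\le\alpha_0\sum_kb_k=O(\log K)$. The main obstacle, though mild, is this feasibility bound in case (ii), where $\alpha_k^2\le\alpha_0\alpha_k$ is needed to obtain the faster $\log K/K$ rate rather than $\log K/\sqrt K$.
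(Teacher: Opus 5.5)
Your proposal is correct and follows essentially the same route as the paper: divide \eqref{eq:adam-main-stationarity} by $\alpha K$ (resp.\ by $\sum_k\alpha_k$), use $\alpha\le 1$ and the summability of the geometric $\beta_{1,k}^2$, and use $\alpha_k^2\le\alpha_0\alpha_k$ both to get $\sum_k\mathbb E[\delta_k^2]=O(\log K)$ and to combine the two estimates through the $\alpha_k$-weighted average. The one thing to drop is your first, unweighted inequality $\min_k(a_k+b_k)\le\min_k a_k+\frac1K\sum_k b_k$, which is false in general (take $K=2$, $a=(0,M)$, $b=(c,0)$ with $M\ge c>0$); the weighted version you give right after it is the correct argument and is exactly what the paper uses.
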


\begin{proof}
For constant $\alpha_k\equiv\alpha$, dividing
\eqref{eq:adam-main-stationarity} by $\alpha K$ gives
\[
    \frac1K\sum_{k=1}^K\mathbb E\|\hat{\nabla} f(X_k)\|^2
    =
    O\left(
    \frac1{\alpha K}
    +\alpha\sigma^2
    +\frac{\alpha}{K}\sum_{k=1}^K\beta_{1,k}^2
    +\frac1K\sum_{k=1}^K\beta_{1,k}^2
    \right).
\]
Since $\alpha\le 1$, this implies the stated stationarity estimate. If
$\beta_{1,k}=\beta_1\lambda^{k-1}$, then
$\sum_{k=1}^K\beta_{1,k}^2=O(1)$, and hence the averaged stationarity estimate
reduces to
\[
    \frac1K\sum_{k=1}^K\mathbb E\|\hat{\nabla} f(X_k)\|^2
    =
    O\left(
    \frac1{\alpha K}
    +\alpha\sigma^2
    \right).
\]
Substituting this bound into \eqref{eq:adam-main-feasibility} gives
\[
    \frac1K\sum_{k=1}^K\mathbb E[\delta_k^2]
    =
    O\left(
    \frac1K
    +\alpha^2\cdot\frac1K\sum_{k=1}^K\mathbb E\|\hat{\nabla} f(X_k)\|^2
    +\alpha^2\sigma^2
    +\frac{\alpha^2}{K}\sum_{k=1}^K\beta_{1,k}^2
    \right).
\]
Using $\alpha\le 1$ and the geometrically decaying momentum sequence, this
yields
\[
    \frac1K\sum_{k=1}^K\mathbb E[\delta_k^2]
    =
    O\left(
    \frac1K+\alpha^2\sigma^2
    \right).
\]
The full-gradient statement follows by setting $\sigma=0$.

For $\alpha_k=\alpha_0/\sqrt{k}$, we have
$\sum_{k=1}^K\alpha_k=\Omega(\sqrt K)$ and
$\sum_{k=1}^K\alpha_k^2=O(\log K)$. If $\beta_{1,k}$ decays geometrically, then
\[
    \sum_{k=1}^K\alpha_k^2\beta_{1,k}^2=O(1),
    \qquad
    \sum_{k=1}^K\alpha_k\beta_{1,k}^2=O(1).
\]
Dividing \eqref{eq:adam-main-stationarity} by
$\sum_{k=1}^K\alpha_k$ gives
\[
    \min_{1\le k\le K}\mathbb E\|\hat{\nabla} f(X_k)\|^2
    \le
    \frac{
    \sum_{k=1}^K\alpha_k\mathbb E\|\hat{\nabla} f(X_k)\|^2
    }{
    \sum_{k=1}^K\alpha_k
    }
    =
    O\left(\frac{\log K}{\sqrt K}\right).
\]
Moreover, since $\alpha_k^2\le \alpha_0\alpha_k$, substituting the same step
sizes into \eqref{eq:adam-main-feasibility} yields
\[
    \sum_{k=1}^K\mathbb E[\delta_k^2]
    =
    O(\log K),
\]
and hence
\[
    \frac1K\sum_{k=1}^K\mathbb E[\delta_k^2]
    =
    O\left(\frac{\log K}{K}\right).
\]
Finally, since $\alpha_k\le \alpha_0$,
\[
    \sum_{k=1}^K\alpha_k\mathbb E[\delta_k^2]
    \le
    \alpha_0\sum_{k=1}^K\mathbb E[\delta_k^2]
    =
    O(\log K).
\]
Combining this estimate with the weighted stationarity estimate and dividing
by $\sum_{k=1}^K\alpha_k=\Omega(\sqrt K)$ gives
\[
    \min_{1\le k\le K}
    \mathbb E\bigl[\|\hat{\nabla} f(X_k)\|^2+\delta_k^2\bigr]
    =
    O\left(\frac{\log K}{\sqrt K}\right).
\]
\end{proof}

\begin{rmk}[Bias correction and standard Adam notation]
\label{rem:adam-bias-correction}
If bias-corrected moments are used while keeping the preconditioner delayed,
for example by replacing $m_k$ and $\bar v_{k-1}$ with their corresponding
bias-corrected versions in the definition of $d_k$, then the same proof applies
after replacing $h_+$, $h_-$, and $G$ by enlarged constants. Thus bias
correction affects only constants and not the convergence rates. The
non-delayed version that uses the current adaptive preconditioner depending on
$\tilde g_k$ is not covered by the conditional-independence argument above.
\end{rmk}

\begin{rmk}[Constant first-momentum parameter]
\label{rem:adam-constant-beta}
If $\beta_{1,k}\equiv \beta_1$ is kept constant, then the same proof yields,
for constant step size,
\[
    \frac1K\sum_{k=1}^K\mathbb E\|\hat{\nabla} f(X_k)\|^2
    =
    O\left(
    \frac1{\alpha K}
    +\alpha\sigma^2
    +\beta_1^2
    \right).
\]
Thus, under this proof strategy, a non-vanishing first-momentum parameter leads
to convergence to a neighborhood. To obtain convergence to a stationary point,
one may use a decaying momentum sequence, for example
$\beta_{1,k}=\beta_1\lambda^{k-1}$ with $\lambda\in(0,1)$.
\end{rmk}

\end{document}